\newcommand{\BlackBox}{\rule{1.5ex}{1.5ex}}  
\newenvironment{proof}{\par\noindent{\bf Proof\ }}{\hfill\BlackBox\\[2mm]}
\newenvironment{proof2}{\par\noindent{\bf Proof of Theorem 2\ }}{\hfill\BlackBox\\[2mm]}
\newenvironment{proof4}{\par\noindent{\bf Proof of Theorem 4\ }}{\hfill\BlackBox\\[2mm]}
\newtheorem{theorem}{Theorem}
\newtheorem{lemma}[theorem]{Lemma}
\newtheorem{corollary}[theorem]{Corollary}
\newcommand{\subscript}[2]{$#1#2$}
\begin{document}

\title{A Spectral Algorithm for Inference in Hidden semi-Markov Models}

\author{Igor Melnyk \\
Dept of Computer Science \& Engineering\\
University of Minnesota, Twin Cities\\
melnyk@cs.umn.edu
\and
Arindam Banerjee\\
Dept of Computer Science \& Engineering\\
University of Minnesota, Twin Cities\\
banerjee@cs.umn.edu}

\providecommand{\keywords}[1]{\textbf{Keywords:} #1}

\maketitle

\begin{abstract} 

Hidden semi-Markov models (HSMMs) are latent variable models which allow latent state persistence and can be viewed as a generalization of the popular hidden Markov models (HMMs).
In this paper, we introduce a novel spectral algorithm to perform inference in HSMMs. Unlike expectation maximization (EM), our approach correctly estimates the probability of given observation sequence based on a set of training sequences.
Our approach is based on estimating moments from the sample, whose number of dimensions depends only logarithmically on the maximum length of the hidden state persistence. Moreover, the algorithm requires only a few matrix inversions and is therefore computationally efficient. Empirical evaluations on synthetic and real data demonstrate the advantage of the algorithm over EM in terms of speed and accuracy, especially for large datasets.
\end{abstract} 

\section{Introduction}
\label{sec:introduction}
Hidden semi-Markov models (HSMMs) are discrete latent variable models which allow temporal persistence of latent states, and can be viewed as a generalization of the popular hidden Markov models (HMMs)~\cite{chiappa14,murphy02, yu10}. In HSMMs, the stochastic model for the unobservable process is defined by a semi-Markov chain: latent state at the next time step is determined by the current latent state as well as time elapsed since the entry into the current state. 
The ability to flexibly model such latent state persistence turns out to be useful in many application areas, including anomaly detection \cite{tan08, xie09}, activity recognition \cite{van10}, and speech synthesis \cite{ze07}. Such state persistence is in contrast to HMMs, which use a Markov chain over latent state transitions and hence have an implicit geometric distribution for the state duration \cite{rabiner89}.

Given a set of training sequences, one can formulate two distinct but related problems: \emph{learning}, i.e., estimating model parameters and \emph{inference}, i.e., computing the probability of an observed and/or latent variable sequence. The methods proposed for learning HSMMs usually follow the initial idea due to Rabiner \cite{rabiner89} based on the modifications of the Baum-Welch algorithm \cite{baum66}, which are all variants of the expectation maximization (EM) framework, presented in \cite{delr77}. Once the parameters are estimated, we can then perform inference using, e.g., the forward-backward algorithm of \cite{yu03}. However, since EM, in general, has no guarantees in estimating the parameters correctly and can suffer from slow convergence, such methods can be inefficient and/or inconsistent.

Approaches based on hierarchical Dirichlet processes have also been proposed for HMMs \cite{fsjw08} and HSMMs \cite{jowi13}, which are the nonparametric Bayesian models avoiding the need to specify the size of the latent space and learn it from data. However, in practice, the accuracy of such algorithms is usually sensitive to initialization and may suffer from slow convergence.

In recent years, there has been an increased interest in spectral algorithms, which provide computationally efficient, local-minimum-free, provably consistent inference and/or parameter estimation algorithms for latent variable models. For example, \cite{anima13, aghk12b, anandkumar13} have proposed spectral methods for learning the parameters of a wide class of tree-structured latent graphical models, including Gaussian mixture models, topic models, and latent Dirichlet allocation. The main idea is based on a tensor decomposition of certain low order moments, computable directly from data, in order to extract the model parameters. 

In many problems, however, the end goal is not the recovery of model parameters but the statistical inference, in which case the parameter estimation step is unnecessary. 
In this regard, \cite{hskz12} have proposed an efficient spectral algorithm for inference in HMMs. It is based on the idea of expressing the probability of the observed sequence in a representation, which does not depend on the model parameters and uses easily computable second and third order sample moments to perform inference. However, their approach was specific to HMMs and not easily extendable to other latent variable graphical models. \cite{pasx11} then introduced a spectral algorithm to perform inference in latent tree graphical models with arbitrary topology, and later in \cite{psit12} a general spectral inference framework for latent junction trees. 

In this paper, we utilize the framework of \cite{psit12} and introduce a novel spectral algorithm for inference in HSMMs. Since we address a more specific problem than \cite{psit12}, our results shed  more light into the details of the spectral framework for HSMMs, allow for a sharper analysis, and yield a significantly more efficient algorithm than the general framework in \cite{psit12}. There are two main technical contributions in this work:
\begin{itemize}
\item By exploiting the {\em homogeneity} of HSMMs we make our algorithm more efficient and accurate than an algorithm, which directly follows from the recipe in \cite{psit12} for general graphs. In particular, our approach ensures that the number of matrix multiplications and inverses needed to estimate the probability of an observed sequence is fixed and independent of sequence length. 
%
\item We show that the number of dimensions in the sample moments (represented as a multidimensional matrix or a tensor) in estimated observable representation depends only {\em logarithmically} on the maximum length of latent state persistence. 
\end{itemize}
In experiments, comparing our method with EM on both synthetic and real datasets, two observations stand out: (i) the spectral method gets similar or better performance than EM as the number of samples increases, and (ii) the spectral method is orders of magnitude faster than EM for the datasets we consider.

Few remarks are in order about the proposed algorithm. Note that our method does not estimate model parameters explicitly but rather learns alternative representation to perform inference on observable variables. Moreover, our formulation cannot be directly used to infer hidden states, although methods such as in \cite{mossel05} can be potentially utilized to recover original HSMM parameters from the learned representation.

The rest of the paper is organized as follows: We introduce notation in Section~\ref{sec:notation}. In Section~\ref{sec:problem}, we present HSMM inference from a tensor product perspective and in Section~\ref{sec:tensorForm} introduce the spectral algorithm for inference. In Section~\ref{sec:observationVars}, we present a careful technical analysis to establish logarithmic dependence of the number of modes in the tensor on maximum latent state persistence. We present experimental results in Section~\ref{sec:experiments} and conclude in Section~\ref{sec:conclusion}.

\section{Notation and Preliminaries}
\label{sec:notation}
In this section, we cover basic facts about tensor algebra, a detailed tutorial on tensors can be found in  \cite{kiers00} or \cite{koba09}. A tensor is defined as a multidimensional array of data, which will be denoted by boldface Euler script letters, e.g., $\underset{m_1,\ldots,m_N}{\bm{\mathscr{X}}} \in \mathbb{R}^{I_{m_1}\times \cdots \times I_{m_N}}$, which is $N$-mode tensor of dimensions $I_{m_1}\times \cdots \times I_{m_N}$. A specific mode is denoted by the subscript variable $m_i$, whose dimension is $I_{m_i}$.

Any tensor can be matrisized (or flattened) into a matrix. This mapping can be done in multiple ways, the only requirement is that the number of elements is preserved and the mapping is one-to-one. If we split the modes into two disjoint sets, one corresponding to rows and the other to columns, e.g., $\{m_1,\ldots,m_N\} = \{p_1,\ldots,p_K\}\cup\{q_1,\ldots,q_L\}$, then a matrisization of $\bm{\mathscr{X}}$ is denoted by a corresponding capital boldface letter, e.g.,  $\underset{p_1,\ldots,p_K q_1,\ldots,q_L}{\mathbf{X}} \in \mathbb{R}^{I_{p_1}\cdots I_{p_K}\times I_{q_1}\cdots I_{q_L}}$.

\textbf{Tensor Multiplication} Multiplication of two tensors is performed along specific modes. For this, we flatten each tensor to a matrix, perform the usual matrix multiplication and  transform the result back to a tensor. The multiplication is denoted by a symbol $\times$ with an optional subscript representing the modes along which the operation is performed, e.g.,: 
\begin{align*}
\underset{p_1,\ldots,p_K,r_1,\ldots,r_M}{\bm{\mathscr{Z}}} = \hspace{-3pt} \underset{p_1,\ldots,p_K,q_1,\ldots,q_L}{\bm{\mathscr{X}}}\hspace{-3pt}\times_{q_1,\ldots,q_L}\hspace{-1pt}\underset{q_1,\ldots,q_L,r_1,\ldots,r_M}{\bm{\mathscr{Y}}},
\end{align*}
\noindent where $\underset{q_1,\ldots,q_L,r_1,\ldots,r_M}{\bm{\mathscr{Y}}} \in \mathbb{R}^{I_{q_1}\times\cdots \times I_{q_L}\times I_{r_1}\times \cdots \times I_{r_M}}$ and the resulting tensor on the left hand side is of the form $ \underset{p_1,\ldots,p_K,r_1,\ldots,r_M}{\bm{\mathscr{Z}}} \in \mathbb{R}^{I_{p_1}\times\cdots \times I_{p_K}\times I_{r_1}\times \cdots \times I_{r_M}}$. Observe that in the above, we can flatten the tensors $\bm{\mathscr{X}}$ and $\bm{\mathscr{Y}}$ in multiple different ways as long as the matrix multiplication remains valid. For example, we could assign the multiplication modes in both tensors to columns, in this case the matrix product becomes $\mathbf{Z} = \mathbf{X}\mathbf{Y}^{T}$. Alternatively, the tensor  $\bm{\mathscr{Y}}$ could be matrisized with the multiplication modes corresponding to rows, resulting in the product $\mathbf{Z} = \mathbf{X}\mathbf{Y}$.

An important fact about tensor multiplication is that in a series of tensor multiplications the order is irrelevant as long as the multiplication is performed along the matching modes, e.g, 

\begin{align*}
\underset{sp}{\bm{\mathscr{X}}}\times_{s}\left(\underset{tr}{\bm{\mathscr{Y}}}\times_{r}\underset{rs}{\bm{\mathscr{Z}}}\right) = \left(\underset{sp}{\bm{\mathscr{X}}}\times_{s}\underset{rs}{\bm{\mathscr{Z}}}\right)\times_{r}\underset{tr}{\bm{\mathscr{Y}}}.
\end{align*}
\noindent If we let the matrisized tensors to be $\mathbf{X} \in \mathbb{R}^{I_{p}\times I_{s}}$, $\mathbf{Y} \in \mathbb{R}^{I_{t}\times I_{r}}$ and $\mathbf{Z} \in \mathbb{R}^{I_{r}\times I_{s}}$, then the above can be verified to be true since
\begin{align*}
\mathbf{X}\left(\mathbf{Y}\mathbf{Z}\right) = \left(\mathbf{X}\mathbf{Z}^{T}\right)\mathbf{Y}^T.
\end{align*}

Note that to reduce clutter, in many places we will drop the multiplication subscripts. The implied modes of multiplication can then be inferred from the subscripts of the tensors. Specifically, when two tensors are multiplied, we first check their modes and then multiply along the modes which are common to both of them. For example, in the product $\underset{pqr}{\bm{\mathscr{X}}}\times\underset{qsr}{\bm{\mathscr{Y}}}$, the implied multiplication is performed along the common modes, i.e., $q$ and $r$.

\textbf{Tensor Inversion} We also discuss the operation of tensor inversion. Tensor inverse $\bm{\mathscr{X}}^{-1}$ is always defined with respect to a certain subset of modes and can be written as follows:
\begin{align*}
\underset{p_1,\ldots,p_K,q_1,\ldots,q_L}{\bm{\mathscr{X}}}\hspace{-3pt}\hspace{-2pt}\times_{q_1,\ldots,q_L}\hspace{-1pt}\underset{p_1,\ldots,p_K,q_1,\ldots,q_L}{\bm{\mathscr{X}}^{-1}} \hspace{-2pt}=\hspace{-2pt} \underset{p_1,\ldots,p_K,p_1,\ldots,p_K}{\bm{\mathscr{I}}},
\end{align*}
\noindent where the inversion is performed along the modes $q_1,\ldots,q_L$, and $\underset{p_1,\ldots,p_K,p_1,\ldots,p_K}{\bm{\mathscr{I}}}$ denotes an identity tensor, whose elements are everywhere zero, except $\bm{\mathscr{I}}(i_1,\ldots,i_K,i_1,\ldots,i_K)=1$. To perform inversion, we first convert tensor to a matrix, i.e., matrisize tensor. If the modes to be inverted along are associated with columns of the matrix, we compute the right matrix inverse, so that these modes get eliminated after the product. Otherwise, if those modes associated with rows, we compute left matrix inverse. Obviously, for the full rank square matrices both choices would produce the same result.  For example, in the above equation the matrisized tensor might be of the form $\underset{p_1,\ldots,p_K q_1,\ldots,q_L}{\mathbf{X}} \in \mathbb{R}^{I_{p_1}\cdots I_{p_K}\times I_{q_1}\cdots I_{q_L}}$, therefore, we would compute the right matrix inverse so that the modes $q_1,\ldots,q_L$ are eliminated. If the matrisized $\mathbf{X}$ has full row rank, then the inverse can be computed, otherwise we could only compute its pseudo-inverse. Tensorizing the matrix $\mathbf{X}^{-1}$ gives us the desired tensor inverse.

\textbf{Mode Duplication} Observe that in the above, the tensor $\underset{p_1,\ldots,p_K,p_1,\ldots,p_K}{\bm{\mathscr{I}}}$  has duplicate modes. In general, if a tensor has duplicate modes, the corresponding sub-tensor can be interpreted as a hyper-diagonal. For example, if for a tensor $\underset{pq}{\bm{\mathscr{X}}}$ we construct a tensor $\underset{pppq}{\overline{\bm{\mathscr{X}}}}$, which has its mode $p$ duplicated three times, then for a fixed index $i$, the sub-tensor $\overline{\bm{\mathscr{X}}}(:,:,:,i)$ is a hypercube with elements $\bm{\mathscr{X}}(:,i)$ on the diagonal. 

Mode duplication enables us to multiply several tensors along the same mode. For example, if we need to multiply tensors $\underset{sp}{\bm{\mathscr{X}}}$, $\underset{pr}{\bm{\mathscr{Y}}}$ and $\underset{tp}{\bm{\mathscr{Z}}}$ along the mode $p$, then a simple product of the form 
\begin{align*}
\underset{sp}{\bm{\mathscr{X}}}\times_{p}\underset{pr}{\bm{\mathscr{Y}}}\times_{p}\underset{tp}{\bm{\mathscr{Z}}}
\end{align*}
\noindent  cannot be done since any product of two tensors along the mode $p$ would eliminate it, preventing any further multiplications. In general, if there are $N$ multiplications along the specific mode, then there are must be cumulatively $2N$ number of times such a mode is encountered in the participating tensors. In our example, we might duplicate the mode $p$ in, say, tensor ${\bm{\mathscr{Z}}}$ to have 
\begin{align*}
\underset{sp}{\bm{\mathscr{X}}}\times_{p}\left(\underset{pr}{\bm{\mathscr{Y}}}\times_{p}\underset{tpp}{\bm{\mathscr{Z}}}\right),
\end{align*}
so that there are two multiplications over mode $p$ and cumulatively there are four times such a mode is encountered in the participating tensors.
To reduce clutter, we sometimes do not explicitly show the duplicated variables in the subscripts; the implied mode repetition will be evident from the context or explicitly stated in cases when there is a confusion. For example, the identity tensor will often be written as $\underset{p_1,\ldots,p_K}{\bm{\mathscr{I}}}$.

\section{Problem Formulation}
\label{sec:problem}
\begin{figure}[!t]
\centering
   \includegraphics[width=0.35\textwidth]{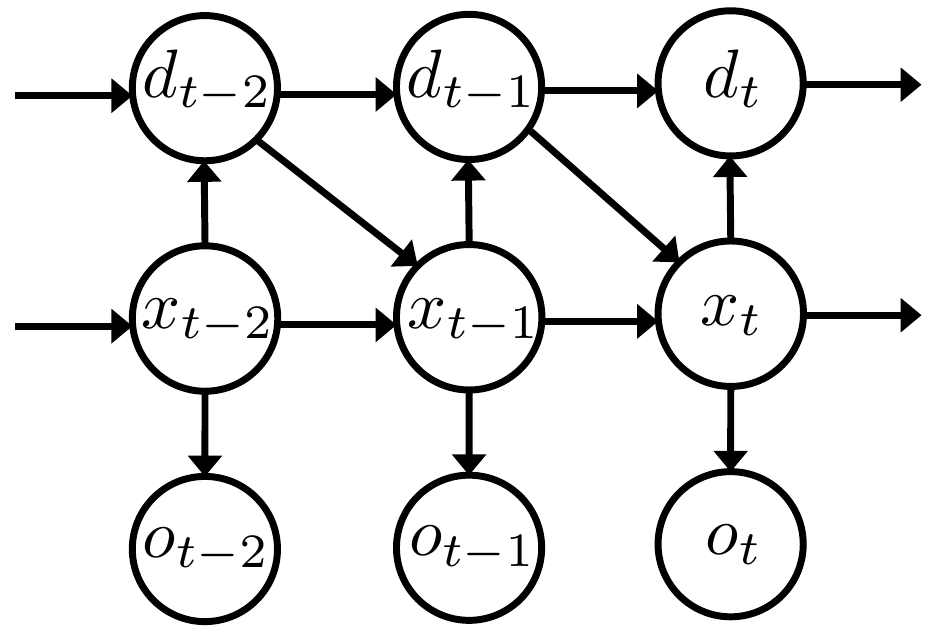}
   \caption{Hidden Semi-Markov Model (HSMM). Here $o_t$ denotes an observation at time step $t$, $x_t$ is a latent state and $d_t$ is the length of state persistence at time step $t$. See text for more details.}
   \label{fig:hsmm}
\end{figure}

In this paper, we consider the problem of inference in HSMM\footnote{Note: to reduce clutter, in the main part of the paper we only consider the part of the model for a general time stamp $t$ and ignore the initial and final steps of the model, whose representation differs slightly from what is shown in Figure~\ref{fig:hsmm}. The details for these parts are presented separately in Appendix \ref{sec:appendix}.} (see Figure~\ref{fig:hsmm}).
Unlike the popular HMM, which has a geometric probability for state persistence, i.e., the probability of persisting in the same state over $t$ time steps decreases as $p^t$, where $p$ is the probability of persistence for one time step, HSMM explicitly models state persistence. 
From a graphical model perspective, HSMM has three sets of variables: the observations $o_t \in \{1,\ldots,n_o\}$, the latent states $x_t \in \{1,\ldots,n_x\}$, and another latent variable $d_t \in \{1,\ldots,n_d\}$ which determines the length of state persistence. HSMM is specified by three conditional probability tables (CPTs): the observation/emission probability $p(o_t|x_t)$ and the state transition and the duration probabilities given by:
\begin{align}
p(d_t|x_t, d_{t-1}) &= \begin{cases} p(d_t|x_t) &\mbox{if } d_{t-1}=1 \\
\delta(d_t, d_{t-1}-1) & \mbox{if } d_{t-1} > 1 \end{cases} \label{durProb}\\
p(x_t|x_{t-1},d_{t-1}) &= \begin{cases} p(x_t|x_{t-1}) & ~~~~~\mbox{if } d_{t-1}=1 \\
\delta(x_t, x_{t-1}) & ~~~~~\mbox{if } d_{t-1} > 1 \end{cases} \label{transProb},
\end{align}
where $\delta(a,b)$ denotes the Dirac delta function: $\delta(a,b)=1$ if $a=b$ and 0 otherwise. In addition, one can consider suitable prior probabilities $p(x_0)$ and $p(d_0)$. In essence, $d_t$ works as a down counter for state persistence. When $d_{t-1} > 1$, the model remains in the same state $x_t = x_{t-1}$, while when $d_{t-1} = 1$, one samples a new state $x_t$ and the new duration in that state $d_t|x_t$. For our analysis, we assume $p(d_t|x_t, d_{t-1}=1)$ to be a discrete multinomial distribution over $\{1,\ldots,n_d\}$ where $n_d$ denotes the largest duration of state persistence.

The considered inference problem can be posed as follows: given a set of sequences $\{\mathbf{S}^1,\ldots,\mathbf{S}^N\}$ drawn independently from the HSMM model, where each sequence is  $\mathbf{S}^{i} = \{o_1^i,\ldots,o_{T_i}^i\}, i=1,\ldots,N$, our goal is to compute the probability $p(\mathbf{S}^{test})$ of any given test sequence $\mathbf{S}^{test} = (o_1^{test},\ldots, o_T^{test})$. A traditional approach would be to estimate the CPTs using the EM algorithm, and use the estimates to compute $p(\mathbf{S}^{test})$. However, the EM algorithm is not guaranteed to estimate the parameters optimally, and hence the computation of $p(\mathbf{S}^{test})$ may be incorrect. The focus of our work is to develop a provably correct spectral algorithm for computing $p(\mathbf{S}^{test})$.

\subsection{HSMM in Tensor Notations}
We start by considering the matrix forms of the HSMM parameters and writing the computations in tensor notation, as introduced in Section~\ref{sec:notation}. Specifically, $p(d_t|x_t,d_{t-1}=1)$ is denoted as $D \in \mathbb{R}^{n_d\times n_x}$, $p(x_t|x_{t-1},d_{t-1}=1)$ is denoted as $X \in \mathbb{R}^{n_x\times n_x}$, and $p(o_t|x_t)$ as $O \in \mathbb{R}^{n_o\times n_x}$. We make the following assumptions on the HSMM parameters:
\\

\noindent {\bf{Assumptions}}

\begin{enumerate}[topsep=0pt,itemsep=-1ex,partopsep=1ex,parsep=1ex,label=\subscript{A}{\arabic*}.]
\item $\mathcal{X}$ is full rank and has non-zero probability of visiting any state from any other state.
\item $D$ has a non-zero probability of any duration in any state.
\item $O$ is full column rank and, as a consequence, $n_x \leq n_o$.
\end{enumerate} 

We provide some comments on the above assumptions. We note that the assumption $A1$ can be relaxed to allow zero entries (while still ensuring full rank structure) and thus prevent certain states to be directly reachable from other states; however, this would require more involved analysis based on the mixing time of the corresponding Markov chain~\cite{lepw09}, and is not pursued in this work. Also, observe that the assumption of $n_x \leq n_o$ is needed in order to ensure that hidden states are identifiable, although recent work is showing that such an assumption can be relaxed in some cases \cite{ahjk13}. Intuitively, it means that the number of different observations coming from each state is large enough, so that one hidden state can be differentiated from the other. 

To express the joint probability $p(o_1,\ldots,o_T)$ for any possible observation sequence in tensor form, we utilize the junction tree algorithm \cite{barb12}. The resulting tree is shown in Figure \ref{fig:jthsmm} and it corresponds to the graphical model of HSMM in Figure \ref{fig:hsmm}. Recall, that the junction tree is a tree-structured representation of an arbitrary graph enabling efficient inference. It can be constructed by forming a maximal spanning tree from the cliques of the graph. The cliques then represent vertices in the junction tree and the edges connecting the vertices are labeled with variables common to two cliques it connects. The set of variables on the edges are referred to as separators. For example, in Figure \ref{fig:jthsmm} the cliques $\mathbb{X}_t$ and $\mathbb{D}_t$ have two variables in common, $x_{t-1}$ and $d_{t-1}$, and which define the sepatator between  $\mathbb{X}_t$ and $\mathbb{D}_t$.

We proceed by representing the clique CPTs of the junction tree as tensors. For example, the clique $\mathbb{X}_t$, containing the CPT of $p(x_t|x_{t-1}, d_{t-1})$ is represented as tensor $\underset{x_t|x_{t-1}d_{t-1}}{{\bm{\mathscr{X}}}}$. For ease of exposition, the tensor's modes are named based on the variables on which the tensor depends. We also keep the conditioning symbol $|$, for clarity. Similarly, we represent the clique $\mathbb{D}_{t}$ with its CPT $p(d_t|x_t,d_{t-1})$ as tensor $\underset{d_{t}|x_{t}d_{t-1}}{\bm{\mathscr{D}}}$, and $\mathbb{O}_{t}$ containing $p(o_t|x_t)$ as tensor $\underset{o_{t}|x_{t}}{\bm{\mathscr{O}}}$.
\begin{figure*}[!t]
\centering
   \includegraphics[width=\textwidth]{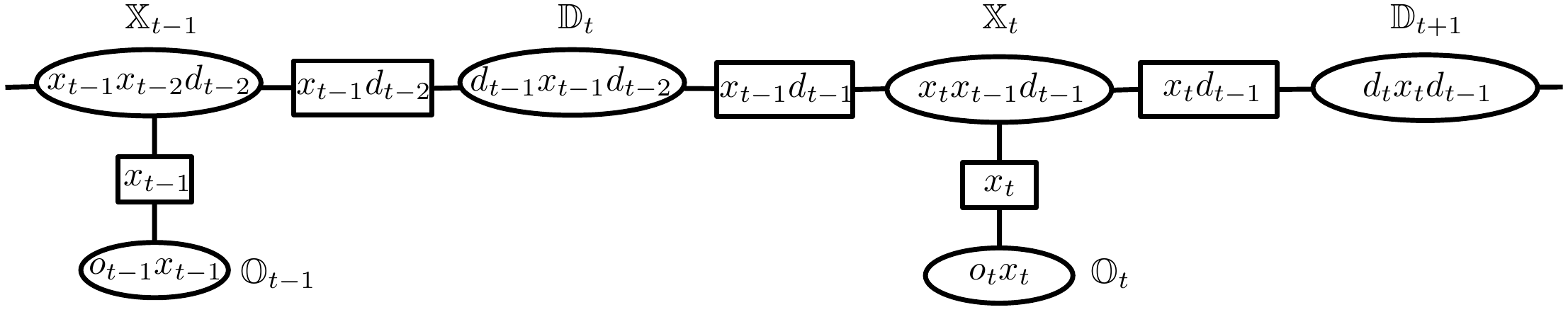}
   \caption{Junction Tree for Hidden Semi-Markov Model. The ovals represent cliques, which are denoted by capital blackboard bold variables; the rectangles denote separators. Symbols within the shapes represent the variables on which the corresponding potentials depend.}
   \label{fig:jthsmm}
\end{figure*}

If we denote the joint probability of the observed sequence $p(o_1,\ldots,o_T)$ as $\underset{o_1,\ldots,o_T}{\bm{\mathscr{P}}}$ then the message passing for the junction tree algorithm in Figure \ref{fig:jthsmm} can be represented as tensor multiplications:


\begin{align}
\underset{o_1,\ldots,o_T}{\bm{\mathscr{P}}} =
\prod_{t} \underset{d_{t-1}|x_{t-1}x_{t-1}d_{t-2}}{\bm{\mathscr{D}}}
\times_{x_{t-1}d_{t-1}}\left(\underset{x_tx_{t}|x_{t-1}d_{t-1}d_{t-1}}{\bm{\mathscr{X}}}\times_{x_t}\underset{o_t|x_t}{\bm{\mathscr{O}}}\right),
\label{eq:mesTensor}
\end{align}
where, for simplicity, we denoted by $\prod_{t}$ the tensor product over multiple time steps.

Note that in \eqref{eq:mesTensor} the neighboring tensors are multiplied along the modes which are the separator variables between two corresponding neighboring cliques in Figure \ref{fig:jthsmm}. Therefore, as we discussed in Section \ref{sec:notation}, if a certain mode of a tensor is to participate multiple times in products with other tensor, the mode must be duplicated for the expression to remain correct. It can easily be seen from the junction tree that the number of times the mode is duplicated depends on the number of times such a variable appears in separators adjacent to the clique. For example, the tensor $\underset{x_tx_t|x_{t-1}d_{t-1}d_{t-1}}{{\bm{\mathscr{X}}}}$ has a mode $x_{t-1}$ appearing once in the separator connecting $\mathbb{X}_t$ and $\mathbb{D}_t$ in Figure \ref{fig:jthsmm}, while $x_t$ appears a total of two times - once in the separator connecting $\mathbb{X}_t$ and $\mathbb{O}_t$, and once in the separator connecting $\mathbb{X}_t$ and $\mathbb{D}_{t+1}$. Finally, $d_{t-1}$ appears in the separator between $\mathbb{D}_t$ and $\mathbb{X}_t$, and between $\mathbb{D}_{t+1}$ and $\mathbb{X}_t$. Applying the same reasoning to tensors $\bm{\mathscr{D}}$ and $\bm{\mathscr{O}}$ results in the expression \eqref{eq:mesTensor}.

\subsection{Summary of Results}

In this work, we represent expression \eqref{eq:mesTensor}, which is defined in terms of unknown model parameters, in a different form, called observable representation, where all the factors can be estimated directly from data using certain sample moments without knowledge of model parameters. Such an observable form is derived in Sections \ref{sec:obsTensRepresentation} and \ref{sec:obsTensors}. Based on the obtained representation, we propose in Section \ref{sec:specAlgorithm} a simple spectral algorithm, which requires estimating $\bm{\mathscr{X}}$, $\bm{\mathscr{D}}$ and $\bm{\mathscr{O}}$ for all the time stamps $t$. This estimation process is expensive as it involves costly tensor operations to be performed at each time index $t$. Moreover, the accurate estimation of these tensors requires large number of training sequences which might not be available, leading to inaccurate and unstable computations. However, exploiting the homogeneity property of HSMMs, i.e., the fact that the probability distributions, which the above tensors represent, are independent of time index $t$, we derive computationally more efficient and accurate spectral algorithm in Section \ref{sec:effcAlgorithm} requiring estimation of only three tensors for all the time stamps $t$. Although the computational complexity of inference, i.e., the evaluation of expression \eqref{eq:mesTensor}, is not affected by the introduced modifications, the overall algorithm becomes faster and more accurate. 
In Section \ref{sec:observationVars} we return to the results of Sections \ref{sec:obsTensRepresentation} and establish the conditions under which the derived observable representation  exists. In particular, our analysis shows that the number of dimensions of the required sample moments has logarithmic dependence on the longest state persistence $n_d$. Such conclusion is in contrast to the analysis, which would follow from the work of \cite{psit12}, in which case the required number of dimensions in the estimated sample moments would have had linear dependence on $n_d$. The exponential reduction in the size of the sample moments represents significant improvement in algorithm's efficiency and accuracy. Finally, we evaluated the proposed algorithm using synthetic and real datasets and compared its performance with the traditional EM approach. The main conclusion from such evaluations is that for large enough datasets the spectral method gets similar or better performance than EM, while at the same time being orders of magnitude faster than EM.

\section{Spectral Algorithm for Inference in HSMM}
\label{sec:tensorForm}
In this Section we present the details of the spectral inference approach. In particular, in Sections \ref{sec:obsTensRepresentation} and \ref{sec:obsTensors} we derive observable tensor representation and show how to estimate each of its factors directly from data. Practical algorithms implementing these ideas are then derived in Sections \ref{sec:specAlgorithm} and \ref{sec:effcAlgorithm}.

\subsection{Observable Tensor Representation}
\label{sec:obsTensRepresentation}
Observe that the computation of the joint probability in \eqref{eq:mesTensor} requires knowledge of the unknown model parameters. Our goal is to change the tensor representation such that $\underset{o_1,\ldots,o_T}{\bm{\mathscr{P}}}$ can be written in terms of the quantities directly computable from data. To that end, we follow \cite{psit12} and between every two factors in \eqref{eq:mesTensor} introduce an identity tensor with the modes corresponding to the modes along which the multiplication is performed. For example, consider a part of \eqref{eq:mesTensor} after introducing identity tensors:
\begin{align}
\times\hspace{-0pt}\underset{{x_{t\hspace{-1pt}-\hspace{-1pt}1}d_{t\hspace{-1pt}-\hspace{-1pt}2}}}{\bm{\mathscr{I}}}\hspace{-0pt}\times_{{x_{t\hspace{-1pt}-\hspace{-1pt}1}d_{t\hspace{-1pt}-\hspace{-1pt}2}}}\hspace{-2pt}\underset{d_{t\hspace{-1pt}-\hspace{-1pt}1}|x_{t\hspace{-1pt}-\hspace{-1pt}1}x_{t\hspace{-1pt}-\hspace{-1pt}1}d_{t\hspace{-1pt}-\hspace{-1pt}2}}{\bm{\mathscr{D}}}\hspace{-6pt}\times_{{x_{t\hspace{-1pt}-\hspace{-1pt}1}d_{t\hspace{-1pt}-\hspace{-1pt}1}}}\hspace{-2pt}\underset{{x_{t\hspace{-1pt}-\hspace{-1pt}1}d_{t\hspace{-1pt}-\hspace{-1pt}1}}}{\bm{\mathscr{I}}}
\hspace{-0pt}\times_{{x_{t\hspace{-1pt}-\hspace{-1pt}1}d_{t\hspace{-1pt}-\hspace{-1pt}1}}}\hspace{-0pt}\left(\hspace{-0pt}\underset{x_tx_t|x_{t\hspace{-1pt}-\hspace{-1pt}1}d_{t\hspace{-1pt}-\hspace{-1pt}1}d_{t\hspace{-1pt}-\hspace{-1pt}1}}{\bm{\mathscr{X}}}\hspace{-0pt}\times_{x_t}\hspace{-2pt}\underset{x_t}{\bm{\mathscr{I}}}\hspace{-2pt}\times_{x_t}\hspace{-2pt}\underset{o_tx_t}{\bm{\mathscr{O}}}\hspace{-0pt}\right)\hspace{-0pt}\times_{x_td_{t\hspace{-1pt}-\hspace{-1pt}1}}\hspace{-2pt}\underset{{x_td_{t\hspace{-1pt}-\hspace{-1pt}1}}}{\bm{\mathscr{I}}}\hspace{-0pt}\times,
\label{eq:mesTensorIdent}
\end{align}
where all the identity tensors have duplicated modes which are not shown.

Now rewrite each of the identity tensors in \eqref{eq:mesTensorIdent} as a multiplication of some factor times its inverse. For example, 
\begin{align*}
\underset{x_t}{\bm{\mathscr{I}}} = \underset{\omega_{x_t}x_t}{\bm{\mathscr{F}}}\times_{\omega_{x_t}}\underset{\omega_{x_t}x_t}{\bm{\mathscr{F}}^{-1}},
\end{align*}
for some invertible factor $\underset{\omega_{x_t}x_t}{\bm{\mathscr{F}}}$, whose modes are $x_t$ and $\omega_{x_t}$. Note that the choice of mode $x_t$ is fixed and is determined by the modes of the identity tensor $\underset{x_t}{\bm{\mathscr{I}}}$, while the mode $\omega_{x_t}$ is not fixed and we have a freedom in selecting it. Moreover, observe that since the tensor inversion is done along the mode $\omega_{x_t}$ and the matrix ${{\mathbf{F}}}$ has its rows associated with mode $\omega_{x_t}$, we need to ensure such a matrix has full column rank for the inverse to exist and for the product $\mathbf{F}^{-1}\mathbf{F}$ to be the identity matrix (see Section \ref{sec:notation} for more details on tensor inversion). Based on the above discussion, we choose tensor $\bm{\mathscr{F}}$ such that (i) $\omega_{x_t}$ are the observed variables, (ii)  $\underset{\omega_{x_t}x_t}{\bm{\mathscr{F}}}$ is invertible and (iii) we interpret the factor $\underset{\omega_{x_t}x_t}{\bm{\mathscr{F}}}$ as corresponding to a conditional probability distribution, i.e., $p(\omega_{x_t}|x_t)$ and therefore write $\underset{\omega_{x_t}|x_t}{\bm{\mathscr{F}}}$.

After expanding each of the identity tensors, regrouping the factors and recalling that in a series of tensor multiplication the order is irrelevant, we can identify three modified tensors:
\begin{align*}
\underset{\omega_{x_{t\hspace{-1pt}-\hspace{-1pt}1}d_{t\hspace{-1pt}-\hspace{-1pt}2}}\omega_{x_{t\hspace{-1pt}-\hspace{-1pt}1}d_{t\hspace{-1pt}-\hspace{-1pt}1}}}{\tilde{\bm{\mathscr{D}}}}&= \underset{\omega_{x_{t\hspace{-1pt}-\hspace{-1pt}1}d_{t\hspace{-1pt}-\hspace{-1pt}2}}|x_{t\hspace{-1pt}-\hspace{-1pt}1}d_{t\hspace{-1pt}-\hspace{-1pt}2}}{\bm{\mathscr{F}}^{-1}}\times_{x_{t\hspace{-1pt}-\hspace{-1pt}1}d_{t\hspace{-1pt}-\hspace{-1pt}2}}\underset{d_{t\hspace{-1pt}-\hspace{-1pt}1}|x_{t\hspace{-1pt}-\hspace{-1pt}1}x_{t\hspace{-1pt}-\hspace{-1pt}1}d_{t\hspace{-1pt}-\hspace{-1pt}2}}{\bm{\mathscr{D}}}\times_{x_{t\hspace{-1pt}-\hspace{-1pt}1}d_{t\hspace{-1pt}-\hspace{-1pt}1}}\underset{\omega_{x_{t\hspace{-1pt}-\hspace{-1pt}1}d_{t\hspace{-1pt}-\hspace{-1pt}1}}|x_{t\hspace{-1pt}-\hspace{-1pt}1}d_{t\hspace{-1pt}-\hspace{-1pt}1}}{\bm{\mathscr{F}}}\\
\underset{\omega_{x_{t\hspace{-1pt}-\hspace{-1pt}1}d_{t\hspace{-1pt}-\hspace{-1pt}1}}\omega_{x_t}\omega_{x_{t}d_{t\hspace{-1pt}-\hspace{-1pt}1}}}{\tilde{\bm{\mathscr{X}}}} &=
\underset{\omega_{x_{t\hspace{-1pt}-\hspace{-1pt}1}d_{t\hspace{-1pt}-\hspace{-1pt}1}}|x_{t\hspace{-1pt}-\hspace{-1pt}1}d_{t\hspace{-1pt}-\hspace{-1pt}1}}{\bm{\mathscr{F}}^{-1}}\times_{x_{t\hspace{-1pt}-\hspace{-1pt}1}d_{t\hspace{-1pt}-\hspace{-1pt}1}}
\hspace{-2pt}\left(\underset{x_{t}x_t|x_{t\hspace{-1pt}-\hspace{-1pt}1}d_{t\hspace{-1pt}-\hspace{-1pt}1}d_{t\hspace{-1pt}-\hspace{-1pt}1}}{\bm{\mathscr{X}}}\hspace{-2pt}\times_{x_t}\hspace{-2pt}\underset{\omega_{x_t}|x_{t}}{\bm{\mathscr{F}}}\right)
\hspace{-2pt}\times_{x_td_{t\hspace{-1pt}-\hspace{-1pt}1}}\hspace{-3pt}\underset{\omega_{x_td_{t\hspace{-1pt}-\hspace{-1pt}1}}|x_td_{t\hspace{-1pt}-\hspace{-1pt}1}}{\bm{\mathscr{F}}}\\
\underset{\omega_{x_t}o_t}{\tilde{\bm{\mathscr{O}}}} &= \underset{\omega_{x_t}|x_{t}}{\bm{\mathscr{F}}^{-1}}\times_{x_t}\underset{o_t|x_t}{\bm{\mathscr{O}}}.
\end{align*}
Note that although each of the above tensors depends only on the observed variables $\omega$, how to estimate them is not clear yet: the expressions on the right depend on the unknown model parameters, while the tensors on the left do not correspond to valid probability distributions (due to the presence of inverses ${\bm{\mathscr{F}}^{-1}}$), and so cannot be estimated from data using sample moments. For example, $\underset{\omega_{x_{t\hspace{-1pt}-\hspace{-1pt}1}d_{t\hspace{-1pt}-\hspace{-1pt}2}}\omega_{x_{t\hspace{-1pt}-\hspace{-1pt}1}d_{t\hspace{-1pt}-\hspace{-1pt}1}}}{\tilde{\bm{\mathscr{D}}}}$ is not a tensor form of $p(\omega_{x_{t\hspace{-1pt}-\hspace{-1pt}1}d_{t\hspace{-1pt}-\hspace{-1pt}2}}, \omega_{x_{t\hspace{-1pt}-\hspace{-1pt}1}d_{t\hspace{-1pt}-\hspace{-1pt}1}})$.

Next, we discuss the choice of the observable set $\omega$ in the factors ${\bm{\mathscr{F}}}$. From Figure \ref{fig:jthsmm} we can see that there are three types of separators which depend on $x_{t-1}d_{t-1}$, $x_{t}d_{t-1}$ and $x_t$, consequently, there are three types of identity tensors which we introduced in \eqref{eq:mesTensorIdent}, i.e., $\underset{{x_{t-1}d_{t-1}}}{\bm{\mathscr{I}}}$, $\underset{{x_{t}d_{t-1}}}{\bm{\mathscr{I}}}$ and $\underset{x_{t}}{\bm{\mathscr{I}}}$. Therefore, we need to define three types of observable sets $\omega_{x_{t-1}d_{t-1}}$, $\omega_{x_{t}d_{t-1}}$ and $\omega_{x_{t}}$. There could be multiple choices for these sets, one of them is $\omega_{x_{t-1}d_{t-1}} = \omega_{x_{t}d_{t-1}}= \{o_{t+1}, o_{t+2}, \ldots\}$ for all $t$ (see Figure \ref{fig:condind} for an illustration). Ideally, we want these sets to be of minimal size, since they need to be estimated from observations. The detailed description of how many and which of these observations to select to get a minimal set is deferred until Section \ref{sec:observationVars}, where we also show that we can set $\omega_{x_t} = o_t$.

In what follows, we define $\bm{\mathsf{O}}_{R_{t}} := \{o_{t+1}, o_{t+2}, \ldots\}$, to emphasize that this is a fixed set of observations whose length is yet to be determined, starting after time stamp $t$ and going to the right (or forward in time) in the graphical model in Figure \ref{fig:hsmm}. With these definitions, setting $\omega_{x_{t-1}d_{t-1}} = \bm{\mathsf{O}}_{R_{t}}$, $\omega_{x_{t}d_{t-1}} = \bm{\mathsf{O}}_{R_{t}}$, $\omega_{x_{t-1}d_{t-2}} = \bm{\mathsf{O}}_{R_{t-1}}$ and $\omega_{x_t} = o_t$, we can now rewrite \eqref{eq:mesTensor} in the form:
\begin{align}
\label{eq:mesTensorObserv}
\underset{o_1,\ldots,o_T}{\bm{\mathscr{P}}} \hspace{-3pt}=
\prod_{t} \underset{\bm{\mathsf{O}}_{R_{t-1}}\bm{\mathsf{O}}_{R_{t}}}{\tilde{\bm{\mathscr{D}}}}
\hspace{-3pt}\times_{\bm{\mathsf{O}}_{R_t}}\left(\underset{\bm{\mathsf{O}}_{R_{t}}o_t\bm{\mathsf{O}}_{R_{t}}}{\tilde{\bm{\mathscr{X}}}}\hspace{-3pt}\times_{o_t}\underset{o_to_t}{\tilde{\bm{\mathscr{O}}}}\right).
\end{align}
Comparing \eqref{eq:mesTensor} and \eqref{eq:mesTensorObserv} we see that the above equation expresses the joint probability distribution in the observable form. As noted above, we cannot yet use this formula in practice since we do not know how to compute the transformed tensors. In what follows, we show how to estimate such tensors directly from data, without the need for the model parameters.

\subsection{Estimation of Observable Tensors}
\label{sec:obsTensors}
In this Section we express each of the tensors in \eqref{eq:mesTensorObserv} in the form suitable for estimation directly from the observed sequences. 

\subsubsection[Computation of Tensor D]{Computation of Tensor $\underset{\bm{\mathsf{O}}_{R_{t\hspace{-1pt}-\hspace{-1pt}1}}\bm{\mathsf{O}}_{R_{t}}}{\tilde{\bm{\mathscr{D}}}}$ 
}
\label{sec:compD}

\begin{figure*}[!t]
\centering
   \includegraphics[width=0.4\textwidth]{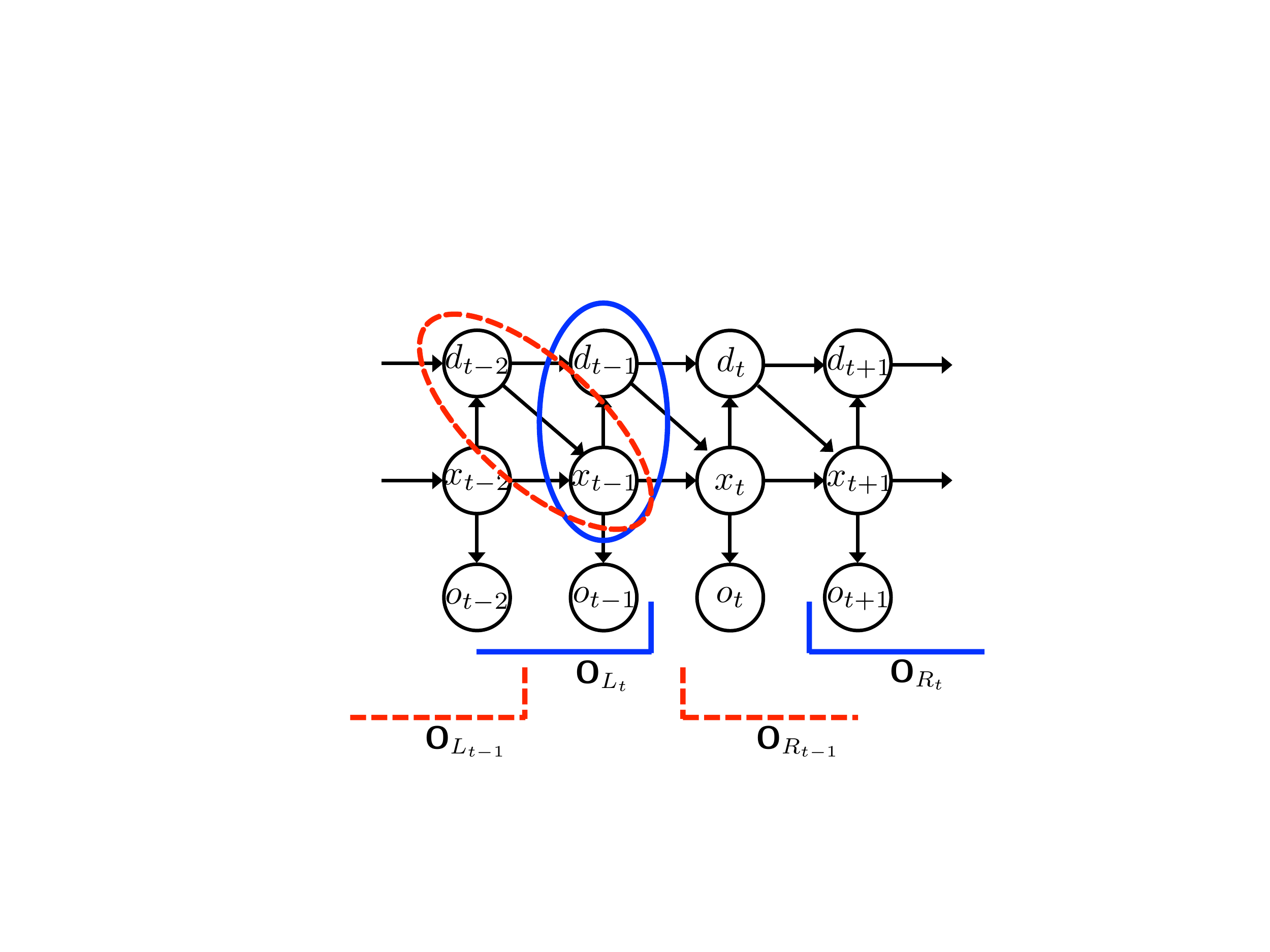}
   \caption{Conditional independence in HSMM. The figure depicts two sets of relationships: $\bm{\mathsf{O}}_{L_{t}}$ and $\bm{\mathsf{O}}_{R_{t}}$ are independent conditioned on $x_{t-1}d_{t-1}$, similarly,  $\bm{\mathsf{O}}_{L_{t-1}}$ and $\bm{\mathsf{O}}_{R_{t-1}}$ are conditionally independent given $x_{t-1}d_{t-2}$.  We defined $\bm{\mathsf{O}}_{L_{t}} =\{\ldots, o_{t-2},o_{t-1}\}$ and $\bm{\mathsf{O}}_{R_{t}} =\{o_{t+1}, o_{t+2}, \ldots\}$.}
   \label{fig:condind}
\end{figure*}

Consider the tensor from Section \ref{sec:obsTensRepresentation}
\begin{align}
\label{eq:tensorD}
\underset{\bm{\mathsf{O}}_{R_{t\hspace{-1pt}-\hspace{-1pt}1}}\bm{\mathsf{O}}_{R_{t}}}{\tilde{\bm{\mathscr{D}}}}\hspace{-2pt}=\hspace{-2pt} \underset{\bm{\mathsf{O}}_{R_{t\hspace{-1pt}-\hspace{-1pt}1}}|x_{t\hspace{-1pt}-\hspace{-1pt}1}d_{t\hspace{-1pt}-\hspace{-1pt}2}}{\bm{\mathscr{F}}^{-1}}\hspace{-2pt}\times_{x_{t\hspace{-1pt}-\hspace{-1pt}1}d_{t\hspace{-1pt}-\hspace{-1pt}2}}\underset{d_{t\hspace{-1pt}-\hspace{-1pt}1}|x_{t\hspace{-1pt}-\hspace{-1pt}1}x_{t\hspace{-1pt}-\hspace{-1pt}1}d_{t\hspace{-1pt}-\hspace{-1pt}2}}{\bm{\mathscr{D}}}\hspace{-3pt}\times_{x_{t\hspace{-1pt}-\hspace{-1pt}1}d_{t\hspace{-1pt}-\hspace{-1pt}1}}\underset{\bm{\mathsf{O}}_{R_{t}}|x_{t\hspace{-1pt}-\hspace{-1pt}1}d_{t\hspace{-1pt}-\hspace{-1pt}1}}{\bm{\mathscr{F}}},
\end{align}
whose modes are the observable variables $\bm{\mathsf{O}}_{R_{t-1}}$ and $\bm{\mathsf{O}}_{R_{t}}$. To estimate this tensor from data, consider $\bm{\mathsf{O}}_{L_{t-1}}$, a set of the observed variables such that $\bm{\mathsf{O}}_{L_{t-1}}$ and $\bm{\mathsf{O}}_{R_{t-1}}$ are independent, conditioned on $x_{t-1}d_{t-2}$ (see Figure \ref{fig:condind}): 
\begin{align}
\label{eq:CondInd}
p(\bm{\mathsf{O}}_{L_{t\hspace{-1pt}-\hspace{-1pt}1}}, \bm{\mathsf{O}}_{R_{t\hspace{-1pt}-\hspace{-1pt}1}}) = \sum_{x_{t\hspace{-1pt}-\hspace{-1pt}1}d_{t\hspace{-1pt}-\hspace{-1pt}2}}p(\bm{\mathsf{O}}_{L_{t\hspace{-1pt}-\hspace{-1pt}1}}|x_{t\hspace{-1pt}-\hspace{-1pt}1}d_{t\hspace{-1pt}-\hspace{-1pt}2})p(\bm{\mathsf{O}}_{R_{t\hspace{-1pt}-\hspace{-1pt}1}}|x_{t\hspace{-1pt}-\hspace{-1pt}1}d_{t\hspace{-1pt}-\hspace{-1pt}2})p(x_{t\hspace{-1pt}-\hspace{-1pt}1}d_{t\hspace{-1pt}-\hspace{-1pt}2}).
\end{align}
The above conditional independence relationship can be written in tensor form: 
\begin{align}
\label{eq:tensCondInd}
\underset{\bm{\mathsf{O}}_{L_{t\hspace{-1pt}-\hspace{-1pt}1}}\bm{\mathsf{O}}_{R_{t\hspace{-1pt}-\hspace{-1pt}1}}}{\bm{\mathscr{M}}} \hspace{-3pt}=\hspace{-3pt} \underset{\bm{\mathsf{O}}_{L_{t\hspace{-1pt}-\hspace{-1pt}1}}|x_{t\hspace{-1pt}-\hspace{-1pt}1}d_{t\hspace{-1pt}-\hspace{-1pt}2}}{\bm{\mathscr{F}}}
\times_{x_{t\hspace{-1pt}-\hspace{-1pt}1}d_{t\hspace{-1pt}-\hspace{-1pt}2}}\underset{\bm{\mathsf{O}}_{R_{t\hspace{-1pt}-\hspace{-1pt}1}}|x_{t\hspace{-1pt}-\hspace{-1pt}1}d_{t\hspace{-1pt}-\hspace{-1pt}2}}{\bm{\mathscr{F}}}
\times_{x_{t\hspace{-1pt}-\hspace{-1pt}1}d_{t\hspace{-1pt}-\hspace{-1pt}2}}\underset{x_{t\hspace{-1pt}-\hspace{-1pt}1}d_{t\hspace{-1pt}-\hspace{-1pt}2}}{\bm{\mathscr{K}}},
\end{align}
\noindent where tensor $\bm{\mathscr{K}}$ represents the marginal $p(x_{t-1},d_{t-2})$. Note that, though not shown, the modes $x_{t-1}$ and $d_{t-2}$ need to appear twice in $\bm{\mathscr{K}}$, since it interacts with both other terms (see the discussion on mode duplication in Section \ref{sec:notation}).  The set $\bm{\mathsf{O}}_{L_{t-1}}$ is defined in a way similar to $\bm{\mathsf{O}}_{R_{t}}$ but with the set of observations starting at time stamp $t-2$ and going to the left (or backward in time), i.e., $\bm{\mathsf{O}}_{L_{t-1}} := \{\ldots, o_{t-3}, o_{t-2}\}$ (see Figure \ref{fig:condind}).

Next, we express the inverse of the tensor $\underset{\bm{\mathsf{O}}_{R_{t-1}}|x_{t-1}d_{t-2}}{\bm{\mathscr{F}}}$ from \eqref{eq:tensCondInd} and substitute back to \eqref{eq:tensorD}. For this, we observe that in \eqref{eq:tensorD} the tensor $\bm{\mathscr{F}^{-1}}$ is inverted with respect to mode $\bm{\mathsf{O}}_{R_{t-1}}$, therefore, we do the following:

\begin{align}
\underset{\bm{\mathsf{O}}_{L_{t\hspace{-1pt}-\hspace{-1pt}1}}\bm{\mathsf{O}}_{R_{t\hspace{-1pt}-\hspace{-1pt}1}}}{\bm{\mathscr{M}}}\times_{\bm{\mathsf{O}}_{R_{t-1}}} \underset{\bm{\mathsf{O}}_{R_{t\hspace{-1pt}-\hspace{-1pt}1}}|x_{t\hspace{-1pt}-\hspace{-1pt}1}d_{t\hspace{-1pt}-\hspace{-1pt}2}}{\bm{\mathscr{F}}^{-1}} &= \underset{\bm{\mathsf{O}}_{L_{t\hspace{-1pt}-\hspace{-1pt}1}}|x_{t\hspace{-1pt}-\hspace{-1pt}1}d_{t\hspace{-1pt}-\hspace{-1pt}2}}{\bm{\mathscr{F}}}
\times_{x_{t\hspace{-1pt}-\hspace{-1pt}1}d_{t\hspace{-1pt}-\hspace{-1pt}2}}\underset{x_{t\hspace{-1pt}-\hspace{-1pt}1}d_{t\hspace{-1pt}-\hspace{-1pt}2}}{\bm{\mathscr{I}}}
\times_{x_{t\hspace{-1pt}-\hspace{-1pt}1}d_{t\hspace{-1pt}-\hspace{-1pt}2}}\underset{x_{t\hspace{-1pt}-\hspace{-1pt}1}d_{t\hspace{-1pt}-\hspace{-1pt}2}}{\bm{\mathscr{K}}}\nonumber \\ \underset{\bm{\mathsf{O}}_{R_{t\hspace{-1pt}-\hspace{-1pt}1}}|x_{t\hspace{-1pt}-\hspace{-1pt}1}d_{t\hspace{-1pt}-\hspace{-1pt}2}}{\bm{\mathscr{F}}^{-1}} &= \underset{\bm{\mathsf{O}}_{L_{t\hspace{-1pt}-\hspace{-1pt}1}}\bm{\mathsf{O}}_{R_{t\hspace{-1pt}-\hspace{-1pt}1}}}{\bm{\mathscr{M}}^{-1}}\times_{\bm{\mathsf{O}}_{L_{t-1}}} \underset{\bm{\mathsf{O}}_{L_{t\hspace{-1pt}-\hspace{-1pt}1}}|x_{t\hspace{-1pt}-\hspace{-1pt}1}d_{t\hspace{-1pt}-\hspace{-1pt}2}}{\bm{\mathscr{F}}}
\times_{x_{t\hspace{-1pt}-\hspace{-1pt}1}d_{t\hspace{-1pt}-\hspace{-1pt}2}}\underset{x_{t\hspace{-1pt}-\hspace{-1pt}1}d_{t\hspace{-1pt}-\hspace{-1pt}2}}{\bm{\mathscr{K}}},
\label{eq:Finv}
\end{align}
\noindent where $\underset{\bm{\mathsf{O}}_{L_{t\hspace{-1pt}-\hspace{-1pt}1}}\bm{\mathsf{O}}_{R_{t\hspace{-1pt}-\hspace{-1pt}1}}}{\bm{\mathscr{M}}^{-1}}$ is inverted with respect to mode $\bm{\mathsf{O}}_{L_{t-1}}$. Next, substituting \eqref{eq:Finv} back to \eqref{eq:tensorD}, we get

\begin{align}
\underset{\bm{\mathsf{O}}_{R_{t\hspace{-1pt}-\hspace{-1pt}1}}\bm{\mathsf{O}}_{R_{t}}}{\tilde{\bm{\mathscr{D}}}} &\hspace{-7pt}=\hspace{-4pt} \underset{\bm{\mathsf{O}}_{L_{t\hspace{-1pt}-\hspace{-1pt}1}}\bm{\mathsf{O}}_{R_{t\hspace{-1pt}-\hspace{-1pt}1}}}{\bm{\mathscr{M}}^{-1}}\hspace{-6pt}\times_{\bm{\mathsf{O}}_{L_{t\hspace{-1pt}-\hspace{-1pt}1}}}\hspace{-2pt}
\overbracket{\underset{\bm{\mathsf{O}}_{L_{t\hspace{-1pt}-\hspace{-1pt}1}}|x_{t\hspace{-1pt}-\hspace{-1pt}1}d_{t\hspace{-1pt}-\hspace{-1pt}2}}{\bm{\mathscr{F}}}\hspace{-5pt}\times_{x_{t\hspace{-1pt}-\hspace{-1pt}1}d_{t\hspace{-1pt}-\hspace{-1pt}2}}\hspace{-2pt}
\underset{x_{t\hspace{-1pt}-\hspace{-1pt}1}d_{t\hspace{-1pt}-\hspace{-1pt}2}}{\bm{\mathscr{K}}}
 \hspace{-5pt}\times_{x_{t\hspace{-1pt}-\hspace{-1pt}1}d_{t\hspace{-1pt}-\hspace{-1pt}2}}\hspace{-2pt}\underset{d_{t\hspace{-1pt}-\hspace{-1pt}1}|x_{t\hspace{-1pt}-\hspace{-1pt}1}x_{t\hspace{-1pt}-\hspace{-1pt}1}d_{t\hspace{-1pt}-\hspace{-1pt}2}}{\bm{\mathscr{D}}}\hspace{-5pt}\times_{x_{t\hspace{-1pt}-\hspace{-1pt}1}d_{t\hspace{-1pt}-\hspace{-1pt}1}}\hspace{-2pt}\underset{\bm{\mathsf{O}}_{R_{t}}|x_{t\hspace{-1pt}-\hspace{-1pt}1}d_{t\hspace{-1pt}-\hspace{-1pt}1}}{\bm{\mathscr{F}}}}\nonumber \\
 \label{eq:DtensorObs}
 &\hspace{-7pt}=\hspace{-4pt}\underset{\bm{\mathsf{O}}_{L_{t\hspace{-1pt}-\hspace{-1pt}1}}\bm{\mathsf{O}}_{R_{t\hspace{-1pt}-\hspace{-1pt}1}}}{\bm{\mathscr{M}}^{-1}}\times_{\bm{\mathsf{O}}_{L_{t\hspace{-1pt}-\hspace{-1pt}1}}}~~~\underset{\bm{\mathsf{O}}_{L_{t\hspace{-1pt}-\hspace{-1pt}1}}\bm{\mathsf{O}}_{R_{t}}}{\bm{\mathscr{M}}},
\end{align}
\noindent where we have eliminated all the latent variables by multiplying the last four terms on the first line.

Observe that the tensors $\underset{\bm{\mathsf{O}}_{L_{t\hspace{-1pt}-\hspace{-1pt}1}}\bm{\mathsf{O}}_{R_{t\hspace{-1pt}-\hspace{-1pt}1}}}{\bm{\mathscr{M}}}$ and $\underset{\bm{\mathsf{O}}_{L_{t\hspace{-1pt}-\hspace{-1pt}1}}\bm{\mathsf{O}}_{R_{t}}}{\bm{\mathscr{M}}}$ represent valid joint probability distributions over a subset of observations $p(\bm{\mathsf{O}}_{L_{t\hspace{-1pt}-\hspace{-1pt}1}}, \bm{\mathsf{O}}_{R_{t\hspace{-1pt}-\hspace{-1pt}1}})$ and $p(\bm{\mathsf{O}}_{L_{t\hspace{-1pt}-\hspace{-1pt}1}}, \bm{\mathsf{O}}_{R_{t}})$, respectively, and though they are defined with respect to unknown model parameters (as, for example, in \eqref{eq:CondInd}), we can readily estimate them from data. For example, $\underset{\bm{\mathsf{O}}_{L_{t\hspace{-1pt}-\hspace{-1pt}1}}\bm{\mathsf{O}}_{R_{t}}}{\bm{\mathscr{M}}}$ is a tensor, where each entry is computed from the frequency of co-occurrence of tuples of the observed symbols $\{\ldots, o_{t-3}, o_{t-2}, o_{t+1}, o_{t+2}, \ldots\}$. Ideally, we want a small number of observation symbols since we need to estimate their co-occurrence frequency from the training data. A precise characterization of how many and which of these symbols suffices for the analysis will be done in Section \ref{sec:observationVars}. 
\begin{figure*}[!t]
\centering
   \includegraphics[width=\textwidth]{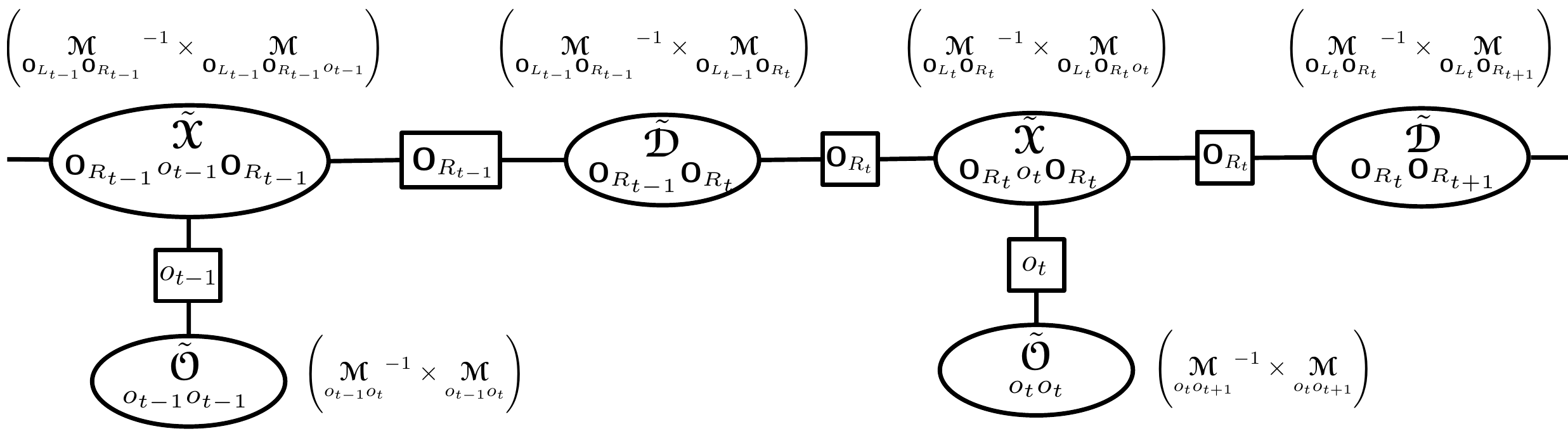}
   \caption{Graphical representation of the HSMM spectral algorithm for inference in Algorithm \ref{algSpectralBasic}. As compared to junction tree in Figure \ref{fig:jthsmm}, the cliques and separators are now defined in terms of the tensors, which are defined with respect to the observed data. The expressions in the parenthesis show the observable representation of the corresponding tensors.}
   \label{fig:jttensors}
\end{figure*}

\subsubsection[Computation of Tensor X]{Computation of Tensor $\underset{\bm{\mathsf{O}}_{R_{t}}o_t\bm{\mathsf{O}}_{R_{t}}}{\tilde{\bm{\mathscr{X}}}}$}

The form of this tensor was established at the beginning of Section \ref{sec:obsTensors} to be:
\begin{align}
\label{eq:tensorX}
\underset{\bm{\mathsf{O}}_{R_{t}}o_t\bm{\mathsf{O}}_{R_{t}}}{\tilde{\bm{\mathscr{X}}}} =
\underset{\bm{\mathsf{O}}_{R_{t}}|x_{t\hspace{-1pt}-\hspace{-1pt}1}d_{t\hspace{-1pt}-\hspace{-1pt}1}}{\bm{\mathscr{F}}^{-1}}\hspace{-5pt}\times_{x_{t\hspace{-1pt}-\hspace{-1pt}1}d_{t\hspace{-1pt}-\hspace{-1pt}1}}
\hspace{-2pt}\left(\underset{x_tx_{t}|x_{t\hspace{-1pt}-\hspace{-1pt}1}d_{t\hspace{-1pt}-\hspace{-1pt}1}d_{t\hspace{-1pt}-\hspace{-1pt}1}}{\bm{\mathscr{X}}}\hspace{-2pt}\times_{x_t}\hspace{-2pt}\underset{o_{t}|x_{t}}{\bm{\mathscr{F}}}\right)
\hspace{-2pt}\times_{x_td_{t-1}}\hspace{-3pt}\underset{\bm{\mathsf{O}}_{R_{t}}|x_td_{t-1}}{\bm{\mathscr{F}}}.
\end{align}
Consider the following conditional independence relationship (see Figure \ref{fig:condind}):
\begin{align}
\label{eq:tensCondInd2}
\underset{\bm{\mathsf{O}}_{L_{t}}\bm{\mathsf{O}}_{R_{t}}}{\bm{\mathscr{M}}} \hspace{-3pt}=\hspace{-3pt} \underset{\bm{\mathsf{O}}_{L_{t}}|x_{t\hspace{-1pt}-\hspace{-1pt}1}d_{t\hspace{-1pt}-\hspace{-1pt}1}}{\bm{\mathscr{F}}}
\times_{x_{t\hspace{-1pt}-\hspace{-1pt}1}d_{t\hspace{-1pt}-\hspace{-1pt}1}}\underset{\bm{\mathsf{O}}_{R_{t}}|x_{t\hspace{-1pt}-\hspace{-1pt}1}d_{t\hspace{-1pt}-\hspace{-1pt}1}}{\bm{\mathscr{F}}}
\times_{x_{t\hspace{-1pt}-\hspace{-1pt}1}d_{t\hspace{-1pt}-\hspace{-1pt}1}}\underset{x_{t\hspace{-1pt}-\hspace{-1pt}1}d_{t\hspace{-1pt}-\hspace{-1pt}1}}{\bm{\mathscr{K}}},
\end{align}
where  $\underset{x_{t\hspace{-1pt}-\hspace{-1pt}1}d_{t\hspace{-1pt}-\hspace{-1pt}1}}{\bm{\mathscr{K}}} = \underset{x_{t\hspace{-1pt}-\hspace{-1pt}1}d_{t\hspace{-1pt}-\hspace{-1pt}1}x_{t\hspace{-1pt}-\hspace{-1pt}1}d_{t\hspace{-1pt}-\hspace{-1pt}1}}{\bm{\mathscr{K}}}$ and we omitted the duplicated modes.

We express the inverse of tensor $\underset{\bm{\mathsf{O}}_{R_{t}}|x_{t-1}d_{t-1}}{\bm{\mathscr{F}}}$ from the above equation 
\begin{align*}
\underset{\bm{\mathsf{O}}_{R_{t}}|x_{t\hspace{-1pt}-\hspace{-1pt}1}d_{t\hspace{-1pt}-\hspace{-1pt}1}}{\bm{\mathscr{F}}^{-1}}
= \underset{\bm{\mathsf{O}}_{L_{t}}\bm{\mathsf{O}}_{R_{t}}}{\bm{\mathscr{M}}^{-1}}\times_{\bm{\mathsf{O}}_{L_{t}}} \underset{\bm{\mathsf{O}}_{L_{t}}|x_{t\hspace{-1pt}-\hspace{-1pt}1}d_{t\hspace{-1pt}-\hspace{-1pt}1}}{\bm{\mathscr{F}}}
\times_{x_{t\hspace{-1pt}-\hspace{-1pt}1}d_{t\hspace{-1pt}-\hspace{-1pt}1}}\underset{x_{t\hspace{-1pt}-\hspace{-1pt}1}d_{t\hspace{-1pt}-\hspace{-1pt}1}}{\bm{\mathscr{K}}},
\end{align*}
\noindent where tensor  $\underset{\bm{\mathsf{O}}_{R_{t}}|x_{t\hspace{-1pt}-\hspace{-1pt}1}d_{t\hspace{-1pt}-\hspace{-1pt}1}}{\bm{\mathscr{F}}}$ is inverted with respect to mode $\bm{\mathsf{O}}_{R_{t}}$, while $\underset{\bm{\mathsf{O}}_{L_{t}}\bm{\mathsf{O}}_{R_{t}}}{\bm{\mathscr{M}}}$ is inverted with respect to mode $\bm{\mathsf{O}}_{L_{t}}$. Substituting back to \eqref{eq:tensorX}, we get
\begin{align*}
\underset{\bm{\mathsf{O}}_{R_{t}}o_t\bm{\mathsf{O}}_{R_{t}}}{\tilde{\bm{\mathscr{X}}}} \hspace{-6pt}=
\underset{\bm{\mathsf{O}}_{L_{t}}\bm{\mathsf{O}}_{R_{t}}}{\bm{\mathscr{M}}^{-1}}
\times_{\bm{\mathsf{O}}_{L_{t}}}\underset{\bm{\mathsf{O}}_{L_{t}}|x_{t\hspace{-1pt}-\hspace{-1pt}1}d_{t\hspace{-1pt}-\hspace{-1pt}1}}{\bm{\mathscr{F}}}
\times_{x_{t\hspace{-1pt}-\hspace{-1pt}1}d_{t\hspace{-1pt}-\hspace{-1pt}1}}\underset{x_{t\hspace{-1pt}-\hspace{-1pt}1}d_{t\hspace{-1pt}-\hspace{-1pt}1}}{\bm{\mathscr{K}}}\times_{x_{t\hspace{-1pt}-\hspace{-1pt}1}d_{t\hspace{-1pt}-\hspace{-1pt}1}}
\hspace{-2pt}\left(\underset{x_{t}x_t|x_{t\hspace{-1pt}-\hspace{-1pt}1}d_{t\hspace{-1pt}-\hspace{-1pt}1}d_{t\hspace{-1pt}-\hspace{-1pt}1}}{\bm{\mathscr{X}}}\hspace{-2pt}\times_{x_t}\hspace{-2pt}\underset{o_{t}|x_{t}}{\bm{\mathscr{F}}}\right)
\hspace{-2pt}\times_{x_td_{t-1}}\hspace{-3pt}\underset{\bm{\mathsf{O}}_{R_{t}}|x_td_{t-1}}{\bm{\mathscr{F}}}.
\end{align*}

\noindent Considering the last five factors and multiplying them together, we obtain 
\begin{align*}
\underset{\bm{\mathsf{O}}_{L_{t}}\bm{\mathsf{O}}_{R_{t}}o_t}{\bm{\mathscr{M}}} = 
\underset{\bm{\mathsf{O}}_{L_{t}}|x_{t\hspace{-1pt}-\hspace{-1pt}1}d_{t\hspace{-1pt}-\hspace{-1pt}1}}{\bm{\mathscr{F}}}
\times_{x_{t\hspace{-1pt}-\hspace{-1pt}1}d_{t\hspace{-1pt}-\hspace{-1pt}1}}\underset{x_{t\hspace{-1pt}-\hspace{-1pt}1}d_{t\hspace{-1pt}-\hspace{-1pt}1}}{\bm{\mathscr{K}}}\times_{x_{t-1}d_{t-1}}
\hspace{-2pt}\left(\underset{x_{t}x_t|x_{t\hspace{-1pt}-\hspace{-1pt}1}d_{t\hspace{-1pt}-\hspace{-1pt}1}d_{t\hspace{-1pt}-\hspace{-1pt}1}}{\bm{\mathscr{X}}}\hspace{-2pt}\times_{x_t}\hspace{-2pt}\underset{o_{t}|x_{t}}{\bm{\mathscr{F}}}\right)
\hspace{-2pt}\times_{x_td_{t-1}}\hspace{-3pt}\underset{\bm{\mathsf{O}}_{R_{t}}|x_td_{t-1}}{\bm{\mathscr{F}}}.
\end{align*}
Finally, \eqref{eq:tensorX} can now be written as 
\begin{align}
\label{eq:tensorXobs}
\underset{\bm{\mathsf{O}}_{R_{t}}o_t\bm{\mathsf{O}}_{R_{t}}}{\tilde{\bm{\mathscr{X}}}} = \underset{\bm{\mathsf{O}}_{L_{t}}\bm{\mathsf{O}}_{R_{t}}}{\bm{\mathscr{M}}^{-1}} \times_{\bm{\mathsf{O}}_{L_{t}}}\underset{\bm{\mathsf{O}}_{L_{t}}\bm{\mathsf{O}}_{R_{t}}o_t}{\bm{\mathscr{M}}},
\end{align}
\noindent where the right hand side can now be estimated directly from data, without the need for the model parameters.

\subsubsection[Computation of Tensor O]{Computation of Tensor $\underset{o_to_t}{\tilde{\bm{\mathscr{O}}}}$}

Finally, we consider the tensor 
\begin{align}
\label{eq:tensorO}
\underset{o_to_t}{\tilde{\bm{\mathscr{O}}}} = \underset{o_{t}|x_{t}}{\bm{\mathscr{F}}^{-1}}\times_{x_t}\underset{o_t|x_t}{\bm{\mathscr{O}}}.
\end{align}

\noindent The conditional independence relationship can take the form
\begin{align*}
\underset{o_to_{t+1}}{\bm{\mathscr{M}}} \hspace{-3pt}=\hspace{-3pt} \underset{o_t|x_t}{\bm{\mathscr{F}}}
\times_{x_t}\underset{o_{t+1}|x_t}{\bm{\mathscr{F}}}
\times_{x_t}\underset{x_t}{\bm{\mathscr{K}}}.
\end{align*}

\noindent Expressing the inverse of $\underset{o_t|x_t}{\bm{\mathscr{F}}}$ 
\begin{align*}
\underset{o_t|x_t}{\bm{\mathscr{F}}^{-1}} = \underset{o_to_{t+1}}{\bm{\mathscr{M}}^{-1}} 
\times_{o_{t+1}}\underset{o_{t+1}|x_t}{\bm{\mathscr{F}}}
\times_{x_t}\underset{x_t}{\bm{\mathscr{K}}},
\end{align*}

\noindent and substituting in \eqref{eq:tensorO}, we get

\begin{align}
\label{eq:tensorOobs}
\underset{o_to_t}{\tilde{\bm{\mathscr{O}}}} &= \underset{o_to_{t+1}}{\bm{\mathscr{M}}^{-1}}
\times_{o_{t+1}}\underset{o_{t+1}|x_t}{\bm{\mathscr{F}}}
\times_{x_t}\underset{x_t}{\bm{\mathscr{K}}}\times_{x_t}\underset{o_t|x_t}{\bm{\mathscr{O}}}\nonumber\\
&= \underset{o_to_{t+1}}{\bm{\mathscr{M}}^{-1}}
\times_{o_{t+1}}\underset{o_to_{t+1}}{\bm{\mathscr{M}}}.
\end{align}

\subsection{Basic Version of Spectral Algorithm}
\label{sec:specAlgorithm}
The basic version of the spectral HSMM algorithm to compute $\underset{o_1,\ldots,o_T}{\bm{\mathscr{P}}}$ entirely using the observed variables can be described as a two step process: in the learning step, compute $\underset{\bm{\mathsf{O}}_{R_{t-1}}\bm{\mathsf{O}}_{R_{t}}}{\tilde{\bm{\mathscr{D}}}}$, $\underset{\bm{\mathsf{O}}_{R_{t-1}}o_t\bm{\mathsf{O}}_{R_{t}}}{\tilde{\bm{\mathscr{X}}}}$ and $\underset{o_to_t}{\tilde{\bm{\mathscr{O}}}}$ for each $t$ using \eqref{eq:DtensorObs}, \eqref{eq:tensorXobs} and \eqref{eq:tensorOobs}  from the training data. In the inference step, use \eqref{eq:mesTensorObserv} to compute $p(\mathbf{S}^{test})$. Algorithm \ref{algSpectralBasic} shows its basic version and Figure \ref{fig:jttensors} shows the graphical representation of this algorithm in terms of the transformed junction tree of Figure \ref{fig:jthsmm}.

As an example, consider the learning step of the algorithm and the computation of tensor in \eqref{eq:DtensorObs}, i.e.,
\begin{align*}
\underset{\bm{\mathsf{O}}_{R_{t\hspace{-1pt}-\hspace{-1pt}1}}\bm{\mathsf{O}}_{R_{t}}}{\tilde{\bm{\mathscr{D}}}} = \underset{\bm{\mathsf{O}}_{L_{t\hspace{-1pt}-\hspace{-1pt}1}}\bm{\mathsf{O}}_{R_{t\hspace{-1pt}-\hspace{-1pt}1}}}{\bm{\mathscr{M}}^{-1}}\times_{\bm{\mathsf{O}}_{L_{t\hspace{-1pt}-\hspace{-1pt}1}}}~~~\underset{\bm{\mathsf{O}}_{L_{t\hspace{-1pt}-\hspace{-1pt}1}}\bm{\mathsf{O}}_{R_{t}}}{\bm{\mathscr{M}}}.
\end{align*}
For a fixed $t$, we estimate each entry of $\underset{\bm{\mathsf{O}}_{L_{t\hspace{-1pt}-\hspace{-1pt}1}}\bm{\mathsf{O}}_{R_{t\hspace{-1pt}-\hspace{-1pt}1}}}{\bm{\mathscr{M}}}$ from the frequency of co-occurrence of tuples of the observed symbols $\{\ldots, o_{t-3}, o_{t-2}, o_{t+1}, o_{t+2}, \ldots\}$ in the given dataset (the sets $\bm{\mathsf{O}}_{L_{t-1}}$ and $\bm{\mathsf{O}}_{R_{t-1}}$ were defined at the beginning of Section \ref{sec:obsTensors}). Next, following our discussion after the equation \eqref{eq:Finv}, we invert $\underset{\bm{\mathsf{O}}_{L_{t\hspace{-1pt}-\hspace{-1pt}1}}\bm{\mathsf{O}}_{R_{t\hspace{-1pt}-\hspace{-1pt}1}}}{\bm{\mathscr{M}}^{-1}}$ along the modes $\bm{\mathsf{O}}_{L_{t-1}}$. For this, we matrisize the tensor so that the modes $\bm{\mathsf{O}}_{L_{t-1}}$ are associated with columns and $\bm{\mathsf{O}}_{R_{t-1}}$ with rows in matrix  $\underset{\bm{\mathsf{O}}_{R_{t\hspace{-1pt}-\hspace{-1pt}1}}\bm{\mathsf{O}}_{L_{t\hspace{-1pt}-\hspace{-1pt}1}}}{{\mathbf{M}}}$ (see Section \ref{sec:notation} for the discussion on tensor matrisization and inversion). Finally, we compute the right inverse of the matrix to obtain $\underset{\bm{\mathsf{O}}_{R_{t\hspace{-1pt}-\hspace{-1pt}1}}\bm{\mathsf{O}}_{L_{t\hspace{-1pt}-\hspace{-1pt}1}}}{{\mathbf{M}}^{-1}}$. Similarly, we estimate the tensor $\underset{\bm{\mathsf{O}}_{L_{t\hspace{-1pt}-\hspace{-1pt}1}}\bm{\mathsf{O}}_{R_{t}}}{\bm{\mathscr{M}}}$ using the corresponding co-occurrences of the observed symbols. Matrisizing the result, so that the rows correspond to the modes $\bm{\mathsf{O}}_{L_{t-1}}$ and the columns to $\bm{\mathsf{O}}_{R_{t}}$, we get the matrix 
$\underset{\bm{\mathsf{O}}_{L_{t\hspace{-1pt}-\hspace{-1pt}1}}\bm{\mathsf{O}}_{R_{t}}}{{\mathbf{M}}}$. The multiplication $\underset{\bm{\mathsf{O}}_{R_{t\hspace{-1pt}-\hspace{-1pt}1}}\bm{\mathsf{O}}_{L_{t\hspace{-1pt}-\hspace{-1pt}1}}}{{\mathbf{M}}^{-1}}~~\cdot~~\underset{\bm{\mathsf{O}}_{L_{t\hspace{-1pt}-\hspace{-1pt}1}}\bm{\mathsf{O}}_{R_{t}}}{{\mathbf{M}}} = \underset{\bm{\mathsf{O}}_{R_{t\hspace{-1pt}-\hspace{-1pt}1}}\bm{\mathsf{O}}_{R_{t}}}{\tilde{{\mathbf{D}}}}$ produces a matrix, which is then converted to a tensor to get the final result in \eqref{eq:DtensorObs}.

In the inference step we perform tensor multiplications for each $t$ running along the length of the testing sequence. The only nuance here is that before multiplying the tensor $\underset{o_to_t}{\tilde{\bm{\mathscr{O}}}}$ with others, the second mode $o_t$, whose dimension is $n_o$ is collapsed into a scalar. This operation is denoted as $\underset{o_to_t}{\tilde{\bm{\mathscr{O}}}}\Big|_{o_t=o_t^{test}}$, which means that based on the value of the $t$th symbol in testing sequence, we select the column corresponding to the element $o_t^{test}$. For example, if $\underset{o_to_t}{\tilde{\bm{\mathscr{O}}}} \in \mathbb{R}^{10\times10}$ and $o_t^{test} = 3$ then $\underset{o_to_t}{\tilde{\bm{\mathscr{O}}}}\Big|_{o_t=o_t^{test}} \in \mathbb{R}^{10\times1}$, a third column in the original matrix.   

\begin{algorithm}[!t]
   \caption{Basic Spectral Algorithm for HSMM inference}
   \label{algSpectralBasic}
\begin{algorithmic}
   \STATE {\bfseries Input:} Training sequences: $\mathbf{S}^{i} = \{o_1^i,\ldots,o_{T_i}^i\}, i=1,\ldots,N$.\\ Testing sequence: $\mathbf{S}^{test} = \{o_1^{test},\ldots,o_{T}^{test}\}$.   
   \STATE {\bfseries Output:} $p(\mathbf{S}^{test})$
   \STATE
   \STATE {\bfseries Learning phase:}
   \FOR {{\bfseries all} $t$}
   \STATE Estimate $\underset{\bm{\mathsf{O}}_{R_{t\hspace{-1pt}-\hspace{-1pt}1}}\bm{\mathsf{O}}_{R_{t}}}{\tilde{\bm{\mathscr{D}}}}$, $\underset{\bm{\mathsf{O}}_{R_{t}}o_t\bm{\mathsf{O}}_{R_{t}}}{\tilde{\bm{\mathscr{X}}}}$ and $\underset{o_to_t}{\tilde{\bm{\mathscr{O}}}}$ from data $\{\mathbf{S}^1,\ldots,\mathbf{S}^N\}$ using equations \eqref{eq:DtensorObs}, \eqref{eq:tensorXobs} and \eqref{eq:tensorOobs}.
   \ENDFOR   
   \STATE
   \STATE {\bfseries Inference phase:}
   \STATE $p(\mathbf{S}^{test}) = 1$
   \FOR {$t=T$ {\bfseries down to} $t=1$}
   \STATE $p(\mathbf{S}^{test}) = p(\mathbf{S}^{test}) \times\underset{\bm{\mathsf{O}}_{R_{t\hspace{-1pt}-\hspace{-1pt}1}}\bm{\mathsf{O}}_{R_{t}}}{\tilde{\bm{\mathscr{D}}}} \times_{\bm{\mathsf{O}}_{R_{t}}} \left(\underset{\bm{\mathsf{O}}_{R_{t}}o_t\bm{\mathsf{O}}_{R_{t}}}{\tilde{\bm{\mathscr{X}}}}\times_{o_t}\underset{o_to_t}{\tilde{\bm{\mathscr{O}}}}\Big|_{o_t=o_t^{test}}\right)$
   \ENDFOR
\end{algorithmic}
\end{algorithm}

Analyzing \eqref{eq:DtensorObs}, \eqref{eq:tensorXobs} and \eqref{eq:tensorOobs}, we see that the computational complexity of the learning phase of the algorithm is determined by the tensor inverses and multiplications. For example, if in \eqref{eq:DtensorObs} we denote $|\bm{\mathsf{O}}_{R}| = |\bm{\mathsf{O}}_{L}| = \ell$ (in Section \ref{sec:observationVars} we will show that $\ell = \lceil 1 + \frac{\log n_d}{\log n_x}\rceil$), then $\underset{\bm{\mathsf{O}}_{L_{t\hspace{-1pt}-\hspace{-1pt}1}}\bm{\mathsf{O}}_{R_{t-1}}}{{{\mathbf{M}}}} \in \mathbb{R}^{n_o^\ell\times n_o^\ell}$ and $\underset{\bm{\mathsf{O}}_{L_{t\hspace{-1pt}-\hspace{-1pt}1}}\bm{\mathsf{O}}_{R_{t}}}{{{\mathbf{M}}}} \in \mathbb{R}^{n_o^\ell\times n_o^\ell}$. The computational complexity of the multiplications and inversions would then be $\mathcal{O}(n_o^{3\ell})$. Performing this computations for all $t$ and assuming that the length of the sequences is $T$, would result in $\mathcal{O}\left(n_o^{3\ell}T\right)$. Additionally, with $N$ training examples there will be a cost of $\mathcal{O}\left(\ell N T\right)$ to estimate the sample moments $\bm{\mathscr{M}}$, which is based on counting the co-occurrences of certain observable symbols. In the inference phase of the algorithm, we perform a series of tensor multiplications with the cost of $\mathcal{O}(n_o^{3\ell}T)$. 

\subsection{Efficient Version of Spectral Algorithm}
\label{sec:effcAlgorithm}

Note that for large $\ell$ the accurate estimation of tensors $\bm{\mathscr{M}}$ for each $t$ will require large number of training sequences which  might not be available, leading to inaccurate and unstable computations. Observe, however, that for example the estimated sample-based tensors $\underset{\bm{\mathsf{O}}_{L_{t\hspace{-1pt}-\hspace{-1pt}1}}\bm{\mathsf{O}}_{R_{t}}}{\bm{\mathscr{M}}}$ in \eqref{eq:DtensorObs} for each $t$ estimate the same population quantity due to homogeneity of HSMM.  Thus, a \emph{novel} aspect of our work is the improvement of the accuracy and efficiency of the basic algorithm \ref{alg1} by exploiting the homogeneity property of HSMM and estimating the tensors ${\tilde{\bm{\mathscr{X}}}}$, ${\tilde{\bm{\mathscr{D}}}}$ and ${\tilde{\bm{\mathscr{O}}}}$ in the batch, by pooling the samples across different $t$ and then averaging the result. Thus, we compute only three tensors for all $t$, as opposed to computing these tensors for each $t$.

We show the details for computing the tensors ${\tilde{\bm{\mathscr{D}}}}$ in the batch form.    The derivations for other tensors ${\tilde{\bm{\mathscr{X}}}}$ and ${\tilde{\bm{\mathscr{O}}}}$ can be computed in a similar manner. Recall from \eqref{eq:DtensorObs} the form of  $\underset{\bm{\mathsf{O}}_{R_{t\hspace{-1pt}-\hspace{-1pt}1}}\bm{\mathsf{O}}_{R_{t}}}{\tilde{\bm{\mathscr{D}}}}$, and consider the following alternative expression, based on the sum over all $t$:
\begin{align}
\label{eq:efficStruct}
{\tilde{\bm{\mathscr{D}}}}= 
\left(\sum_{t}\underset{\bm{\mathsf{O}}_{L_{t-1}}\bm{\mathsf{O}}_{R_{t-1}}}{\bm{\mathscr{M}}}\right)^{-1}\times_{\bm{\mathsf{O}}_{L}}\left(\sum_{t}\underset{\bm{\mathsf{O}}_{L_{t-1}}\bm{\mathsf{O}}_{R_{t}}}{\bm{\mathscr{M}}}\right),
\end{align}
\noindent where $\bm{\mathsf{O}}_{L}$ denotes a generic mode of the averaged tensor $\bm{\mathscr{M}}$, corresponding to $\bm{\mathsf{O}}_{L_{t-1}}$ for all $t$. Note that in practice, instead of summation, we use averaging to avoid numerical overflow problems. It is equivalent to the considered expression in \eqref{eq:efficStruct}, since the term $\frac{1}{T}$ then cancels out. 
Since 
\begin{align}
\underset{\bm{\mathsf{O}}_{L_{t\hspace{-1pt}-\hspace{-1pt}1}}\bm{\mathsf{O}}_{R_{t\hspace{-1pt}-\hspace{-1pt}1}}}{\bm{\mathscr{M}}} \hspace{-3pt}=\hspace{-3pt} \underset{\bm{\mathsf{O}}_{L_{t\hspace{-1pt}-\hspace{-1pt}1}}|x_{t\hspace{-1pt}-\hspace{-1pt}1}d_{t\hspace{-1pt}-\hspace{-1pt}2}}{\bm{\mathscr{F}}}
\times_{x_{t\hspace{-1pt}-\hspace{-1pt}1}d_{t\hspace{-1pt}-\hspace{-1pt}2}}\underset{\bm{\mathsf{O}}_{R_{t\hspace{-1pt}-\hspace{-1pt}1}}|x_{t\hspace{-1pt}-\hspace{-1pt}1}d_{t\hspace{-1pt}-\hspace{-1pt}2}}{\bm{\mathscr{F}}}
\times_{x_{t\hspace{-1pt}-\hspace{-1pt}1}d_{t\hspace{-1pt}-\hspace{-1pt}2}}\underset{x_{t\hspace{-1pt}-\hspace{-1pt}1}d_{t\hspace{-1pt}-\hspace{-1pt}2}}{\bm{\mathscr{K}}},
\end{align}
the first term inside brackets can be rewritten as:
\begin{align}
\label{eq:DtensorLeft}
\sum_{t}&\underset{\bm{\mathsf{O}}_{L_{t\hspace{-1pt}-\hspace{-1pt}1}}|x_{t\hspace{-1pt}-\hspace{-1pt}1}d_{t\hspace{-1pt}-\hspace{-1pt}2}}{\bm{\mathscr{F}}}
\times_{x_{t\hspace{-1pt}-\hspace{-1pt}1}d_{t\hspace{-1pt}-\hspace{-1pt}2}}\underset{\bm{\mathsf{O}}_{R_{t\hspace{-1pt}-\hspace{-1pt}1}}|x_{t\hspace{-1pt}-\hspace{-1pt}1}d_{t\hspace{-1pt}-\hspace{-1pt}2}}{\bm{\mathscr{F}}}
\times_{x_{t\hspace{-1pt}-\hspace{-1pt}1}d_{t\hspace{-1pt}-\hspace{-1pt}2}}\underset{x_{t\hspace{-1pt}-\hspace{-1pt}1}d_{t\hspace{-1pt}-\hspace{-1pt}2}}{\bm{\mathscr{K}}}\nonumber\\
&\stackrel{(a)}{=}
\sum_{t}\underset{\bm{\mathsf{O}}_{R_{t-1}}|x_{t-1}d_{t-2}}{\bm{\mathscr{F}}}\times_{x_{t-1}d_{t-2}}\underset{\bm{\mathsf{O}}_{L_{t-1}}x_{t-1}d_{t-2}}{\overline{\bm{\mathscr{F}}}}\nonumber\\
&\stackrel{(b)}{=}
\underset{\bm{\mathsf{O}}_{R_{2}}|x_2d_{1}}{\bm{\mathscr{F}}}\times\left(\sum_{t}\underset{\bm{\mathsf{O}}_{L_{t-1}}x_{t-1}d_{t-2}}{\overline{\bm{\mathscr{F}}}}\right),
\end{align}
\noindent where in $(a)$ we combined the two factors, i.e., $\underset{\bm{\mathsf{O}}_{L_{t-1}}x_{t-1}d_{t-2}}{\overline{\bm{\mathscr{F}}}} = \underset{\bm{\mathsf{O}}_{L_{t\hspace{-1pt}-\hspace{-1pt}1}}|x_{t\hspace{-1pt}-\hspace{-1pt}1}d_{t\hspace{-1pt}-\hspace{-1pt}2}}{\bm{\mathscr{F}}}
\times_{x_{t\hspace{-1pt}-\hspace{-1pt}1}d_{t\hspace{-1pt}-\hspace{-1pt}2}}\underset{x_{t\hspace{-1pt}-\hspace{-1pt}1}d_{t\hspace{-1pt}-\hspace{-1pt}2}x_{t\hspace{-1pt}-\hspace{-1pt}1}d_{t\hspace{-1pt}-\hspace{-1pt}2}}{\bm{\mathscr{K}}}$ and in $(b)$ we used the homogeneity property of HSMM, i.e., the fact that $\underset{\bm{\mathsf{O}}_{R_{t-1}}|x_{t-1}d_{t-2}}{\bm{\mathscr{F}}}$ does not depend on time stamp $t$, and extracted one of the common factors, in fact, the first factor. Note that the term $\underset{\bm{\mathsf{O}}_{L_{t-1}}x_{t-1}d_{t-2}}{\overline{\bm{\mathscr{F}}}}$, on the other hand, does depend on $t$ since the factor $\underset{x_{t\hspace{-1pt}-\hspace{-1pt}1}d_{t\hspace{-1pt}-\hspace{-1pt}2}}{\bm{\mathscr{K}}}$, which represents the probability $p(x_{t-1},d_{t-2})$, changes as the time stamp $t$ changes.

Similarly, since
\begin{align}
\label{eq:Mterm}
\underset{\bm{\mathsf{O}}_{L_{t\hspace{-1pt}-\hspace{-1pt}1}}\bm{\mathsf{O}}_{R_{t}}}{\bm{\mathscr{M}}} = \underset{\bm{\mathsf{O}}_{L_{t\hspace{-1pt}-\hspace{-1pt}1}}|x_{t-1}d_{t-2}}{\bm{\mathscr{F}}}\times_{x_{t-1}d_{t-2}}
\underset{x_{t-1}d_{t-2}}{\bm{\mathscr{K}}}
 \times_{x_{t-1}d_{t-2}}\underset{d_{t-1}|x_{t-1}x_{t-1}d_{t-2}}{\bm{\mathscr{D}}}\times_{x_{t-1}d_{t-1}}\underset{\bm{\mathsf{O}}_{R_{t}}|x_{t-1}d_{t-1}}{\bm{\mathscr{F}}},
\end{align}
rewrite the second term in \eqref{eq:efficStruct} as
\begin{align}
\label{eq:DtensorRight}
\sum_{t}&\underset{\bm{\mathsf{O}}_{L_{t\hspace{-1pt}-\hspace{-1pt}1}}|x_{t-1}d_{t-2}}{\bm{\mathscr{F}}}\times_{x_{t-1}d_{t-2}}\underset{x_{t-1}d_{t-2}}{\bm{\mathscr{K}}}
 \times_{x_{t-1}d_{t-2}}\underset{d_{t-1}|x_{t-1}x_{t-1}d_{t-2}}{\bm{\mathscr{D}}}\times_{x_{t-1}d_{t-1}}\underset{\bm{\mathsf{O}}_{R_{t}}|x_{t-1}d_{t-1}}{\bm{\mathscr{F}}} \nonumber\\
 &= \sum_{t}\underset{\bm{\mathsf{O}}_{L_{t-1}}x_{t-1}d_{t-2}}{\overline{\bm{\mathscr{F}}}}\times_{x_{t-1}d_{t-2}}
\underset{d_{t-1}|x_{t-1}x_{t-1}d_{t-2}}{\bm{\mathscr{D}}}\times_{x_{t-1}d_{t-1}}\underset{\bm{\mathsf{O}}_{R_{t}}|x_{t-1}d_{t-1}}{\bm{\mathscr{F}}}\nonumber\\
&=\left(\sum_{t}\underset{\bm{\mathsf{O}}_{L_{t-1}}x_{t-1}d_{t-2}}{\overline{\bm{\mathscr{F}}}}\right)\times
\underset{d_{2}|x_{2}x_2d_{1}}{\bm{\mathscr{D}}}\times_{x_2d_2}\underset{\bm{\mathsf{O}}_{R_{3}}|x_2d_{2}}{\bm{\mathscr{F}}},
\end{align}
\noindent where we used the transformations similar as in \eqref{eq:DtensorLeft}, i.e., the fact that the factors $\underset{d_{t-1}|x_{t-1}x_{t-1}d_{t-2}}{\bm{\mathscr{D}}}$ and $\underset{\bm{\mathsf{O}}_{R_{t}}|x_{t-1}d_{t-1}}{\bm{\mathscr{F}}}$ are homogeneous, independent of $t$.  Now if we multiply the inverse of \eqref{eq:DtensorLeft} with \eqref{eq:DtensorRight}, we get 
\begin{align}
\label{eq:eff1}
&\underset{\bm{\mathsf{O}}_{R_{2}}|x_2d_{1}}{\bm{\mathscr{F}}^{-1}}\times\left(\sum_{t}\underset{\bm{\mathsf{O}}_{L_{t-1}}x_{t-1}d_{t-2}}{\overline{\bm{\mathscr{F}}}}\right)^{-1}\times\left(\sum_{t}\underset{\bm{\mathsf{O}}_{L_{t-1}}x_{t-1}d_{t-2}}{\overline{\bm{\mathscr{F}}}}\right)\times
\underset{d_{2}|x_{2}x_2d_{1}}{\bm{\mathscr{D}}}\times\underset{\bm{\mathsf{O}}_{R_{3}}|x_2d_{2}}{\bm{\mathscr{F}}}\\
&=\underset{\bm{\mathsf{O}}_{R_{2}}|x_2d_{1}}{\bm{\mathscr{F}}^{-1}}\times_{x_2d_{1}}\underset{d_{2}|x_{2}x_2d_{1}}{\bm{\mathscr{D}}}\times_{x_2d_{2}}\underset{\bm{\mathsf{O}}_{R_{3}}|x_2d_{2}}{\bm{\mathscr{F}}}\nonumber\\
\label{eq:eff2}
& = \underset{\bm{\mathsf{O}}_{R_{2}}\bm{\mathsf{O}}_{R_{3}}}{\tilde{\bm{\mathscr{D}}}} = \underset{\bm{\mathsf{O}}_{R_{t\hspace{-1pt}-\hspace{-1pt}1}}\bm{\mathsf{O}}_{R_{t}}}{\tilde{\bm{\mathscr{D}}}},
\end{align}
\noindent where in \eqref{eq:eff1} we used the fact that the order in which tensors are multiplied is irrelevant and also the fact that the terms in parenthesis are invertible. This is due to the fact that the set of observations $\bm{\mathsf{O}}_{L_{t-1}}$ for all $t$ is selected so as to make each of the summand invertible (see Section  \ref{sec:observationVars} for the details about the choice of $\bm{\mathsf{O}}_{L_{t-1}}$). Moreover, in \eqref{eq:eff2} we used the definition of   $\underset{\bm{\mathsf{O}}_{R_{t-1}}\bm{\mathsf{O}}_{R_{t}}}{\tilde{\bm{\mathscr{D}}}}$
\begin{align*}
\underset{\bm{\mathsf{O}}_{R_{t\hspace{-1pt}-\hspace{-1pt}1}}\bm{\mathsf{O}}_{R_{t}}}{\tilde{\bm{\mathscr{D}}}}\hspace{-2pt}=\hspace{-2pt} \underset{\bm{\mathsf{O}}_{R_{t\hspace{-1pt}-\hspace{-1pt}1}}|x_{t\hspace{-1pt}-\hspace{-1pt}1}d_{t\hspace{-1pt}-\hspace{-1pt}2}}{\bm{\mathscr{F}}^{-1}}\hspace{-2pt}\times\underset{d_{t\hspace{-1pt}-\hspace{-1pt}1}|x_{t\hspace{-1pt}-\hspace{-1pt}1}d_{t\hspace{-1pt}-\hspace{-1pt}2}}{\bm{\mathscr{D}}}\hspace{-3pt}\times\underset{\bm{\mathsf{O}}_{R_{t}}|x_{t\hspace{-1pt}-\hspace{-1pt}1}d_{t\hspace{-1pt}-\hspace{-1pt}1}}{\bm{\mathscr{F}}},
\end{align*}
together with the homogeneity property of HSMM.
%
 
\noindent Therefore, we can conclude that the batch form of the tensor takes the form:
\begin{align}
\label{eq:DtensorAcc}
\tilde{\bm{\mathscr{D}}}
 =\left(\sum_{t}\underset{\bm{\mathsf{O}}_{L_{t-1}}\bm{\mathsf{O}}_{R_{t-1}}}{\bm{\mathscr{M}}}\right)^{-1}\times_{\bm{\mathsf{O}}_{L}}\left(\sum_{t}\underset{\bm{\mathsf{O}}_{L_{t-1}}\bm{\mathsf{O}}_{R_{t}}}{\bm{\mathscr{M}}}\right).
\end{align} 
 
%
%

\begin{algorithm}[!t]
   \caption{Efficient Spectral Algorithm for HSMM inference}
   \label{alg3}
\begin{algorithmic}
   \STATE {\bfseries Input:} Training sequences: $\mathbf{S}^{i} = \{o_1^i,\ldots,o_{T_i}^i\}, i=1,\ldots,N$.\\ Testing sequence: $\mathbf{S}^{test} = \{o_1^{test},\ldots,o_{T}^{test}\}$.   
   \STATE {\bfseries Output:} $p(\mathbf{S}^{test})$
   \STATE
   \STATE {\bfseries Learning phase:}
   \STATE Estimate $\tilde{\bm{\mathscr{D}}}, \tilde{\bm{\mathscr{X}}}$ and $\tilde{\bm{\mathscr{O}}}$ from data $\{\mathbf{S}^1,\ldots,\mathbf{S}^N\}$ using equations \eqref{eq:DtensorAcc}, \eqref{eq:XtensorAcc} and \eqref{eq:OtensorAcc}.
   \STATE
   \STATE {\bfseries Inference phase:}
   \STATE $p(\mathbf{S}^{test}) = 1$
   \FOR {$i=T$ {\bfseries down to} $i=1$}
   \STATE $p(\mathbf{S}^{test}) = p(\mathbf{S}^{test}) \times\tilde{\bm{\mathscr{D}}} \times \left(\tilde{\bm{\mathscr{X}}}\times\tilde{\bm{\mathscr{O}}}|_{o=o_i^{test}}\right)$
   \ENDFOR
\end{algorithmic}
\end{algorithm}

\noindent Similar derivations can be carried out to obtain the rest of the tensors in the batch form:
\begin{align}
\label{eq:XtensorAcc}
\tilde{\bm{\mathscr{X}}} &= \left(\sum_{t}\underset{\bm{\mathsf{O}}_{L_{t}}\bm{\mathsf{O}}_{R_{t}}}{\bm{\mathscr{M}}}\right)^{-1}\times_{\bm{\mathsf{O}}_{L}}\left(\sum_{t}\underset{\bm{\mathsf{O}}_{L_{t}}\bm{\mathsf{O}}_{R_{t}}o_t}{\bm{\mathscr{M}}}\right)\\
\label{eq:OtensorAcc}
\tilde{\bm{\mathscr{O}}} &= \left(\sum_{t}\underset{o_to_{t+1}}{\bm{\mathscr{M}}}\right)^{-1}\times_{o}\left(\sum_{t}\underset{o_to_{t+1}}{\bm{\mathscr{M}}}\right).
\end{align}

\noindent where in the last expression the mode $o$ corresponds to the mode $o_{t_{t+1}}$ after averaging of tensor $\underset{o_to_{t+1}}{\bm{\mathscr{M}}}$ for all~$t$.

Analyzing \eqref{eq:DtensorAcc}, \eqref{eq:XtensorAcc} and \eqref{eq:OtensorAcc}, we see that the computational complexity of the learning phase of the algorithm is now $\mathcal{O}\left((n_o^{2\ell} + \ell N)T\right)$, mainly determined by the tensor additions and the estimation of the sample moments $\bm{\mathscr{M}}$.  The number of inverses and multiplications is now fixed and independent of sequence length $T$. Specifically, there will be three tensor multiplications and inversions for a total cost of $\mathcal{O}(n_o^{3\ell})$. The computational complexity of the inference phase is $\mathcal{O}(n_o^{3\ell}T)$, which is the same as for Algorithm \ref{algSpectralBasic}. 

Note that such a batch tensor computation significantly improves the accuracy of the resulting spectral algorithm. In part, this is due to the fact that we now use more data to estimate the tensors as compared to the original form \eqref{eq:mesTensorObserv}. The estimates obtained in this form have lower variance, which in turn ensures that the inverses we compute in \eqref{eq:DtensorAcc}, \eqref{eq:XtensorAcc} and \eqref{eq:OtensorAcc} are more stable and accurate.


\section{Rank Analysis of Observable Tensors}
\label{sec:observationVars}
In Section \ref{sec:compD}, when we derived the equations \eqref{eq:DtensorObs}, \eqref{eq:tensorXobs} and \eqref{eq:tensorOobs}, we glossed over the question of the existence of tensor inverses $\underset{\bm{\mathsf{O}}_{L_{t\hspace{-1pt}-\hspace{-1pt}1}}\bm{\mathsf{O}}_{R_{t\hspace{-1pt}-\hspace{-1pt}1}}}{\bm{\mathscr{M}}^{-1}}$, $\underset{\bm{\mathsf{O}}_{L_{t}}\bm{\mathsf{O}}_{R_{t}}}{\bm{\mathscr{M}}^{-1}}$ and $\underset{o_to_{t+1}}{\bm{\mathscr{M}}^{-1}}$. In this section, our task is to analyze the rank structure of these tensors and impose restrictions on the sets $\bm{\mathsf{O}}_{L}$ and $\bm{\mathsf{O}}_{R}$ to ensure that the rank conditions are satisfied. For example, consider equation \eqref{eq:DtensorObs} and expand all its terms using \eqref{eq:tensCondInd} to get
\begin{align*}
\underset{\bm{\mathsf{O}}_{R_{t\hspace{-1pt}-\hspace{-1pt}1}}\bm{\mathsf{O}}_{R_{t}}}{\tilde{\bm{\mathscr{D}}}} \hspace{-0pt}=\hspace{-0pt}
\underset{\bm{\mathsf{O}}_{R_{t\hspace{-1pt}-\hspace{-1pt}1}}|x_{t\hspace{-1pt}-\hspace{-1pt}1}d_{t\hspace{-1pt}-\hspace{-1pt}2}}{\bm{\mathscr{F}}^{-1}}\hspace{-0pt}\times\hspace{-0pt}\overbracket{\underset{\bm{\mathsf{O}}_{L_{t\hspace{-1pt}-\hspace{-1pt}1}}|x_{t\hspace{-1pt}-\hspace{-1pt}1}d_{t\hspace{-1pt}-\hspace{-1pt}2}}{\bm{\mathscr{F}}^{-1}}\hspace{-0pt}\times\hspace{-0pt}\underbracket{\underset{x_{t\hspace{-1pt}-\hspace{-1pt}1}d_{t\hspace{-1pt}-\hspace{-1pt}2}}{\bm{\mathscr{K}}^{-1}}\hspace{-0pt}\times\hspace{-0pt} \underset{x_{t\hspace{-1pt}-\hspace{-1pt}1}d_{t\hspace{-1pt}-\hspace{-1pt}2}}{\bm{\mathscr{K}}}}\hspace{-2pt}\times\hspace{-2pt} \underset{\bm{\mathsf{O}}_{L_{t\hspace{-1pt}-\hspace{-1pt}1}}|x_{t\hspace{-1pt}-\hspace{-1pt}1}d_{t\hspace{-1pt}-\hspace{-1pt}2}}{\bm{\mathscr{F}}}}
\hspace{-3pt}\times\hspace{-0pt}\underset{d_{t\hspace{-1pt}-\hspace{-1pt}1}|x_{t\hspace{-1pt}-\hspace{-1pt}1}x_{t\hspace{-1pt}-\hspace{-1pt}1}d_{t\hspace{-1pt}-\hspace{-1pt}2}}{\bm{\mathscr{D}}}\hspace{-0pt}\times\hspace{-0pt}\underset{\bm{\mathsf{O}}_{R_{t}}|x_{t\hspace{-1pt}-\hspace{-1pt}1}d_{t\hspace{-1pt}-\hspace{-1pt}1}}{\bm{\mathscr{F}}},
\end{align*}
\noindent where we dropped the multiplication subscripts and some of the duplicated modes, which can be inferred from the context. Observe, that in order for the above equation to produce \eqref{eq:tensorD}, the terms in the middle must multiply out into identity tensor
\begin{align}
\label{eq:ident1}
\underset{x_{t-1}d_{t-2}}{\bm{\mathscr{I}}} = \underset{x_{t\hspace{-1pt}-\hspace{-1pt}1}d_{t\hspace{-1pt}-\hspace{-1pt}2}}{\bm{\mathscr{K}}^{-1}}\times_{x_{t\hspace{-1pt}-\hspace{-1pt}1}d_{t\hspace{-1pt}-\hspace{-1pt}2}}
\underset{x_{t-1}d_{t-2}}{\bm{\mathscr{K}}}
\quad\quad
\underset{x_{t-1}d_{t-2}}{\bm{\mathscr{I}}} =\underset{\bm{\mathsf{O}}_{L_{t\hspace{-1pt}-\hspace{-1pt}1}}|x_{t\hspace{-1pt}-\hspace{-1pt}1}d_{t\hspace{-1pt}-\hspace{-1pt}2}}{\bm{\mathscr{F}}^{-1}}\times_{\bm{\mathsf{O}}_{L_{t\hspace{-1pt}-\hspace{-1pt}1}}} \underset{\bm{\mathsf{O}}_{L_{t\hspace{-1pt}-\hspace{-1pt}1}}|x_{t-1}d_{t-2}}{\bm{\mathscr{F}}}.
\end{align}
\noindent Moreover, recall that $\underset{\bm{\mathsf{O}}_{R_{t\hspace{-1pt}-\hspace{-1pt}1}}|x_{t\hspace{-1pt}-\hspace{-1pt}1}d_{t\hspace{-1pt}-\hspace{-1pt}2}}{\bm{\mathscr{F}}}$ was originally introduced as part of the identity tensor
\begin{align}
\label{eq:ident3}
\underset{x_{t\hspace{-1pt}-\hspace{-1pt}1}d_{t\hspace{-1pt}-\hspace{-1pt}2}}{\bm{\mathscr{I}}} = \underset{\bm{\mathsf{O}}_{R_{t\hspace{-1pt}-\hspace{-1pt}1}}|x_{t\hspace{-1pt}-\hspace{-1pt}1}d_{t\hspace{-1pt}-\hspace{-1pt}2}}{\bm{\mathscr{F}}^{-1}}\times_{\bm{\mathsf{O}}_{R_{t\hspace{-1pt}-\hspace{-1pt}1}}}
\underset{\bm{\mathsf{O}}_{R_{t\hspace{-1pt}-\hspace{-1pt}1}}|x_{t\hspace{-1pt}-\hspace{-1pt}1}d_{t\hspace{-1pt}-\hspace{-1pt}2}}{\bm{ \mathscr{F}}},
\end{align}
\noindent therefore, we can conclude that for \eqref{eq:DtensorObs} to exist, the identity statements in \eqref{eq:ident1} and \eqref{eq:ident3} must be satisfied. These statements have implications for the ranks of $\underset{x_{t\hspace{-1pt}-\hspace{-1pt}1}d_{t\hspace{-1pt}-\hspace{-1pt}2}}{\bm{\mathscr{K}}}$, $\underset{\bm{\mathsf{O}}_{L_{t\hspace{-1pt}-\hspace{-1pt}1}}|x_{t\hspace{-1pt}-\hspace{-1pt}1}d_{t\hspace{-1pt}-\hspace{-1pt}2}}{\bm{\mathscr{F}}}$ and $\underset{\bm{\mathsf{O}}_{R_{t\hspace{-1pt}-\hspace{-1pt}1}}|x_{t\hspace{-1pt}-\hspace{-1pt}1}d_{t\hspace{-1pt}-\hspace{-1pt}2}}{\bm{\mathscr{F}}}$, which in turn determine the length of the observation sequences $\bm{\mathsf{O}}_{L_{t\hspace{-1pt}-\hspace{-1pt}1}}$ and $\bm{\mathsf{O}}_{R_{t\hspace{-1pt}-\hspace{-1pt}1}}$.

Since $\underset{x_{t\hspace{-1pt}-\hspace{-1pt}1}d_{t\hspace{-1pt}-\hspace{-1pt}2}}{\bm{\mathscr{K}}}$ represents a distribution $p(x_{t-1}d_{t-2})$, its matrisized version is a diagonal matrix with probability $p(x_{t-1}d_{t-2})$ on the diagonal.  Using  assumptions $A1$ and $A2$, it can be concluded that the diagonal elements in this matrix are non-zero and it has rank $n_xn_d$, it is thus invertible and so the first equation in \eqref{eq:ident1} is satisfied.

Next, consider the second equation in \eqref{eq:ident1} and recall from Section \ref{sec:notation} that if we matrisize the tensor as $\underset{\bm{\mathsf{O}}_{L_{t\hspace{-1pt}-\hspace{-1pt}1}}|x_{t\hspace{-1pt}-\hspace{-1pt}1}d_{t\hspace{-1pt}-\hspace{-1pt}2}}{{ \mathbf{F}}} \in \mathbb{R}^{n_o^{|\bm{\mathsf{O}}_{L_{t\hspace{-1pt}-\hspace{-1pt}1}}|}\times n_xn_d}$ then $\mathbf{F}$ must have full column rank $n_xn_d$ for the proper inverse to exist, implying $n_o^{|\bm{\mathsf{O}}_{L_{t\hspace{-1pt}-\hspace{-1pt}1}}|} \geq n_xn_d$. Similarly, $\underset{\bm{\mathsf{O}}_{R_{t\hspace{-1pt}-\hspace{-1pt}1}}|x_{t\hspace{-1pt}-\hspace{-1pt}1}d_{t\hspace{-1pt}-\hspace{-1pt}2}}{\bm{\mathscr{F}}}$ in \eqref{eq:ident3} must have rank $n_xn_d$. As a consequence of the above, the tensor 
\begin{align}
\underset{\bm{\mathsf{O}}_{L_{t\hspace{-1pt}-\hspace{-1pt}1}}\bm{\mathsf{O}}_{R_{t\hspace{-1pt}-\hspace{-1pt}1}}}{\bm{\mathscr{M}}} \hspace{-3pt}=\hspace{-3pt} \underset{\bm{\mathsf{O}}_{L_{t\hspace{-1pt}-\hspace{-1pt}1}}|x_{t\hspace{-1pt}-\hspace{-1pt}1}d_{t\hspace{-1pt}-\hspace{-1pt}2}}{\bm{\mathscr{F}}}
\times\underset{\bm{\mathsf{O}}_{R_{t\hspace{-1pt}-\hspace{-1pt}1}}|x_{t\hspace{-1pt}-\hspace{-1pt}1}d_{t\hspace{-1pt}-\hspace{-1pt}2}}{\bm{\mathscr{F}}}
\times\underset{x_{t\hspace{-1pt}-\hspace{-1pt}1}d_{t\hspace{-1pt}-\hspace{-1pt}2}}{\bm{\mathscr{K}}}
\end{align}

\noindent will have rank $n_xn_d$ and, in general, is rank-deficient.

The argument above can also be used  to show that $\underset{\bm{\mathsf{O}}_{L_{t}}\bm{\mathsf{O}}_{R_{t}}}{\bm{\mathscr{M}}}$ has rank $n_xn_d$ since in \eqref{eq:tensCondInd2} the tensors $\underset{x_{t-1}d_{t-1}}{\bm{\mathscr{K}}}$, $\underset{\bm{\mathsf{O}}_{L_{t}}|x_{t-1}d_{t-1}}{\bm{\mathscr{F}}}$ and $\underset{\bm{\mathsf{O}}_{R_{t}}|x_{t\hspace{-1pt}-\hspace{-1pt}1}d_{t\hspace{-1pt}-\hspace{-1pt}1}}{\bm{ \mathscr{F}}}$ all have rank $n_xn_d$. Similarly, $\underset{o_to_{t+1}}{\bm{\mathscr{M}}}$ will have rank $n_x$ because in \eqref{eq:tensorOobs} the rank of the participating tensors $\underset{x_t}{\bm{\mathscr{K}}}$, $\underset{o_{t+1}|x_t}{\bm{\mathscr{F}}}$ and $\underset{o_{t}|x_t}{\bm{\mathscr{F}}}$ is $n_x$. In particular, note that the tensor $\underset{o_{t}|x_t}{\bm{\mathscr{F}}}$ is the observation matrix $O \in \mathbb{R}^{n_o\times n_x}$ of the model and it has rank $n_x$ according to assumption $A3$. This conclusion also justifies our choice for $\omega_{x_t} = o_{t}$ at the end of Section \ref{sec:obsTensRepresentation}.

The key unknowns now are the sets of the observed variables $\bm{\mathsf{O}}_{R}$ and $\bm{\mathsf{O}}_{L}$ that must be appropriately selected for the corresponding tensors to have rank $n_xn_d$. Recall that we defined $\bm{\mathsf{O}}_{R_{t-1}} = \{o_{t}, o_{t+1}, \ldots\}$. As one of the new key results of our work, we established that if we select the observations $o_t$ non-sequentially with gaps that grow exponentially with the state size $n_x$ then the following result holds for all $t$:
\begin{theorem}
\label{mainTheorem}
Let the number of observations be $|\bm{\mathsf{O}}_{R_{t-1}}| = \ell$ and define the set of indices \\$\mathscr{S} = \left\{\max\left[t, ~t\hspace{-1pt}+\hspace{-1pt}(n_d\hspace{-1pt}-\hspace{-1pt}1)\hspace{-1pt} -\hspace{-1pt} (n_x^i\hspace{-1pt}-\hspace{-1pt}1)\right] ~|~ i=0,\ldots, \ell-1\right\}$, such that $\bm{\mathsf{O}}_{R_{t-1}} = \{o_k | k \in \mathscr{S}\}$ then the rank of tensor  $\underset{\bm{\mathsf{O}}_{R_{{t-1}}}|x_{t-1}d_{t-2}}{\bm{\mathscr{F}}}$ is $\min[n_x^{\ell}, ~n_xn_d]$.
\end{theorem}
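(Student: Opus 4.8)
The plan is to convert the claim about $\bm{\mathscr{F}}$ into a statement purely about hidden states. Matrisizing, $\mathbf{F}\in\mathbb{R}^{n_o^{\ell}\times n_xn_d}$ holds the conditional law $p(\bm{\mathsf{O}}_{R_{t-1}}\mid x_{t-1}d_{t-2})$. Because each observation $o_k$ is emitted through $x_k$ alone and the emissions are conditionally independent given the states, I would factor $\mathbf{F}=O^{\otimes\ell}\,\mathbf{G}$, where $\mathbf{G}\in\mathbb{R}^{n_x^{\ell}\times n_xn_d}$ collects the joint state probabilities $p(\{x_k\}_{k\in\mathscr{S}}\mid x_{t-1}d_{t-2})$ at the chosen indices. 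Assumption $A3$ makes $O$ full column rank, so $O^{\otimes\ell}$ has full column rank $n_x^{\ell}$ and left-multiplication by it preserves rank; hence it suffices to show $\mathrm{rank}(\mathbf{G})=\min[n_x^{\ell},\,n_xn_d]$. The upper bound is then immediate, since $\mathbf{G}$ has $n_xn_d$ columns and $n_x^{\ell}$ rows.

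For the lower bound I would use the down-counter semantics encoded in \eqref{durProb}--\eqref{transProb}: conditioned on $(x_{t-1},d_{t-2}=k)$, the state is pinned to $x_{t-1}$ deterministically until a transition time $T_k$ that is strictly increasing in $k$, after which a fresh state is drawn from the full-rank kernel $X$ with a fresh duration from $D$. This is cleanly captured by viewing $z_\tau=(x_\tau,d_{\tau-1})$ as a homogeneous Markov chain on $n_xn_d$ states whose transition operator is invertible under $A1$ and $A2$, with a rank-$n_x$ emission that reads off the $x$-coordinate. In this language each row of $\mathbf{G}$, fixing a target state pattern $(b_1,\dots,b_\ell)$ at the indices of $\mathscr{S}$, is the function $z_{t-1}\mapsto p(x_{k_1}=b_1,\dots,x_{k_\ell}=b_\ell\mid z_{t-1})$, a product of powers of the transition operator interleaved with coordinate selectors.

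The core of the argument is to show these functions span a space of dimension $\min[n_x^{\ell},n_xn_d]$, and here the exponential spacing $k_i=t+(n_d-1)-(n_x^{i}-1)$ is essential. I would induct on $\ell$, proving that appending the next, exponentially further-left observation multiplies the number of resolvable $(x_{t-1},d_{t-2})$ configurations by exactly $n_x$ until the count saturates at $n_xn_d$. The gap $n_x^{i}(n_x-1)$ between consecutive indices is exactly what is needed so that, within each block of durations left unresolved by the first $i$ observations, the $(i{+}1)$-st observation straddles the corresponding transition times $T_k$ and, via the full-rank fresh draw from $X$, splits that block into $n_x$ linearly independent sub-distributions. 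Concretely I would assemble a $\min[n_x^{\ell},n_xn_d]\times\min[n_x^{\ell},n_xn_d]$ submatrix of $\mathbf{G}$ indexed by configurations and canonical state patterns, ordered by duration and by the first index at which the persisting run ends, and show it is block triangular with nonsingular diagonal blocks supplied by $X$ and $D$.

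The main obstacle is this lower-bound bookkeeping: verifying that the exponential gaps are simultaneously large enough to place a discriminating observation near every transition time $T_1<\cdots<T_{n_d}$ and small enough to keep the pattern count within the $n_x^{\ell}$ budget, so that the induction neither over- nor under-counts. Two wrinkles need care. First, the clipping $\max[t,\cdot]$ merges indices at $t$ only once $i>\log_{n_x}n_d$, that is only after the rank has already saturated at $n_xn_d$, so the coincident indices cost nothing. Second, the boundary case $d_{t-2}=1$, where the first transition is itself stochastic, which I would absorb into the same full-rank argument by treating it as an immediate fresh draw from $X$ and $D$.
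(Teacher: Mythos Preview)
Your initial reduction---factoring $\mathbf{F}=O^{\otimes\ell}\mathbf{G}$ through the emission matrix and reducing to the rank of the hidden-state conditional $\mathbf{G}$---matches the paper exactly (the paper writes $\mathbf{T}$ for your $\mathbf{G}$, and also inserts one extra full-rank step to pass from conditioning on $x_{t-1}d_{t-2}$ to $x_td_t$). Where you diverge is in the rank analysis of $\mathbf{G}$ itself. The paper does not exhibit a block-triangular submatrix. Instead it builds $\mathbf{T}$ recursively via two matrix operations: a \emph{propagation} step, right-multiplication by the full-rank one-step transition $\mathbf{V}$ of the augmented chain $(x,d)$, and an \emph{expansion} step, a Khatri--Rao product $\mathbf{T}\odot\mathbf{E}$ with $\mathbf{E}=[\mathbf{I}\cdots\mathbf{I}]$, which corresponds to adjoining a new observation index. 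The rank bookkeeping then rests on two short lemmas: one says the Khatri--Rao step raises the rank to $\sum_l r_l$, where $r_l$ is the rank of the matrix formed by the $l$-th columns of each $n_x\times n_x$ block of $\mathbf{T}$; the other (the workhorse) says that a linear combination of independent vectors with all nonzero coefficients is independent of any strict subset of those vectors, which guarantees that each propagation step increases every $r_l$ by one. Iterating propagation $n_x^{c}-n_x^{c-1}$ times before each expansion is exactly what produces the exponential spacing in $\mathscr{S}$ and the factor-of-$n_x$ rank gain per added observation.

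Your block-triangular plan is a plausible alternative, but as written it has a gap. After the first, deterministic, transition at time $T_k$, all subsequent state and duration draws are stochastic, so the columns of $\mathbf{G}$ indexed by different $(x_{t-1},k)$ do not separate cleanly according to which observation index ``straddles'' a transition: every column is a mixture over many transition-time patterns weighted by $D$. The ordering you propose (by duration and by the first index at which the persisting run ends) therefore does not obviously yield zero blocks below the diagonal, and the ``nonsingular diagonal blocks supplied by $X$ and $D$'' are not cleanly isolated. What actually drives the independence in the paper is the algebraic fact in the subset-independence lemma applied to the explicit shift-plus-mix structure of $\mathbf{V}$, not a combinatorial block structure; your sketch would need an analogue of that lemma (or a much more careful choice of row/column indices) to close.
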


As a consequence of this result, to achieve the rank $n_xn_d$ we will require $\ell = \lceil 1 + \frac{\log n_d}{\log n_x}\rceil$ observations, since we need to ensure $n_x^\ell \geq n_xn_d$ and we want the minimal $\ell$ which satisfies this.  The span of the selected observations is $n_d$, while their number is only logarithmic in $n_d$. For example, consider the estimation of tensor $\underset{\bm{\mathsf{O}}_{L_{t-1}}\bm{\mathsf{O}}_{R_{t-1}}}{\bm{\mathscr{M}}}$ for an HSMM with $n_x=3$ and $n_d=20$. In this case $\ell = 4$ and $\bm{\mathsf{O}}_{R_{t-1}} = \{o_{t}, o_{t+11}, o_{t+17}, o_{t+19}\}$ and $\bm{\mathsf{O}}_{L_{t-1}} = \{o_{t-21}, o_{t-19}, o_{t-13}, o_{t-2}\}$, where the set $\bm{\mathsf{O}}_{L_{t-1}}$ is defined similar to $\bm{\mathsf{O}}_{R_{t-1}}$ in Theorem \ref{mainTheorem} but for the indices to the left of time stamp $t-1$.  Figure \ref{fig:obs} illustrates this example. We note that the requirement for the span of the selected observations to be $n_d$, which is a maximum state persistence, is to ensure that for a given time stamp $t$, we select the observations far enough to the right and left of it so that those observations are likely to be sampled from  different hidden states.

\begin{figure}[!t]
\centering
   \includegraphics[width=0.9\textwidth]{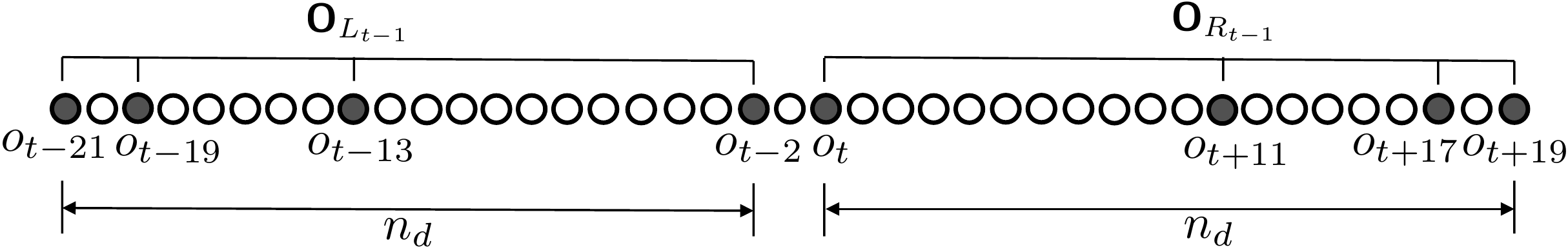}
   \caption{Observations required to estimate $\underset{\bm{\mathsf{O}}_{L_{t-1}}\bm{\mathsf{O}}_{R_{t-1}}}{\bm{\mathscr{M}}}$ from data for HSMM with $n_x=3$ and $n_d=20$.}
   \label{fig:obs}
\end{figure}

In order to prove the above Theorem, we will focus our analysis on the tensor $\underset{\bm{\mathsf{O}}_{R_{t+1}}|x_td_{t}}{\bm{\mathscr{F}}}$ instead of the tensor $\underset{\bm{\mathsf{O}}_{R_{{t-1}}}|x_{t-1}d_{t-2}}{\bm{\mathscr{F}}}$. This specific choice was only done to ensure the compactness in our notations, however the HSMM homogeneity property enables us to transfer this result for tensors for any $t$. Note that 
\begin{align*}
\underset{\bm{\mathsf{O}}_{R_{t+1}}|x_td_t}{\bm{\mathscr{F}}} = 
\underset{\bm{\mathsf{O}}_{R_{t-1}}|x_{t\hspace{-1pt}-\hspace{-1pt}2}d_{t\hspace{-1pt}-\hspace{-1pt}2}}{\bm{\mathscr{F}}} = 
 \underset{\bm{\mathsf{O}}_{R_{t-1}}|x_{t\hspace{-1pt}-\hspace{-1pt}1}d_{t\hspace{-1pt}-\hspace{-1pt}2}}{\bm{\mathscr{F}}} \times_{x_{t\hspace{-1pt}-\hspace{-1pt}1}d_{t\hspace{-1pt}-\hspace{-1pt}2}} \underset{x_{t\hspace{-1pt}-\hspace{-1pt}1}d_{t\hspace{-1pt}-\hspace{-1pt}2}|x_{t\hspace{-1pt}-\hspace{-1pt}2}d_{t\hspace{-1pt}-\hspace{-1pt}2}}{\bm{\mathscr{X}}},
\end{align*}
\noindent where the first equality is due to the homogeneity property of the model and in the second equality we embedded the HSMM transition matrix into tensor $\underset{x_{t\hspace{-1pt}-\hspace{-1pt}1}d_{t\hspace{-1pt}-\hspace{-1pt}2}|x_{t\hspace{-1pt}-\hspace{-1pt}2}d_{t\hspace{-1pt}-\hspace{-1pt}2}}{\bm{\mathscr{X}}}$ with mode $d_{t-2}$ duplicated. It can be shown that the matricized tensor $\underset{x_{t\hspace{-1pt}-\hspace{-1pt}1}d_{t\hspace{-1pt}-\hspace{-1pt}2}|x_{t\hspace{-1pt}-\hspace{-1pt}2}d_{t\hspace{-1pt}-\hspace{-1pt}2}}{{\mathbf{X}}} \in \mathbb{R}^{n_xn_d\times n_xn_d}$ has rank $n_xn_d$, i.e., it is full rank. Therefore, the rank structure of $\underset{\bm{\mathsf{O}}_{R_{t+1}}|x_td_t}{\bm{\mathscr{F}}}$ determines the rank structure of  $\underset{\bm{\mathsf{O}}_{R_{t-1}}|x_{t\hspace{-1pt}-\hspace{-1pt}1}d_{t\hspace{-1pt}-\hspace{-1pt}2}}{\bm{\mathscr{F}}}$.

The rest of Section \ref{sec:observationVars} is devoted to the proof of Theorem \ref{mainTheorem}. We first establish the rank structure of tensor $\underset{\bm{\mathsf{O}}_{R_{t+1}}|x_{t}d_{t}}{\bm{\mathscr{F}}}$ for sequential set of observations $\bm{\mathsf{O}}_{R_{t+1}}$ and then analyze the rank structure for the observations which were selected non-sequentially.

\subsection[Rank Structure of Tensor F]{Rank Structure of Tensor $\underset{\bm{\mathsf{O}}_{R_{t+1}}|x_{t}d_{t}}{\bm{\mathscr{F}}}$}
\label{sec:factorRepres}

Define by $\bm{\mathsf{X}}_{R_{t+1}} = \{x_{t+2}, x_{t+3}, \ldots\}$, the sequence of hidden states corresponding to observations $\bm{\mathsf{O}}_{R_{t+1}} = \{o_{t+2}, o_{t+3}, \ldots\}$. Then using conditional independence property of the graphical model in Figure \ref{fig:hsmm}, namely, that the variables $\bm{\mathsf{O}}_{R_{t+1}}$ and $x_td_{t}$ are independent given $\bm{\mathsf{X}}_{R_{t+1}}$, we can write:
\begin{align}
\label{eq:factorF}
\underset{\bm{\mathsf{O}}_{R_{t+1}}|x_td_{t}}{\bm{\mathscr{F}}} = \underset{\bm{\mathsf{O}}_{R_{t+1}}|\bm{\mathsf{X}}_{R_{t+1}}}{\bm{\mathscr{Q}}}\times\underset{\bm{\mathsf{X}}_{R_{t+1}}|x_td_t}{\bm{\mathscr{T}}},
\end{align}
\noindent for some tensors $\bm{\mathscr{Q}}$ and $\bm{\mathscr{T}}$, representing the appropriate probability distributions.

Denoting $\ell = |\bm{\mathsf{O}}_{R_{t+1}}| = |\bm{\mathsf{X}}_{R_{t+1}}|$, it can be verified, that the matrisized form of $\bm{\mathscr{Q}}$ in \eqref{eq:factorF} can be written as $\mathbf{Q} = \otimes_{\ell}O \in \mathbb{R}^{n_o^\ell\times n_x^\ell}$, i.e., a Kronecker product of the observation matrix $O$ with itself $\ell$ times. According to the assumption $A3$, $rank(O) = n_x$ and $n_x \leq n_o$, and using the rank property of the Kronecker product, we infer that $rank(\mathbf{Q}) = n_x^\ell$.


Combining the above conclusion with the fact that the matrisized form of the other two tensors in \eqref{eq:factorF} is $\mathbf{F} \in \mathbb{R}^{n_o^{\ell} \times n_xn_d}$ and $\mathbf{T} \in \mathbb{R}^{n_x^{\ell} \times n_xn_d}$, to ensure invertibility of $\bm{\mathscr{F}}$, we need to select a set of variables $\bm{\mathsf{X}}_{R_{t+1}}$ so that $rank\Big(\underset{\bm{\mathsf{X}}_{R_{t+1}}|x_td_t}{\mathbf{T}}\Big) = n_xn_d$ with the condition that $n_x^{\ell}\geq n_xn_d$.
Thus, the problem of the analysis of the rank structure of tensor $\underset{\bm{\mathsf{O}}_{R_{t+1}}|x_{t}d_{t}}{\bm{\mathscr{F}}}$ translates to the problem of rank structure of matrix $\underset{\bm{\mathsf{X}}_{R_{t+1}}|x_td_t}{\mathbf{T}}$. In what follows, we assume that $\bm{\mathsf{X}}_{R_{t+1}} = \{x_{t+2},\ldots,x_{t+\ell+1}\}$ are sequential and so we would be interested in determining $\ell$ which makes $rank\Big(\underset{\bm{\mathsf{X}}_{R_{t+1}}|x_td_t}{\mathbf{T}}\Big) = n_xn_d$. Later, the sequential assumption will be removed and we show how to select such variables in a more efficient way.

\subsubsection{Computation of Factor T}
In order to study the rank structure of $\underset{\bm{\mathsf{X}}_{R_{t+1}}|x_td_t}{\mathbf{T}}$ we will have to understand the mechanism how this matrix is constructed and how the rank changes as the size of $\bm{\mathsf{X}}_{R_{t+1}}$ increases.
We start by considering the following conditional independence relationships from the model in Figure \ref{fig:hsmm}:
\begin{align}
\label{example}
p(x_{t+3}, x_{t+2}|x_{t+1},d_{t+1}) &=\sum_{d_{t+2}}p(x_{t+3}|x_{t+2},d_{t+2})\underbracket{p(d_{t+2}|x_{t+2},d_{t+1})p(x_{t+2}|x_{t+1},d_{t+1})}\\
\label{example2}
p(x_{t+3}, x_{t+2}, x_{t+1}|x_t,d_t) &=\sum_{d_{t+1}}p(x_{t+3}, x_{t+2}|x_{t+1},d_{t+1})\underbracket{p(d_{t+1}|x_{t+1},d_t)p(x_{t+1}|x_{t},d_{t})}.
\end{align}

\noindent Using the model's homogeneity property, we see that the quantity underlined in \eqref{example} is the same as the one in \eqref{example2}. Moreover, equation \eqref{example} can then be thought of as transforming $p(x_{t+1}|x_{t},d_{t})$ into $p(x_{t+2}, x_{t+1}|x_{t},d_{t})$, while the expression in \eqref{example2} is, in effect, transforms $p(x_{t+2}, x_{t+1}|x_{t},d_{t})$ into $p(x_{t+3}, x_{t+2}, x_{t+1}|x_{t},d_{t})$. Thus \eqref{example} and \eqref{example2} encode the following chain of transformations:
\begin{align*}
p(x_{t+1}|x_{t},d_{t}) \rightarrow p(x_{t+2}, x_{t+1}|x_{t},d_{t}) \rightarrow p(x_{t+3}, x_{t+2}, x_{t+1}|x_{t},d_{t}).
\end{align*}

Based on the above considerations, we can rewrite \eqref{example} and \eqref{example2} in the tensor form as follows:
\begin{align}
\label{Texampl}
\underset{x_{t+3}, x_{t+2}|x_{t+1},d_{t+1}}{\bm{\mathscr{T}}} &= \underset{x_{t+3},x_{t+2}|x_{t+2},d_{t+2}}{\bm{\mathscr{T}}} \times_{x_{t+2}d_{t+2}} \underset{x_{t+2},d_{t+2}|x_{t+1}d_{t+1}}{{\bm{\mathscr{V}}}}\\
\label{Texampl2}
\underset{x_{t+3}, x_{t+2}, x_{t+1}|x_{t},d_{t}}{\bm{\mathscr{T}}} &= \underset{x_{t+3}, x_{t+2},x_{t+1}|x_{t+1},d_{t+1}}{\bm{\mathscr{T}}} \times_{x_{t+1}d_{d+1}} \underset{x_{t+1},d_{t+1}|x_td_t}{\bm{\mathscr{V}}},
\end{align}
\noindent where $\underset{x_{t+2},d_{t+2}|x_{t+1},d_{t+1}}{\bm{\mathscr{V}}} = \underset{x_{t+1},d_{t+1}|x_t,d_t}{\bm{\mathscr{V}}} = \underset{x_{t+1},d_{t+1}|x_{t+1},d_{t}}{\bm{\mathscr{D}}}\times_{x_{t+1}d_t} \underset{x_{t+1}, d_t|x_{t},d_{t}}{\bm{\mathscr{X}}}$. The homogeneity property allows us to rewrite the above as
\begin{align}
\label{Texample}
\underset{x_{t+2}, x_{t+1}|x_{t},d_{t}}{\bm{\mathscr{T}}} &= \underset{x_{t+1},x_{t}|x_{t},d_{t}}{\bm{\mathscr{T}}} \times {\bm{\mathscr{V}}}\\
\label{Texample2}
\underset{x_{t+3}, x_{t+2}, x_{t+1}, x_{t+1}|x_{t},d_{t}}{\bm{\mathscr{T}}} &= \underset{x_{t+2}, x_{t+1}|x_{t},d_{t}}{\bm{\mathscr{T}}} \times \bm{\mathscr{V}}.
\end{align}

Our next step is to represent the above tensor equations in the matrix form. First, consider tensor $\bm{\mathscr{V}}$, its matricized form can be written as:
\begin{align}
\label{eq:V}
\mathbf{V} = \underset{x_{t+1},d_{t+1}|x_{t+1}, d_{t}}{\mathbf{D}}~~\underset{x_{t+1},d_{t}|x_{t}, d_{t}}{\mathbf{X}}
\end{align}
\noindent where $\underset{x_{t+1},d_{t+1}|x_{t+1}, d_{t}}{\mathbf{D}} \in \mathbb{R}^{n_xn_d\times n_xn_d}$ and  $\underset{x_{t+1},d_{t}|x_{t}, d_{t}}{\mathbf{X}} \in \mathbb{R}^{n_xn_d\times n_xn_d}$. Next, consider the equations \eqref{Texample} and \eqref{Texample2}, its matrix version is of the form:
\begin{align}
\label{Mexample}
\underset{x_{t+2}, x_{t+1}|x_{t},d_{t}}{{\mathbf{T}}} &= \underset{x_{t+1}, x_t |x_{t},d_{t}}{{\mathbf{T}}} ~~{\mathbf{V}}\\
\label{Mexample2}
\underset{x_{t+3}, x_{t+2}, x_{t+1}|x_{t},d_{t}}{{\mathbf{T}}} &= \underset{x_{t+2}, x_{t+1}, x_t |x_{t},d_{t}}{{\mathbf{T}}} ~~{\mathbf{V}},
\end{align}
\noindent here $\underset{x_{t+1}, x_t |x_{t},d_{t}}{{\mathbf{T}}} \in \mathbb{R}^{n_x^2 \times n_xn_d}$, $\underset{x_{t+2}, x_{t+1}|x_{t},d_{t}}{{\mathbf{T}}} \in \mathbb{R}^{n_x^2\times n_xn_d}$,  and similarly $\underset{x_{t+2},x_{t+1}, x_t |x_{t},d_{t}}{{\mathbf{T}}} \in \mathbb{R}^{n_x^3 \times n_xn_d}$, and matrix  $\underset{x_{t+3}, x_{t+2}, x_t|x_{t},d_{t}}{{\mathbf{T}}} \in \mathbb{R}^{n_x^3\times n_xn_d}$.

Summarizing the above derivations, we can describe the following algorithmic approach for analyzing $\underset{\bm{\mathsf{X}}_{R_{t+1}}|x_td_t}{\mathbf{T}}$ as $\bm{\mathsf{X}}_{R_{t+1}}$ increases. We begin with $\underset{x_{t+1}|x_t, d_t}{\mathbf{T}} = \left[\mathcal{X}~\mathbf{I}~\cdots~\mathbf{I}\right] \in \mathbb{R}^{n_x\times n_xn_d}$, where the first block $\mathcal{X}\in \mathbb{R}^{n_x\times n_x}$ corresponds to $d_t=1$, and the subsequent $(n_d-1)$ blocks of $\mathbf{I}\in \mathbb{R}^{n_x\times n_x}$ correspond to $d_t > 1$ for which $x_{t+1}=x_t$. We then use \eqref{Mexample} to get $\underset{x_{t+2}, x_{t+1}|x_{t},d_{t}}{{\mathbf{T}}}$. However, notice that in \eqref{Mexample} the matrix $\underset{x_{t+1}, x_t |x_{t},d_{t}}{{\mathbf{T}}}$ has a duplicated mode $x_t$, therefore, we need to restructure $\underset{x_{t+1}|x_t, d_t}{\mathbf{T}}$, which can be accomplished with:
\begin{align*}
\underset{x_{t+1},x_{t}|x_t, d_t}{\mathbf{T}^{\prime}} = \underset{x_{t+1}|x_t, d_t}{\mathbf{T}} \odot ~~ \mathbf{E},
\end{align*} 
\noindent where $\mathbf{E} = [\mathbf{I}~\cdots ~\mathbf{I}] \in \mathbb{R}^{n_x \times n_x n_d}$ and $\odot$ denotes a Khatri-Rao product (row-wise Kronecker product)\footnote{
Let $\mathbf{P} = 
\begin{bmatrix}
\mathbf{p}_1\\
\mathbf{p}_2\\
\vdots\\
\mathbf{p}_n
\end{bmatrix} \in \mathbb{R}^{m\times n}$ and $\mathbf{Q} \in \mathbb{R}^{k\times n}$ then $\mathbf{P}\odot\mathbf{Q}=
\begin{bmatrix}
\mathbf{p}_1\otimes\mathbf{Q}\\
\mathbf{p}_2\otimes\mathbf{Q}\\
\vdots\\
\mathbf{p}_n\otimes\mathbf{Q}
\end{bmatrix} \in \mathbb{R}^{mk\times n}$, where $\otimes$ is a Kronecker product.
}. 
Then, we use \eqref{Mexample2} to transform $\underset{x_{t+2}, x_{t+1}|x_{t},d_{t}}{{\mathbf{T}}}$ into $\underset{x_{t+3}, x_{t+2}, x_{t+1}|x_{t},d_{t}}{{\mathbf{T}}}$ where, again a preliminary step is to restructure the matrix as follows:
\begin{align*}
\underset{x_{t+2},x_{t+1},x_{t}|x_t, d_t}{\mathbf{T}^{\prime}} = \underset{x_{t+2}, x_{t+1}|x_t, d_t}{\mathbf{T}} \odot ~~ \mathbf{E}.
\end{align*} 

\noindent Algorithm \ref{alg1} summarizes the above constructions for a general case.

\begin{algorithm}[tb]
   \caption{Computation of $\underset{\bm{\mathsf{X}}_{R_{t+1}}|x_td_t}{\mathbf{T}}$}
   \label{alg1}
\begin{algorithmic}
   \STATE {\bfseries Input:} $p(d_t|x_t,d_{t-1})$ and $p(x_t|x_{t-1},d_{t-1})$ - duration and transition distributions, $\ell$ - the number of sequential hidden states represented by $\bm{\mathsf{X}}_{R_{t+1}}$. 
   \STATE {\bfseries Initialization:}
   \begin{align*}
    & p(x_{t+1}|x_{t},d_{t}) \rightarrow \underset{x_{t+1}|x_t, d_t}{\mathbf{T}}\\
    & p(d_{t+1}|x_{t+1},d_{t}) \rightarrow \underset{x_{t+1},d_{t+1}|x_{t+1}, d_{t}}{\mathbf{D}}\\
    & p(x_{t+1}|x_{t},d_{t})\rightarrow \underset{x_{t+1},d_{t}|x_{t}, d_{t}}{\mathbf{X}}\\
    &\mathbf{V} = \underset{x_{t+1},d_{t+1}|x_{t+1}, d_{t}}{\mathbf{D}}~~\underset{x_{t+1},d_{t}|x_{t}, d_{t}}{\mathbf{X}}, \quad\mathbf{E} = [\mathbf{I}\cdots \mathbf{I}]
   \end{align*}
   \STATE {\bfseries for }$i=1$ {\bfseries to} $\ell-1$ {\bfseries do}
   \begin{align}
   \label{eq:expan}
   \hspace{-5pt}\underset{x_{t\hspace{-1pt}+\hspace{-1pt}i},~\ldots~,x_{t\hspace{-1pt}+\hspace{-1pt}1},x_t|x_t, d_t}{\mathbf{T}^{\prime}} &= \underset{x_{t\hspace{-1pt}+\hspace{-1pt}i},~\ldots~,x_{t\hspace{-1pt}+\hspace{-1pt}1}|x_t, d_t}{\mathbf{T}} \odot ~\mathbf{E}\\
   	\label{eq:prop}
   	 \hspace{-5pt}\underset{x_{t\hspace{-1pt}+\hspace{-1pt}i\hspace{-1pt}+\hspace{-1pt}1},~\ldots~,x_{t\hspace{-1pt}+\hspace{-1pt}2},x_{t\hspace{-1pt}+\hspace{-1pt}1}|x_t, d_t}{\mathbf{T}} &= \underset{x_{t\hspace{-1pt}+\hspace{-1pt}i},~\ldots~,x_{t\hspace{-1pt}+\hspace{-1pt}1}, x_t|x_t, d_t}{\mathbf{T}^{\prime}}~~\mathbf{V}
   \end{align}
   \STATE {\bfseries end for}
\end{algorithmic}
\end{algorithm}
 
The following Theorem characterizes the rank structure of matrix $\underset{\bm{\mathsf{X}}_{R_{t+1}}|x_td_t}{\mathbf{T}}$ in the output of the Algorithm \ref{alg1}. The proof can be found in Appendix \ref{sec:AnalysisAlg1}.

\begin{theorem}
\label{mainTheoremm}
The rank of the output matrix $\underset{\bm{\mathsf{X}}_{R_{t+1}}|x_td_t}{\mathbf{T}}$ in Algorithm \ref{alg1} is $\min(\ell n_x, n_xn_d)$.
\end{theorem}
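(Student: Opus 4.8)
The plan is to follow one pass of the loop in Algorithm~\ref{alg1} and show that it raises the rank of the running matrix $\mathbf{T}_m := \underset{x_{t+m},\ldots,x_{t+1}|x_td_t}{\mathbf{T}}\in\mathbb{R}^{n_x^m\times n_xn_d}$ by exactly $n_x$ until it saturates at $n_xn_d$; the output matrix is $\mathbf{T}_\ell$. One pass is the composition of the Khatri--Rao expansion \eqref{eq:expan} with the right-multiplication by $\mathbf{V}$ in \eqref{eq:prop}. First I would check that $\mathbf{V}=\mathbf{D}\,\mathbf{X}$ from \eqref{eq:V} is invertible: the embedded transition matrix $\mathbf{X}\in\mathbb{R}^{n_xn_d\times n_xn_d}$ is full rank by assumption $A1$ (it is block diagonal in $d_t$, equal to $\mathcal{X}$ on the $d_t=1$ block and to $\mathbf{I}$ elsewhere), and the duration block $\mathbf{D}$ is full rank by assumption $A2$ (within each state its $d_t>1$ columns are the shifts $e_1,\ldots,e_{n_d-1}$, while the $d_t=1$ column $p(\cdot|x_{t+1})$ supplies the remaining direction $e_{n_d}$). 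Since $\mathbf{V}$ is a square full-rank matrix acting on the right, $rank(\mathbf{T}_{m+1})=rank(\mathbf{T}_m\odot\mathbf{E})$, and it suffices to analyse the Khatri--Rao step.

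For that step, observe that $\mathbf{E}=[\mathbf{I}\cdots\mathbf{I}]$ tags the column indexed by $(x_t,d_t)=(a,k)$ with the basis vector $e_a$, independently of $k$, so the corresponding column of $\mathbf{T}_m\odot\mathbf{E}$ is $\mathbf{t}_{(a,k)}\otimes e_a$, where $\mathbf{t}_{(a,k)}$ is a column of $\mathbf{T}_m$. Columns with different $a$ then lie in the mutually independent coordinate subspaces $\mathbb{R}^{n_x^m}\otimes\mathrm{span}(e_a)$, so their rank contributions add:
\begin{equation*}
rank(\mathbf{T}_{m+1})=rank(\mathbf{T}_m\odot\mathbf{E})=\sum_{a=1}^{n_x} r_a,
\end{equation*}
where $r_a$ is the rank of the submatrix of $\mathbf{T}_m$ consisting of the columns with $x_t=a$.

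It remains to compute $r_a$ from the probabilistic reading of $\mathbf{T}_m$, whose column $(a,k)$ is $p(x_{t+1},\ldots,x_{t+m}\mid x_t=a,d_t=k)$. By the semantics \eqref{durProb}--\eqref{transProb}, conditioning on $d_t=k$ pins $x_{t+1}=\cdots=x_{t+k-1}=a$ and forces a fresh state to be sampled at step $k$; hence for $k\le\min(m,n_d)$ the column factorises as $e_a^{\otimes(k-1)}\otimes\mathbf{g}_{m-k+1}(a)$, where $\mathbf{g}_{j}(a)$ is the law of $j$ consecutive states immediately following a transition out of $a$, while for $k>m$ the column is the point mass $e_a^{\otimes m}$. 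Because the freshly sampled state cannot equal $a$ (assumption $A1$), the leading coordinate of each $\mathbf{g}_j(a)$ carries no mass on $a$; the distinct columns are therefore classified by the length of their leading run of $a$'s and have pairwise disjoint supports, so they are linearly independent. Counting them --- the columns $k=1,\ldots,\min(m,n_d)$ and, when $n_d>m$, the single point mass --- gives $r_a=\min(m+1,n_d)$, whence $rank(\mathbf{T}_{m+1})=n_x\min(m+1,n_d)=\min\!\big((m{+}1)n_x,\,n_xn_d\big)$. The base case $\mathbf{T}_1=[\mathcal{X}\,\mathbf{I}\cdots\mathbf{I}]$ has rank $n_x$ since the $\mathbf{I}$ blocks alone span $\mathbb{R}^{n_x}$; setting $\ell=m+1$ yields $rank(\mathbf{T}_\ell)=\min(\ell n_x,\,n_xn_d)$.

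The main obstacle is the computation of $r_a$: showing that exactly $\min(m+1,n_d)$ of the per-state columns are linearly independent. The disjoint-support classification makes this transparent only because the state drawn after a transition differs from the persisting state $a$; if self-transitions of $\mathcal{X}$ were allowed, one would instead need a triangular argument ordering the columns by the length of their leading $a$-run and eliminating them one at a time. By contrast, the invertibility of $\mathbf{V}$ and the orthogonalising effect of the Khatri--Rao product with $\mathbf{E}$ are routine.
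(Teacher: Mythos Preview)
Your route is genuinely different from the paper's. The paper proves the theorem by an inductive bookkeeping argument through the loop of Algorithm~\ref{alg1}: it tracks, for each fixed value $l$ of $x_t$, the rank of the block $[\mathbf{A}_1^{(i)}(:,l)\cdots\mathbf{A}_{n_d}^{(i)}(:,l)]$, and shows that each pass increments this per-$l$ rank by one. The increment comes from Lemma~\ref{SubsetIndep} (a ``subset--independence'' lemma: a nonzero combination of an independent family is independent of any strict subfamily), applied to the new leading column $\mathbf{C}_1^{(i)}(:,l)=[\mathbf{B}_1^{(i)}\cdots\mathbf{B}_{n_d}^{(i)}]\Psi(:,l)$; Lemma~\ref{KhatriRao2} then converts the per-$l$ ranks into the global rank after the Khatri--Rao step. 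You instead bypass the induction entirely: you read the columns of $\mathbf{T}_m$ directly as the conditional laws $p(x_{t+1},\ldots,x_{t+m}\mid x_t=a,d_t=k)$, classify them by the length of the deterministic leading $a$-run, and count independent columns in one shot. That is cleaner and more informative --- it explains \emph{why} the rank grows linearly in $m$ (each extra observation separates one more value of the residual duration) --- whereas the paper's argument is purely algebraic.

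There is one genuine slip. You write that ``the freshly sampled state cannot equal $a$ (assumption $A1$)'', and use this to get pairwise disjoint supports. Assumption~$A1$ says only that $\mathcal{X}$ is full rank with nonzero off-diagonal entries; it does not force $\mathcal{X}(a,a)=0$, so the supports need not be disjoint. The fix is exactly the triangular argument you flag at the end: for $k'\le m$, the event $\{x_{t+1}=\cdots=x_{t+k'-1}=a,\ x_{t+k'}\neq a\}$ has probability $0$ under $d_t=k>k'$ and probability $1-\mathcal{X}(a,a)>0$ under $d_t=k'$ (strict inequality because $A1$ forces some off-diagonal mass in column $a$), and the all-$a$ event has probability $1$ under any $d_t>m$. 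Evaluating the columns on these events gives a lower-triangular matrix with nonzero diagonal, hence $r_a=\min(m+1,n_d)$ without any zero-diagonal hypothesis. With that correction your proof stands, and it is tidier than the paper's inductive version; the paper's own appeal to Lemma~\ref{SubsetIndep} is in fact somewhat loose at the same point, so your triangular argument would sharpen both.
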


\noindent Applying now Theorem \ref{mainTheoremm} to equation \eqref{eq:factorF} in matrix form
\begin{align*}
\underset{\bm{\mathsf{O}}_{R_{t+1}}|x_td_{t}}{{\mathbf{F}}} = \underset{\bm{\mathsf{O}}_{R_{t+1}}|\bm{\mathsf{X}}_{R_{t+1}}}{{\mathbf{Q}}}\times\underset{\bm{\mathsf{X}}_{R_{t+1}}|x_td_t}{{\mathbf{T}}},
\end{align*}
where $rank(\mathbf{Q}) = n_x^\ell $ we can now conclude the following result:

\begin{corollary}
\label{mainCorollary}
To achieve the full column rank for $\underset{\bm{\mathsf{O}}_{R_{t+1}}|x_td_t}{\mathbf{F}} \in \mathbb{R}^{n_o^\ell\times n_xn_d}$, i.e. to ensure that the rank of tensor $\underset{\bm{\mathsf{O}}_{R_{t+1}}|x_td_t}{\bm{\mathscr{F}}}$ is $n_xn_d$, the number of observations $\ell $ in $\bm{\mathsf{O}}_{R_{t+1}} = \{o_{t+2}, o_{t+3}, \ldots, o_{t+\ell+1}\}$ must be equal to the maximum state persistence i.e., $\ell =n_d$.
\end{corollary}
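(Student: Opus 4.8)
The plan is to reduce the corollary directly to Theorem \ref{mainTheoremm} through the factorization $\mathbf{F} = \mathbf{Q}\,\mathbf{T}$ given by the matrix form of \eqref{eq:factorF}. The essential observation is that left-multiplication by a matrix of full column rank preserves the rank of its operand, so the rank of $\mathbf{F}$ is governed entirely by the rank of $\mathbf{T}$, which Theorem \ref{mainTheoremm} has already computed. The corollary is then a matter of reading off the minimal $\ell$ that saturates that rank.

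First I would record that, by assumption $A3$ together with the Kronecker-product rank identity already noted in the text, $\mathbf{Q} \in \mathbb{R}^{n_o^\ell \times n_x^\ell}$ has $rank(\mathbf{Q}) = n_x^\ell$; since $\mathbf{Q}$ has exactly $n_x^\ell$ columns, it has full column rank, which is consistent with $n_x \leq n_o$ so that $n_o^\ell \geq n_x^\ell$. Consequently $\mathbf{Q}$ is injective, so $\mathbf{Q}\,\mathbf{T}\,\mathbf{v} = 0$ forces $\mathbf{T}\,\mathbf{v} = 0$. Hence $\ker(\mathbf{F}) = \ker(\mathbf{T})$ as subspaces of $\mathbb{R}^{n_xn_d}$, and by rank-nullity $rank(\mathbf{F}) = rank(\mathbf{T})$.

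Next I would invoke Theorem \ref{mainTheoremm}, which gives $rank(\mathbf{T}) = \min(\ell n_x, n_xn_d)$ for the sequential set $\bm{\mathsf{X}}_{R_{t+1}} = \{x_{t+2}, \ldots, x_{t+\ell+1}\}$. Combining with the previous step yields $rank(\mathbf{F}) = \min(\ell n_x, n_xn_d)$. Full column rank of $\mathbf{F}$, whose $n_xn_d$ columns are indexed by $(x_t, d_t)$, means $rank(\mathbf{F}) = n_xn_d$, which holds precisely when $\ell n_x \geq n_xn_d$, i.e. $\ell \geq n_d$. Since the $\min$ increases with $\ell$ only while $\ell < n_d$ and saturates at $\ell = n_d$, the smallest number of sequential observations achieving full column rank is $\ell = n_d$, the claimed value.

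There is no deep obstacle once Theorem \ref{mainTheoremm} is available; the only points needing care are the two halves of the minimization argument and the precise form of the rank-preservation step. For the former I must verify both that $\ell = n_d$ suffices (the $\min$ attains $n_xn_d$) and that any $\ell < n_d$ fails (the $\min$ is strictly $\ell n_x < n_xn_d$), so that $n_d$ is genuinely minimal rather than merely sufficient. For the latter the argument requires full \emph{column} rank of $\mathbf{Q}$, not merely full rank, which is exactly where $A3$ and $n_x \leq n_o$ enter. Finally, I would remark that the exponentially smaller requirement $\ell = \lceil 1 + \log n_d / \log n_x \rceil$ of Theorem \ref{mainTheorem} arises only once the sequential assumption on $\bm{\mathsf{X}}_{R_{t+1}}$ is dropped; the present corollary stays inside the sequential regime and therefore pays the linear price $\ell = n_d$.
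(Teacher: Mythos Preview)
Your proposal is correct and follows essentially the same approach as the paper. The paper does not give a separate proof of the corollary but presents it as an immediate consequence of Theorem~\ref{mainTheoremm} applied to the matricized factorization $\mathbf{F} = \mathbf{Q}\,\mathbf{T}$ with $rank(\mathbf{Q}) = n_x^{\ell}$; your argument spells out the rank-preservation step (injectivity of $\mathbf{Q}$) and the minimality of $\ell = n_d$ that the paper leaves implicit.
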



\subsubsection[Efficient Computation of Factor T]{Efficient Computation of Factor $\mathbf{T}$}
\label{sec:effFactorComp}

In Corollary \ref{mainCorollary} we established that the required number of observations in $\bm{\mathsf{O}}_{R_{t+1}} = \{o_{t+2}, o_{t+3}, \ldots, o_{t+\ell+1}\}$ is $\ell=n_d$. Therefore, the sizes of the estimated quantities $\tilde{\bm{\mathscr{D}}} \in \mathbb{R}^{n_o^{n_d} \times n_o^{n_d}}$ and $\tilde{\bm{\mathscr{X}}} \in \mathbb{R}^{n_o^{n_d} \times n_o^{n_d} \times n_o}$ in Algorithm \ref{alg1} will have exponential dependency on $n_d$. When maximum state persistence is large, the estimation of such quantity becomes impractical. Fortunately, we can modify Algorithm \ref{alg1} to significantly reduce the number of observations. The idea is to apply the step \eqref{eq:prop} multiple times in-between the applications of step \eqref{eq:expan}. Recall that in the previous construction we established that $\ell=n_d$ consecutive observations are sufficient, e.g., $\bm{\mathsf{O}}_{R_{t+1}} = \{o_{t+2},\ldots,o_{t+\ell+1}\}$. In contrast, in the proposed approach, every time we add an observation, say $o_{t+\tau}$, we skip certain number $\delta$ of time steps before adding another observation $o_{t+\tau+\delta}$, so that the observations are non-consecutive. As we illustrate next, the span of these non-consecutive observations is still $n_d$ but the number of them is only logarithmic in $n_d$. The proposed approach still achieves the full rank structure of $\underset{\bm{\bm{\mathsf{O}}}_{R_{t+1}}|x_td_t}{\mathbf{F}}$ but with smaller number of data points. Algorithm \ref{alg2}, which is a simple modification of Algorithm \ref{alg1}, summarizes the above procedure.

\begin{algorithm}[!t]
    \caption{Efficient computation of $\underset{\bm{\mathsf{X}}_{R_{t+1}}|x_td_t}{\mathbf{T}}$}
    \label{alg2}
 \begin{algorithmic}
    \STATE {\bfseries Input:} $p(d_t|x_t,d_{t-1})$ and $p(x_t|x_{t-1},d_{t-1})$ - duration and transition distributions, $\ell$ - the number of sequential hidden states represented by $\bm{\mathsf{X}}_{R_{t+1}}$
       \STATE {\bfseries Initialization:}
       \begin{align*}
        & p(x_{t+1}|x_{t},d_{t}) \rightarrow \underset{x_{t+1}|x_t, d_t}{\mathbf{T}}\\
        & p(d_{t+1}|x_{t+1},d_{t}) \rightarrow \underset{x_{t+1},d_{t+1}|x_{t+1}, d_{t}}{\mathbf{D}}\\
        & p(x_{t+1}|x_{t},d_{t})\rightarrow \underset{x_{t+1},d_{t}|x_{t}, d_{t}}{\mathbf{X}}\\
        &\mathbf{V} = \underset{x_{t+1},d_{t+1}|x_{t+1}, d_{t}}{\mathbf{D}}~~\underset{x_{t+1},d_{t}|x_{t}, d_{t}}{\mathbf{X}}, \quad\mathbf{E} = [\mathbf{I}\cdots \mathbf{I}]
       \end{align*}
    \STATE $c = 1$
   \STATE {\bfseries for }$i=1$ {\bfseries to} $\ell-1$ {\bfseries do}
    \begin{align}
    	\label{eq:prop2}
       	 \hspace{-5pt}{\mathbf{T}} = {\mathbf{T}}~~\mathbf{V}
    \end{align}
    \STATE {\bfseries \quad\quad if }$i==(n_x)^{c}-1$ {\bfseries or} $i==\ell-1$ {\bfseries do}
    \begin{align}
    	   \label{eq:expan2}
           \hspace{-5pt}{\mathbf{T}} = {\mathbf{T}} \odot ~\mathbf{E}
    \end{align}
    \STATE {\bfseries \quad\quad end if }
    \STATE \quad\quad $c = c+1$
    \STATE {\bfseries end for }
 \end{algorithmic}
\end{algorithm}
 
\noindent The following result establishes the rank structure of the matrix $\underset{\bm{\mathsf{X}}_{R_{t+1}}|x_td_t}{\mathbf{T}}$ in the output of the Algorithm \ref{alg2}. The proof can be found in Appendix \ref{sec:AnalysisAlg2}.

\begin{theorem}
\label{mainTheorem2}
The rank of the output matrix $\underset{\bm{\mathsf{X}}_{R_{t+1}}|x_td_t}{\mathbf{T}}$ in Algorithm \ref{alg2} is $\min(n_x^{\ell}, n_xn_d)$.
\end{theorem}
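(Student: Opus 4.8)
The plan is to isolate what each of the two operations in Algorithm \ref{alg2} does to the rank and then induct on the number of Khatri--Rao expansions. The first fact I would establish is that the propagation \eqref{eq:prop2}, $\mathbf{T}\mapsto\mathbf{T}\mathbf{V}$, leaves the rank unchanged. This holds because $\mathbf{V}=\mathbf{D}\mathbf{X}$ is invertible: the matricized $\underset{x_{t+1},d_t|x_t,d_t}{\mathbf{X}}$ is block diagonal in $d_t$ with blocks $\mathcal{X}$ (for $d_t=1$) or $\mathbf{I}$ (for $d_t>1$), hence full rank $n_xn_d$ by $A1$, and $\underset{x_{t+1},d_{t+1}|x_{t+1},d_t}{\mathbf{D}}$ is block diagonal in $x_{t+1}$ with each block carrying the full-support duration vector of $A2$ in its $d_t=1$ column and decrement basis vectors elsewhere, hence also full rank $n_xn_d$. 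The second fact is that the expansion \eqref{eq:expan2}, $\mathbf{T}\mapsto\mathbf{T}\odot\mathbf{E}$ with $\mathbf{E}=[\mathbf{I}\cdots\mathbf{I}]$, merely duplicates the conditioning state $x_t$ into a new output mode, so its $(x_t,d_t)$ column equals that column of $\mathbf{T}$ placed in the single row block indexed by $x_t$. Thus $\mathbf{T}\odot\mathbf{E}$ is block diagonal whose blocks are the column groups $\mathbf{T}^{(a)}$ of $\mathbf{T}$ obtained by fixing $x_t=a$, and $rank(\mathbf{T}\odot\mathbf{E})=\sum_{a=1}^{n_x}rank(\mathbf{T}^{(a)})$. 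In particular, all rank growth comes from the expansions, while the propagations only reshuffle mass among the column groups.

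Granting these two facts, I would induct on the expansion index $c$, with the claim that immediately after the $c$-th expansion $\mathbf{T}\in\mathbb{R}^{n_x^{c+1}\times n_xn_d}$ has rank $\min(n_x^{c+1},n_xn_d)$; the base case $c=0$ is the initialization $[\mathcal{X}\,\mathbf{I}\cdots\mathbf{I}]$ of rank $n_x$. For the step, the condition $i=n_x^{c}-1$ spaces the expansions at cumulative propagation counts $n_x^{c}-1$, so their gaps $n_x^{c}(n_x-1)$ grow geometrically and the newest observation offset just before the $(c{+}1)$-th expansion is $n_x^{c+1}-1$. By the two facts the whole step reduces to showing that, at that moment, every column group $\mathbf{T}^{(a)}\in\mathbb{R}^{n_x^{c+1}\times n_d}$ attains its maximal rank $\min(n_x^{c+1},n_d)$; the expansion then yields $n_x\min(n_x^{c+1},n_d)=\min(n_x^{c+2},n_xn_d)$, and after the final $(\ell{-}1)$-th expansion the rank is $\min(n_x^{\ell},n_xn_d)$, as claimed.

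The saturated regime is the easy half: once $rank(\mathbf{T})=n_xn_d$ the matrix has full column rank, each column group is a subset of $n_d$ of its (independent) columns and so has full column rank $n_d$, and both operations preserve full column rank, freezing the value $n_xn_d$. The unsaturated regime, where I must show each $\mathbf{T}^{(a)}$ has full row rank $n_x^{c+1}$, is the real obstacle and the place where the exponential spacing is indispensable. Here the $n_d$ columns of $\mathbf{T}^{(a)}$ are the conditional laws $p(\text{observed states}\mid x_t=a,d_t)$, $d_t=1,\dots,n_d$, and full row rank means these laws span the entire $n_x^{c+1}$-dimensional configuration space of the $c{+}1$ observed states. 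I would prove this by an induction mirroring the recursion \eqref{Mexample}--\eqref{Mexample2}: varying the remaining duration $d_t$ slides the first state change across the geometrically placed offsets, and because the gaps $n_x^{c}(n_x-1)$ force successive observations into distinct persistence runs, the full-rank transition $\mathcal{X}$ ($A1$) contributes $n_x$ genuinely new directions per observed state, giving multiplicative growth rather than the additive $+n_x$ of the consecutive case in Theorem \ref{mainTheoremm}. The hardest point --- and the one I expect to require the most care --- is ruling out accidental linear dependencies in this span once the duration counter and the transitions interact, which is exactly where the genericity granted by $A1$--$A3$ and the precise offset placement (not merely the number of observations) must be invoked.
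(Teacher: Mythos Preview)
Your skeleton matches the paper's proof almost exactly: both arguments note that $\mathbf{V}$ is invertible so \eqref{eq:prop2} preserves the total rank, both unpack \eqref{eq:expan2} as a block-diagonalisation over the value of $x_t$ (this is the content of Lemma~\ref{KhatriRao2}), and both reduce the inductive step to showing that each column group $\mathbf{T}^{(a)}\in\mathbb{R}^{n_x^{c}\times n_d}$ has reached its full row rank $n_x^{c}$ immediately before the $(c{+}1)$-th expansion. So the architecture is correct.

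The gap is in the mechanism for that last reduction. You describe the propagations as merely ``reshuffling mass among the column groups'' and then look for a multiplicative contribution from $\mathcal{X}$ at the level of the column group itself; the paper instead shows that each single propagation increases the rank of every column group by exactly one. The reason is the shift structure displayed in \eqref{eq:Vtrans}: after one application of $\mathbf{V}$, the last $n_d-1$ columns of $\mathbf{T}^{(a)}$ are the first $n_d-1$ columns of the old $\mathbf{T}^{(a)}$ (so they retain the old rank $r$), while the new first column is a linear combination, with all nonzero coefficients thanks to $A1$--$A2$ through $\Psi$, of a full independent set that strictly contains those $n_d-1$ vectors. Lemma~\ref{SubsetIndep} then forces this new column to be independent of them, giving rank $r+1$. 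Iterating, the $n_x^{c}-n_x^{c-1}$ propagations scheduled between the $c$-th and $(c{+}1)$-th expansions take the column-group rank from $n_x^{c-1}$ to $n_x^{c}$, which is precisely its row dimension, and the Khatri--Rao step then supplies the factor of $n_x$. In other words, the growth is additive at the column-group level and becomes multiplicative only because the schedule in Algorithm~\ref{alg2} allots exactly enough propagations to saturate the row rank before each expansion. Your narrative about ``sliding the first state change'' is the right intuition for why the new column is generic, but without the SubsetIndep-type argument you have not actually excluded the accidental dependencies you flag in your final sentence; that lemma is the missing concrete step.
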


\noindent  Note that based on the above theorem, Algorithm \ref{alg2} increases the rank at every step exponentially rather than linearly. In order for $\underset{\bm{\mathsf{X}}_{R_{t+1}}|x_td_t}{\mathbf{T}}$ to achieve the rank $n_xn_d$ we will now require $\ell = \lceil 1 + \frac{\log n_d}{\log n_x}\rceil$ observations, since we need to ensure $n_x^\ell = n_xn_d$.  Observe that the span of the selected observations is still $n_d$, while the number of the observations is only logarithmic in $n_d$. The following Corollary summarizes the above conclusions.

\begin{corollary}
\label{mainCorollary2}
To achieve the full column rank for $\underset{\bm{\mathsf{O}}_{R_{t+1}}|x_td_t}{\mathbf{F}} \in \mathbb{R}^{n_o^\ell\times n_xn_d}$, i.e. to ensure that the rank of tensor $\underset{\bm{\mathsf{O}}_{R_{t+1}}|x_td_t}{\bm{\mathscr{F}}}$ is $n_xn_d$, the number of observations $\ell $ in $\bm{\mathsf{O}}_{R_{t+1}}$ must be equal to $\ell = \lceil 1 + \frac{\log n_d}{\log n_x}\rceil$, since we need to ensure $n_x^\ell = n_xn_d$.
\end{corollary}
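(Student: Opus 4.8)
The plan is to reduce the rank of $\mathbf{F}$ to the already-computed rank of $\mathbf{T}$ via the factorization \eqref{eq:factorF}, and then solve the resulting inequality for $\ell$. First I would recall from the discussion following \eqref{eq:factorF} that in matrix form $\underset{\bm{\mathsf{O}}_{R_{t+1}}|x_td_t}{\mathbf{F}} = \underset{\bm{\mathsf{O}}_{R_{t+1}}|\bm{\mathsf{X}}_{R_{t+1}}}{\mathbf{Q}}\,\underset{\bm{\mathsf{X}}_{R_{t+1}}|x_td_t}{\mathbf{T}}$, where $\mathbf{Q} = \otimes_\ell O \in \mathbb{R}^{n_o^\ell \times n_x^\ell}$. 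By Assumption $A3$ we have $\mathrm{rank}(O) = n_x$ and $n_x \le n_o$, so the Kronecker-product rank identity gives $\mathrm{rank}(\mathbf{Q}) = n_x^\ell \le n_o^\ell$; thus $\mathbf{Q}$ has full column rank $n_x^\ell$.

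Second, I would invoke Theorem \ref{mainTheorem2}, which asserts that the matrix $\mathbf{T}$ produced by Algorithm \ref{alg2} has rank $\min(n_x^\ell,\, n_x n_d)$. Since left-multiplication by a matrix of full column rank preserves the rank of its argument, it follows that $\mathrm{rank}(\mathbf{F}) = \mathrm{rank}(\mathbf{Q}\mathbf{T}) = \mathrm{rank}(\mathbf{T}) = \min(n_x^\ell,\, n_x n_d)$. Hence $\mathbf{F}$ attains its maximal possible column rank $n_x n_d$ precisely when $n_x^\ell \ge n_x n_d$.

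Third, I would solve the condition $n_x^\ell \ge n_x n_d$. Taking logarithms and dividing by $\log n_x > 0$ yields $\ell \ge 1 + \frac{\log n_d}{\log n_x}$, and since $\ell$ must be a positive integer the minimal admissible value is $\ell = \lceil 1 + \frac{\log n_d}{\log n_x}\rceil$, which is the claimed count; the span of the selected observations remains $n_d$ as noted before the statement.

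The bulk of the work has in fact been discharged by Theorem \ref{mainTheorem2}, whose proof (deferred to Appendix \ref{sec:AnalysisAlg2}) is where the exponential growth of the rank with each Khatri--Rao expansion step \eqref{eq:expan2} is established. Given that theorem, the only point requiring care in the corollary itself is the rank-preservation step: one must verify that $\mathbf{Q}$ has full \emph{column} rank, not merely that $\mathrm{rank}(\mathbf{Q}) = n_x^\ell$ in isolation, so that the product $\mathbf{Q}\mathbf{T}$ neither inflates nor collapses the rank of $\mathbf{T}$. This is immediate from $n_x^\ell \le n_o^\ell$, so I expect no real obstacle at the level of the corollary; the genuine difficulty lies entirely in the cited theorem.
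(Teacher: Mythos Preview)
Your proposal is correct and follows essentially the same approach as the paper: factor $\mathbf{F}=\mathbf{Q}\mathbf{T}$ via \eqref{eq:factorF}, use Assumption $A3$ and the Kronecker-rank identity to conclude $\mathbf{Q}$ has full column rank $n_x^\ell$, invoke Theorem~\ref{mainTheorem2} for $\mathrm{rank}(\mathbf{T})=\min(n_x^\ell,n_xn_d)$, and solve $n_x^\ell\ge n_xn_d$ for the minimal integer $\ell$. If anything, your write-up is slightly more explicit than the paper's about the rank-preservation step (left-multiplication by a full-column-rank matrix), which the paper leaves implicit in the text preceding Corollary~\ref{mainCorollary}.
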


\noindent Theorem \ref{mainTheorem2} together with Corollary \ref{mainCorollary2} now proves the Theorem \ref{mainTheorem} stated earlier.

\section{Experiments}
\label{sec:experiments}
In this section we evaluated the performance of the proposed algorithm both on synthetic as well as real datasets and compared its performance to a standard EM algorithm.

\subsection{Synthetic Data}

\begin{figure*}[!tb]
\centering
\includegraphics[width=\textwidth]{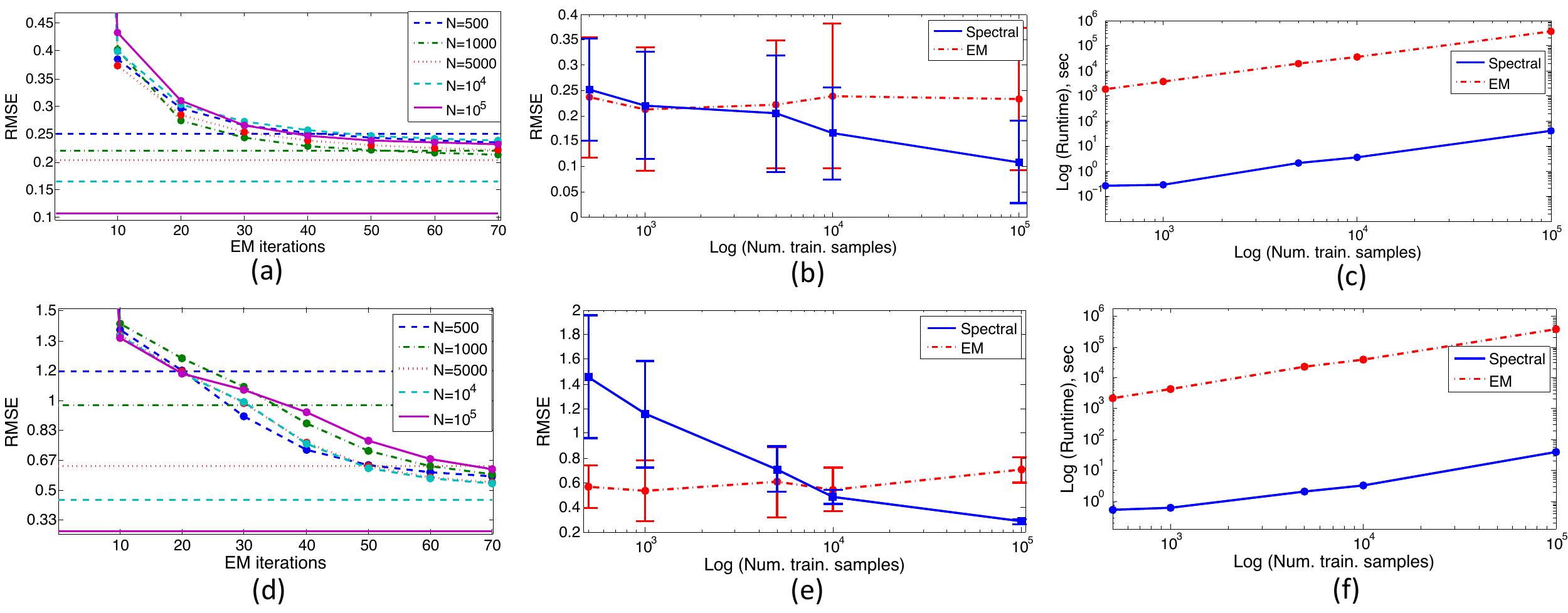}
\caption{Performance of the spectral algorithm and EM on synthetic data generated from HSMM with $n_o=3,n_x=2,n_d=2$ (top row) and $n_o=5,n_x=4,n_d=6$ (bottom row). (a), (d): Error for EM across different iterations for various training datasets. The straight lines show the performance for spectral method. (b), (e): Average error and one standard deviation over $100$ runs for EM after convergence and spectral algorithm across different number of training data. (c), (f): Runtime, in seconds, for both methods.}
\label{fig:sim}
\end{figure*}

Using synthetic data, we compared the estimation accuracy and the runtime of the spectral algorithm with EM. For this, we defined two HSMMs, one with $n_o=3, n_x=2, n_d=2$ and another with  $n_o=5, n_x=4, n_d=6$. For each model, we generated a set of $N_{train} = \{500,1000,5000, 10^4, 10^5\}$ training and $N_{test} = \hspace{-2pt}=\hspace{-2pt}1000$ testing sequences, each of length $T=100$. The accuracy of estimating likelihood for each testing sequence was measured using the relative deviation from the true likelihood, i.e., $\epsilon_i = \frac{|\hat{p}(\mathbf{S}_i^{test}) - p(\mathbf{S}_i^{test})|}{p(\mathbf{S}_i^{test})}$ for $i=1,\ldots, 1000$. Given $1000$ such values, we then computed the final score, which  is the root-mean-square error (RMSE) across all the testing sequences, RMSE $ = \sqrt{\frac{1}{N_{test}}\sum_{i=1}^{N_{test}}\epsilon_i^2}$. 

Figure \ref{fig:sim} shows results, where the top row of graphs corresponds to the model $n_o=3,n_x=2,n_d=2$ and the bottom row is for model $n_o=5, n_x=4, n_d=6$. The left column of graphs shows the progression of RMSE across EM iterations for both models; the middle column shows the dependence of testing error on the number of training samples and the right column shows the running times. It can be observed from plots (b) and (e) in Figure \ref{fig:sim} that with the small training set, EM achieves smaller errors, while as the number of training samples increases, the spectral method becomes more accurate, outperforming EM. Also, comparing the plots (a), (b) with (d) and (e), we can conclude that for larger models, i.e., whose $n_o$, $n_x$ and $n_d$ are larger, the spectral method requires more data in order to achieve same or better accuracy than EM. This is expected since the sizes of estimated tensors grow with the model size.
Moreover, the plots (c) and (f) in Figure \ref{fig:sim} show that spectral method is several orders of magnitude faster than EM.

Given the above results, we can conclude that (i) for small datasets EM is a preferable algorithm, (ii) for large data, the spectral algorithm is a better choice, since it achieves higher accuracy and (iii) across all datasets the spectral algorithm requires significantly less computations as compared to EM.

\subsection{Application to Aviation Safety Data}

\begin{figure*}[!tb] 
\centering
\includegraphics[width=\textwidth]{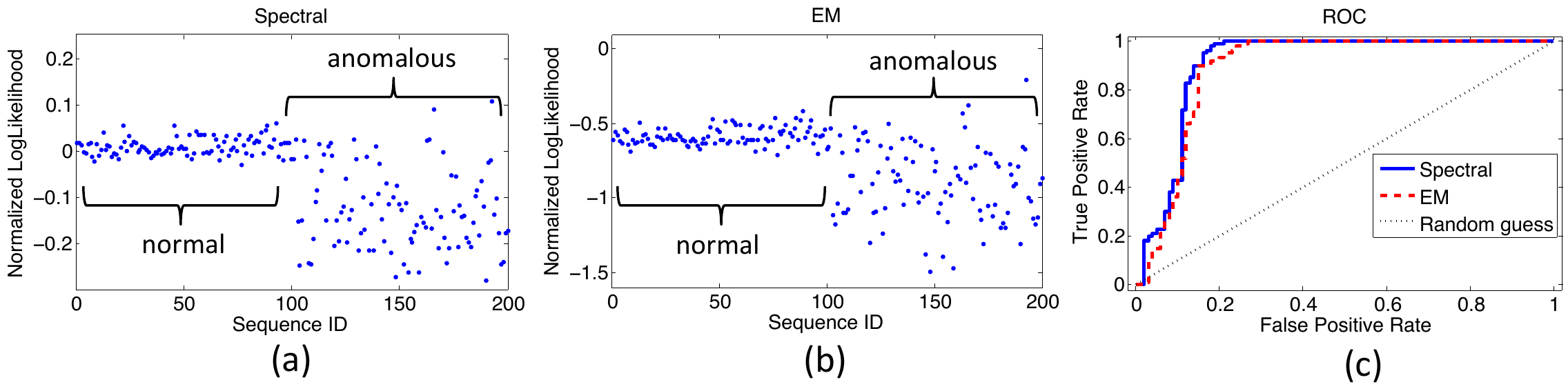}
\caption{Evaluation of the spectral algorithm and EM on aviation safety data. (a) and (b): Normalized joint loglikelihood computed by spectral algorithm (a) and EM (b) for a set of $200$ test flights, with $100$ normal and $100$ anomalous. HSMM parameters: $n_o=9, n_x=8, n_d=40$ (c): The Receiver Operating Characteristic (ROC) curve, illustrating classification accuracy of the algorithms. Area Under Curve (AUC) for spectral algorithm is 0.91 and for EM is 0.89.}
\label{fig:real}
\end{figure*}

We also compared the performance of the spectral algorithm and EM on real NASA flight dataset \cite{nasadata}, containing over $180000$ flights of $35$ aircrafts from a defunct mid-western airline company. For each flight, the data has a record of $186$ parameters, sampled at $1$ Hz, including sensor readings and pilot actions. We considered a problem of anomaly detection in aviation systems \cite{bsoo09, gomm12, matthews13} and used HSMM to detect abnormal flights based on pilot actions. Our idea is based on the observation that a flight can be partitioned into a number of phases, e.g., initial descent, touch down, or braking on the runway, and where within each phase the pilot performs certain actions. For example, during the initial descent, the pilot reduces throttle, lowers the flaps, and uses the ailerons and elevator to stabilize the aircraft. On the other hand, in the braking stage, the pilot uses brakes as well as rudder to keep the aircraft in the middle of the runway. Using HSMM as a model, we represented the flight phases as hidden states and the pilot actions as the observations from these states (see \cite{melnyk13} for more details).

In our experiments, we focused on a part of flight related to the approach phase ($15-60$ minutes in duration before the touch down on the runway) for a subset of flights landing at the same airport. We chose $9$ pilot commands, among which are ``selected altitude'', ``selected heading'', ''selected throttle level'',  etc. A simple data filter, based on the histogram of the pilot actions, was applied to select $10020$ normal flights for training. A test set contained $200$ flights, with $100$ of them being similar to the training set and the rest were selected from the flights rejected by the filter. Most of abnormal flights contained low occurrence events, such as fast descent, unusual usage of air brakes, etc., and few significant anomalies, e.g., the aborted descent in order to delay the flight. The length of the considered sequences varied anywhere from $500$ to $4000$ seconds.

We applied EM and spectral algorithm to compute the normalized joint log-likelihood
\begin{align*}
\frac{1}{T_i}\log p(o_1,o_2,\ldots, o_{T_i}),
\end{align*}
for $i=1,\ldots,200$, where $o_i$ are the observed pilot actions. Figure \ref{fig:real} shows the results. The high-likelihood sequences were considered normal and low-likelihood ones classified as anomalous (see plots (a) and (b)). Both algorithms achieved similar detection accuracy, with the spectral algorithm having the Area Under Curve (AUC) score of $0.91$ and the EM had AUC $=0.89$. On the other hand, the computational time of the spectral algorithm was orders of magnitude smaller as compared to EM (see plot (c) on Figure \ref{fig:real}). We also compared performance of both algorithm on the same flight data while varying the dimensionality of the HSMM parameters (see Figure \ref{fig:real2} and Table \ref{tb:table}). We can see that although the performance of EM and spectral algorithm is similar across many models, the latter offers significant computational savings.

\begin{figure*}[!tb] 
\centering
\includegraphics[width=0.7\textwidth]{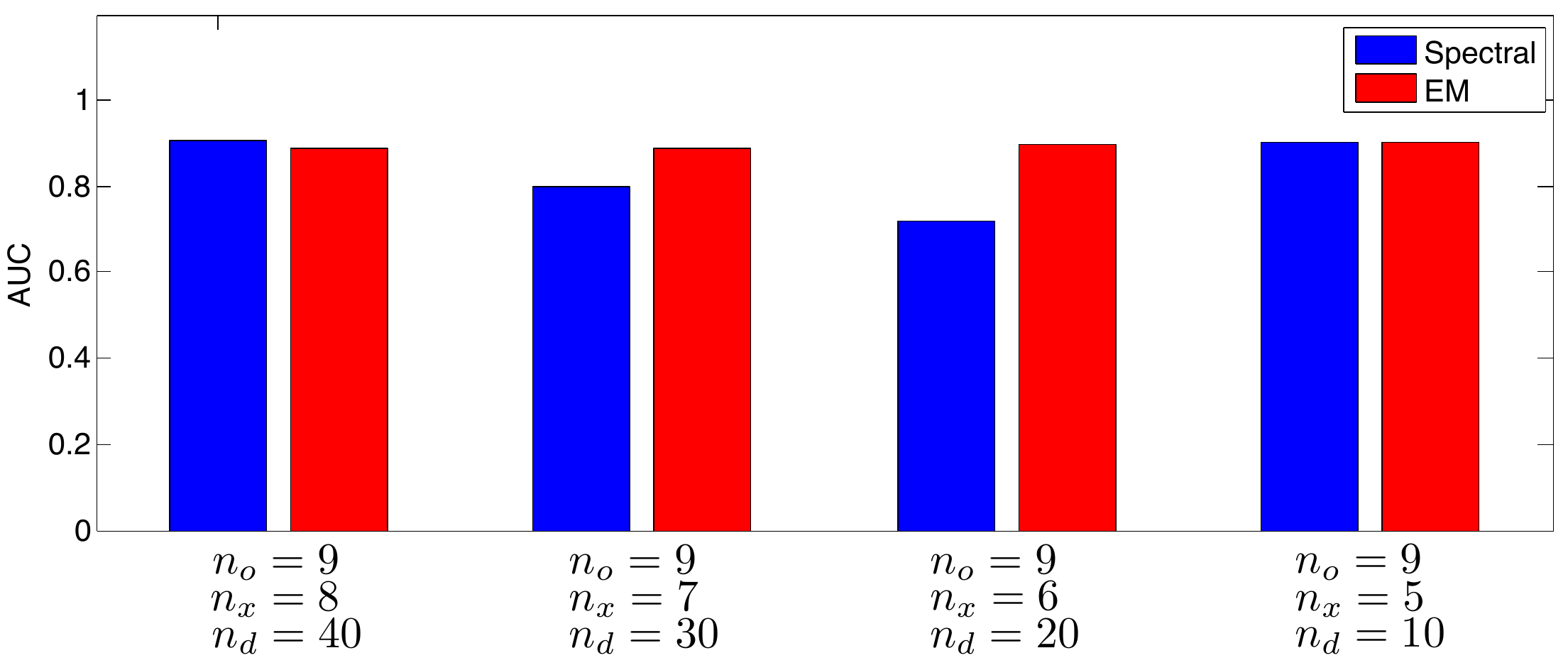}
\caption{Comparison of AUC scores for EM and spectral algorithm for various model parameters when evaluated on aviation safety data. Both algorithms achieve similar high accuracy across different models.}
\label{fig:real2}
\end{figure*}

\begin{table*} 
\centering 
\begin{tabular}{|c|c|c|c|c|c|} 
\cline{1-6}
\multicolumn{2}{|c|}{Parameters} &
$\!
\begin{aligned}
n_o&=9\\
n_x&=8\\
n_d&=40
\end{aligned}$
&
$\!
\begin{aligned}
n_o&=9\\
n_x&=7\\
n_d&=30
\end{aligned}$
&
$\!
\begin{aligned}
n_o&=9\\
n_x&=6\\
n_d&=20
\end{aligned}$
&
$\!
\begin{aligned}
n_o&=9\\
n_x&=5\\
n_d&=10
\end{aligned}$\\
\hline
\multirow{2}{*}{Running Time} &
Spectral & ~~6.8 hours~~ & ~~6.4 hours~~ & ~~6.4 hours~~ & ~~6.3 hours~~\\
& EM & $>2$ days & $>2$ days &  $>2$ days & $>2$ days \\
\hline
\end{tabular}
\vspace*{-1mm}
\caption{Comparison of running time for EM and spectral algorithm for multiple model parameters. Spectral algorithm is several orders of magnitude faster as compared to EM, offering significant computational savings.} 
\label{tb:table}
\end{table*}


\section{Conclusion}
\label{sec:conclusion}
In this paper, we present a novel spectral algorithm to perform inference in HSMM. We derive an observable representation of the model which can be computed from the data sample moments of size logarithmic in the maximum length of latent state persistence. Based on the representation and exploiting the homogeneity of the model, we present an efficient approach to inference, which ensures that the number of matrix multiplications and inverses needed to estimate the probability of an observed sequence is fixed and independent of its length. Moreover, the empirical evaluation on synthetic and real flight datasets illustrate the promise of the proposed spectral algorithm. In particular, the spectral method gets similar or better performance than EM as the size of the training dataset increases, and at the same time the spectral method is orders of magnitude faster than EM providing significant computational savings. Going forward, we plan to explore if similar spectral methods can be developed for inference in more general dynamic Bayesian networks.


\section*{Appendix}

\appendix
\section{Analysis of Tensor Rank Structure}
\label{sec:proofs}
In this Section we analyze the properties of Algorithms \ref{alg1} and \ref{alg2} and present proofs for Theorems \ref{mainTheoremm} and \ref{mainTheorem2}.

\subsection{Analysis of Algorithm \ref{alg1}}
\label{sec:AnalysisAlg1}

Here we provide analysis of the Algorithm \ref{alg1} and study the rank structure of matrix $\mathbf{T}$ in order to prove Theorem \ref{mainTheoremm}. To understand the analysis, it is important to know how the structure of matrix $\underset{\bm{\mathsf{X}}_{R_{t+1}}|x_td_t}{\mathbf{T}}$ evolves across iterations. For this, we present in Figure \ref{fig:Talg} a schematic description of a few steps of the algorithm.
\begin{figure*}[!b]
\centering
   \includegraphics[width=\textwidth]{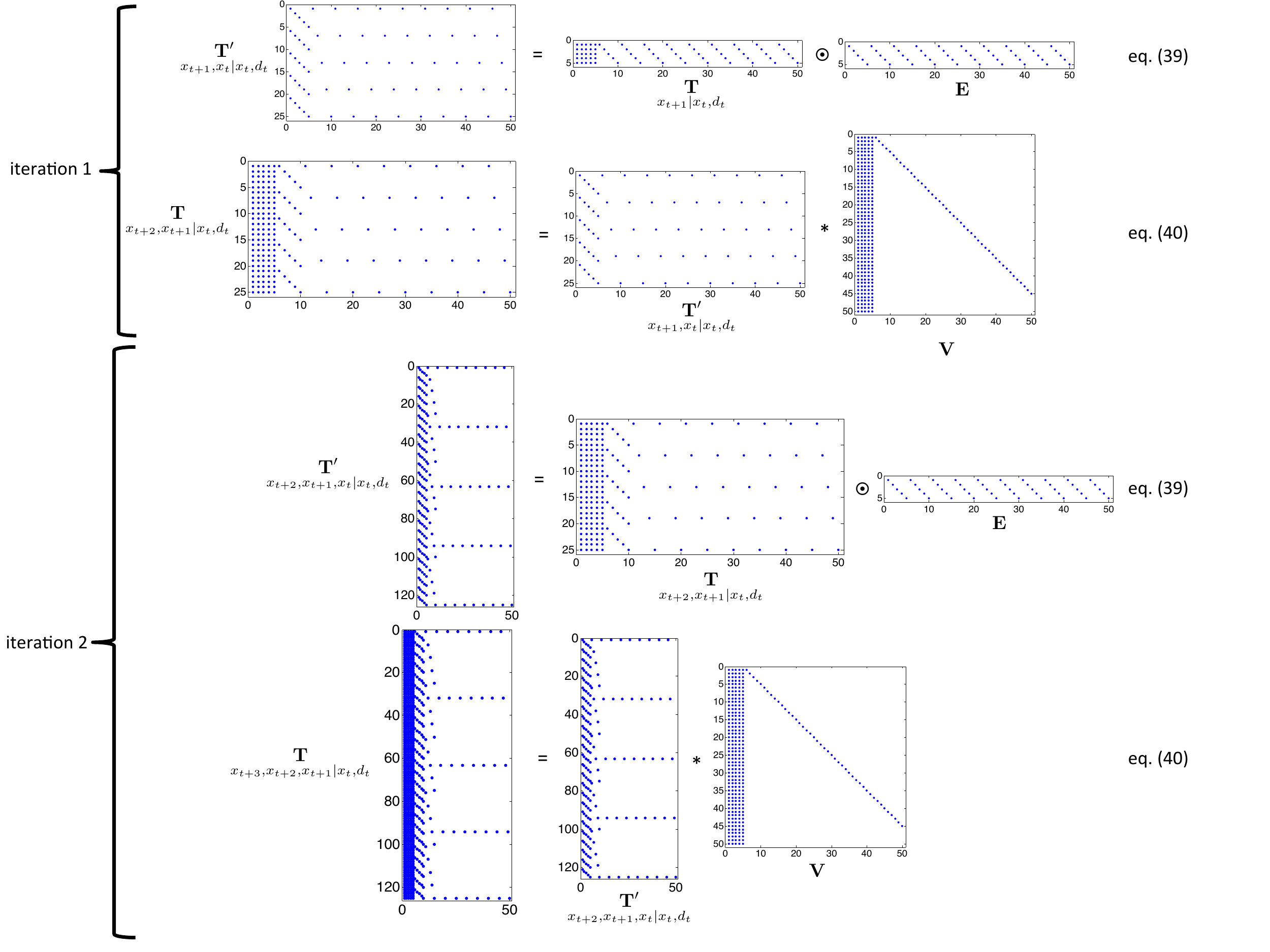}
   \caption{Schematic representation of Algorithm \ref{alg1}. This example illustrates the HSMM with $n_x=5$ and $n_d=10$. The non-zero matrix elements are displayed as dots.}
   \label{fig:Talg}
\end{figure*}
For the analysis we will need to establish certain auxiliary results.

\begin{lemma}
\label{KhatriRao1}
Let $\mathbf{A} \in \mathbb{R}^{m\times n}$ be a matrix with no all-zero columns then $rank\left(\mathbf{I}\odot\mathbf{A}\right) = rank\left(\mathbf{A}\odot\mathbf{I}\right) = n$, where $\odot$ denotes Khatri-Rao product and $\mathbf{I} \in \mathbb{R}^{n\times n}$. 
\end{lemma}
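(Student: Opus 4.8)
The plan is to argue directly from the column structure of the Khatri--Rao product, showing that the $n$ columns of each product matrix are nonzero and have pairwise disjoint supports, hence are linearly independent. Recall that $\odot$ combines two matrices having the same number of columns by taking the $j$-th column of $\mathbf{A}\odot\mathbf{B}$ to be the Kronecker product $\mathbf{a}_j\otimes\mathbf{b}_j$ of the $j$-th columns. Since both $\mathbf{I}\odot\mathbf{A}$ and $\mathbf{A}\odot\mathbf{I}$ have exactly $n$ columns, it suffices to prove that these columns are linearly independent, which forces full column rank $n$.

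First I would treat $\mathbf{I}\odot\mathbf{A}$. Writing $\mathbf{e}_j$ for the $j$-th standard basis vector of $\mathbb{R}^n$ and $\mathbf{a}_j\in\mathbb{R}^m$ for the $j$-th column of $\mathbf{A}$, the $j$-th column of $\mathbf{I}\odot\mathbf{A}$ equals $\mathbf{e}_j\otimes\mathbf{a}_j\in\mathbb{R}^{nm}$. Because $\mathbf{e}_j$ has a single nonzero entry in position $j$, this vector has $\mathbf{a}_j$ sitting in the $j$-th block of $m$ coordinates and zeros in every other block. Thus the supports of distinct columns lie in distinct blocks and are pairwise disjoint. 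By hypothesis $\mathbf{A}$ has no all-zero column, so each $\mathbf{a}_j\neq\mathbf{0}$ and hence each column $\mathbf{e}_j\otimes\mathbf{a}_j$ is nonzero. Nonzero vectors with pairwise disjoint supports are linearly independent, so $\mathrm{rank}(\mathbf{I}\odot\mathbf{A})=n$.

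The case $\mathbf{A}\odot\mathbf{I}$ is symmetric. Here the $j$-th column is $\mathbf{a}_j\otimes\mathbf{e}_j\in\mathbb{R}^{mn}$, whose only nonzero entries are $a_{ij}$ in coordinate $(i-1)n+j$ for $i=1,\dots,m$; that is, its support consists of the coordinates whose offset within each block of size $n$ equals $j$. These supports are again pairwise disjoint across $j=1,\dots,n$, and each column is nonzero for the same reason as before, giving $\mathrm{rank}(\mathbf{A}\odot\mathbf{I})=n$.

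I do not anticipate a genuine obstacle here; the argument is elementary once the Kronecker index structure is written out, and the only place the hypothesis enters is in guaranteeing that no product column vanishes, so that disjointness of supports actually yields independence. The one point requiring a little care is the bookkeeping of which coordinates carry the nonzero entries of $\mathbf{e}_j\otimes\mathbf{a}_j$ versus $\mathbf{a}_j\otimes\mathbf{e}_j$, since the order of the Kronecker factors determines whether the support is a contiguous block or an arithmetic progression of coordinates; in both orderings, however, distinct columns never share a coordinate, which is all the argument needs.
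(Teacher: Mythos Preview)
Your proof is correct and follows essentially the same approach as the paper: both identify the $j$-th column of $\mathbf{I}\odot\mathbf{A}$ as $\mathbf{e}_j\otimes\mathbf{a}_j$, observe that these columns occupy disjoint row blocks, and use the no-all-zero-column hypothesis to conclude independence. The only minor difference is that for $\mathbf{A}\odot\mathbf{I}$ the paper invokes the fact that it is a row permutation of $\mathbf{I}\odot\mathbf{A}$ (hence has the same rank), whereas you redo the disjoint-support argument directly; both are equally valid and elementary.
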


\begin{proof}
Let $\mathbf{K} = \left(\mathbf{I}\odot\mathbf{A}\right) \in \mathbb{R}^{mn\times n}$. By definition of Khatri-Rao product, $\mathbf{K}(:,j) = \mathbf{e}_j\otimes\mathbf{A}(:,j)$, for $j=1,\ldots,n$, which consists of zeros, except for rows $(j-1)m+1,\ldots,(j-1)m+m$, containing the column $\mathbf{A}(:,j)$. Here $\otimes$ denotes Kronecker product and $\mathbf{e}_j$ is everywhere zero except for position $j$ which is $1$. As long as there is no all-zero columns in $\mathbf{A}$, each column of $\mathbf{K}$ is independent of each other and therefore the rank is $n$. Moreover, since the matrix $\mathbf{A}\odot\mathbf{I}$ is a row-permuted version of $\mathbf{A}\odot\mathbf{I}$, their ranks are the same.
\end{proof}

\begin{lemma}
\label{KhatriRao2}
Define a block-row matrix $\mathbf{M} = \left[\mathbf{A}_1 ~\mathbf{A}_2 ~\cdots 
~\mathbf{A}_k\right]\in\mathbb{R}^{m\times kn}$, where each  $\mathbf{A}_i \in \mathbb{R}^{m\times n}$. Define by $r_j, ~j=1,\ldots,n$ the rank of matrix $\left[\mathbf{A}_1(:,j) ~\cdots ~\mathbf{A}_k(:,j)\right]$ composed of $j$th columns of $\mathbf{A}$'s, and let $\mathbf{E} = \left[\mathbf{I} ~\mathbf{I} ~\cdots 
~\mathbf{I}\right]\in\mathbb{R}^{n\times kn}$, where $\mathbf{I} \in \mathbb{R}^{n\times n}$. Then the rank of matrix $\mathbf{W} = \mathbf{M}\odot\mathbf{E} \in \mathbb{R}^{mn\times kn}$, obtained using a Khatri-Rao product, is $\min(mn, \sum_jr_j)$.
\end{lemma}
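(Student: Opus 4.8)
The plan is to exploit the special structure created by $\mathbf{E}=[\mathbf{I}\ \cdots\ \mathbf{I}]$: the Khatri--Rao product $\mathbf{W}=\mathbf{M}\odot\mathbf{E}$ is block-diagonal up to a permutation of its rows and columns, after which the rank is just the sum of the ranks of the diagonal blocks. Concretely, recall that the Khatri--Rao product forms each column of the result as the Kronecker product of the corresponding columns of its two factors (both factors having the same number of columns, here $kn$). Indexing the columns of $\mathbf{M}$ and $\mathbf{E}$ by a pair $(i,j)$ with $i\in\{1,\ldots,k\}$ and $j\in\{1,\ldots,n\}$, so that the global column $(i-1)n+j$ of $\mathbf{M}$ is $\mathbf{A}_i(:,j)$ and that of $\mathbf{E}$ is $\mathbf{I}(:,j)=\mathbf{e}_j$, I would first write
\begin{align*}
\mathbf{W}(:,(i-1)n+j) = \mathbf{A}_i(:,j)\otimes\mathbf{e}_j .
\end{align*}

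Second, I would read off the support of each such column. Because $\mathbf{e}_j$ is nonzero only in position $j$, the vector $\mathbf{A}_i(:,j)\otimes\mathbf{e}_j\in\mathbb{R}^{mn}$ is supported only on the rows $(p-1)n+j$ for $p=1,\ldots,m$, i.e.\ the $m$ rows congruent to $j$ modulo $n$. The key observation is that this support depends on $j$ alone and not on $i$, and that the supports for distinct values of $j$ are pairwise disjoint and together exhaust all $mn$ rows. Grouping the $kn$ columns by the value of $j$, the $k$ columns carrying index $j$ share the same $m$-element row support, and when restricted to it they coincide with the columns of $[\mathbf{A}_1(:,j)\ \cdots\ \mathbf{A}_k(:,j)]\in\mathbb{R}^{m\times k}$, whose rank is $r_j$ by definition.

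Third, I would assemble the conclusion. Permuting rows and columns so that $j$ becomes the outer block index turns $\mathbf{W}$ into a genuine block-diagonal matrix
\begin{align*}
\mathrm{diag}\Big([\mathbf{A}_1(:,1)\ \cdots\ \mathbf{A}_k(:,1)],\ \ldots,\ [\mathbf{A}_1(:,n)\ \cdots\ \mathbf{A}_k(:,n)]\Big)
\end{align*}
with pairwise disjoint row and column supports. Since the rank of a block-diagonal matrix equals the sum of the ranks of its blocks, we obtain $\mathrm{rank}(\mathbf{W})=\sum_{j=1}^n r_j$. Finally, each block lies in $\mathbb{R}^{m\times k}$, so $r_j\le m$ and hence $\sum_j r_j\le mn$; therefore $\min(mn,\sum_j r_j)=\sum_j r_j$, matching the claimed value.

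The argument is essentially bookkeeping, so the only genuine obstacle is making the permutation step rigorous: verifying that the row supports for different $j$ are truly disjoint (so that no block can interact with or cancel against another) and that every one of the $kn$ columns of $\mathbf{W}$ belongs to exactly one block. Once disjointness is in hand, block-diagonality and additivity of rank are immediate, and the $\min$ in the statement is seen to be vacuous since each $r_j$ is bounded by $m$. As a sanity check, taking $k=1$ and $\mathbf{A}_1=\mathbf{A}$ recovers Lemma~\ref{KhatriRao1}, where each $r_j\in\{0,1\}$ and the total rank equals the number of nonzero columns of $\mathbf{A}$.
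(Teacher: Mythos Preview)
Your proof is correct and follows essentially the same approach as the paper: both arguments identify that the columns of the Khatri--Rao product group by the index $j$ into families with pairwise disjoint row supports, so that (after a permutation) the matrix is block-diagonal with $j$th block $[\mathbf{A}_1(:,j)\ \cdots\ \mathbf{A}_k(:,j)]$, whence the rank is $\sum_j r_j$. The only cosmetic difference is that the paper first passes to $\mathbf{E}\odot\mathbf{M}$ (a row permutation of $\mathbf{M}\odot\mathbf{E}$) to make the contiguous row blocks $(j-1)m+1,\ldots,jm$ explicit, whereas you work directly with $\mathbf{M}\odot\mathbf{E}$ and the strided row indices $(p-1)n+j$; you also sharpen the statement by noting $r_j\le m$ makes the $\min$ vacuous.
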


\begin{proof}
First note that $\mathbf{M}\odot\mathbf{E}$ and $\mathbf{E}\odot\mathbf{M}$ are row permuted version of each other, so they have the same rank. Therefore, consider $\mathbf{W}^{\prime} = \mathbf{E}\odot\mathbf{M} = \left[\mathbf{I}\odot\mathbf{A}_1 \cdots \mathbf{I}\odot\mathbf{A}_k\right]$. Also, note that $\mathbf{e}_j\otimes\left[\mathbf{A}_1(:,j) \cdots ~\mathbf{A}_k(:,j)\right]$, $j=1,\ldots,n$ is a matrix which consists of zeros except for rows $(j-1)m+1,\ldots,(j-1)m+m$ where it contains the columns $\left[\mathbf{A}_1(:,j) ~\cdots ~\mathbf{A}_k(:,j)\right]$. The rank of these columns is $r_j$ and all other columns in $\mathbf{W}$ are independent of them due to the structure of the Khatri-Rao product. Therefore, each set of such columns adds $r_j$ to the total rank. Since the overall rank of $\mathbf{W}$ cannot exceed either the number of rows or columns, we conclude that $rank(\mathbf{W})=\min(mn, \sum_jr_j)$.
\end{proof}

\begin{lemma}
\label{SubsetIndep}
Let $V = \{\mathbf{v}_1,\ldots,\mathbf{v}_n\}$ be a set of linearly independent vectors. Define $\mathbf{u} = \sum_{i=1}^nc_i\mathbf{v}_i$, where coefficients $c_i \neq 0, i=1,\ldots,n$. Define $U$ to be a strict subset of $V$, i.e., $U \subset V $, then a set of vectors $\mathbf{u} \cup U$ is independent.
\end{lemma}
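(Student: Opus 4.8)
The plan is to prove linear independence directly: assume an arbitrary vanishing linear combination of the vectors in $\{\mathbf{u}\}\cup U$ and show that all coefficients must be zero. First I would invoke the hypothesis that $U$ is a \emph{strict} subset of $V$, since this guarantees the existence of at least one index $j$ with $\mathbf{v}_j\in V\setminus U$. This single excluded vector is exactly what drives the argument.

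Concretely, suppose $\alpha\,\mathbf{u} + \sum_{\mathbf{v}_i\in U}\beta_i\,\mathbf{v}_i = \mathbf{0}$ for scalars $\alpha$ and $\beta_i$. Substituting the definition $\mathbf{u}=\sum_{i=1}^n c_i\mathbf{v}_i$ rewrites the left-hand side as a single linear combination of all the vectors $\mathbf{v}_1,\ldots,\mathbf{v}_n$. Collecting terms, the coefficient of $\mathbf{v}_i$ is $\alpha c_i + \beta_i$ when $\mathbf{v}_i\in U$ and simply $\alpha c_i$ when $\mathbf{v}_i\notin U$. Because $V$ is linearly independent, every one of these coefficients must vanish.

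The key step is then to read off the coefficient of the excluded vector $\mathbf{v}_j$, which equals $\alpha c_j$; hence $\alpha c_j=0$, and since $c_j\neq 0$ by assumption we conclude $\alpha=0$. With $\alpha=0$ the original relation collapses to $\sum_{\mathbf{v}_i\in U}\beta_i\mathbf{v}_i=\mathbf{0}$, a vanishing combination of a subfamily of the independent set $V$, which forces every $\beta_i=0$. Thus the only vanishing combination is trivial and $\{\mathbf{u}\}\cup U$ is linearly independent.

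I do not anticipate a genuine obstacle here; the point to be careful about is that the argument truly requires $U$ to be a strict subset, so that some $\mathbf{v}_j$ is absent and $\alpha$ can be isolated. Were $U=V$, the vector $\mathbf{u}$ would lie in $\mathrm{span}(U)$ and independence would fail. The nonvanishing of every $c_i$ is needed only at the omitted index, but asserting it for all $i$ keeps the conclusion valid regardless of which vector is dropped.
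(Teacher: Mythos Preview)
Your proof is correct and follows essentially the same approach as the paper: both substitute the expansion of $\mathbf{u}$ into a putative linear relation and use the coefficient at an index in $V\setminus U$ (where $c_j\neq 0$) to force the scalar on $\mathbf{u}$ to vanish, after which independence of $V$ finishes the job. The only cosmetic difference is that the paper phrases it as a proof by contradiction while you argue directly, but the underlying mechanism is identical.
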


\begin{proof}
Define $\{1,\ldots,n\} = \alpha \cup \bar{\alpha}$, where $\alpha$ denotes a subset of indices for vectors corresponding to $U$. Then we can write $\mathbf{u} = \sum_{i:i\in\alpha}c_i\mathbf{v}_i + \sum_{j:j\in\bar{\alpha}}c_j\mathbf{v}_j$. 

Assuming the opposite, i.e., $\mathbf{u} \cup U$ are dependent, we can write $k_0\mathbf{u} + \sum_{i:i\in\alpha}k_i\mathbf{v}_i = 0$ where $k_0 \neq 0$ and some of $k_i, i\in\alpha$ are also must be non-zero. Substituting the definition of $\mathbf{u}$ and rearranging the terms, we get:
\begin{align*}
k_0\sum_{i:i\in\alpha}(c_i+k_i)\mathbf{v}_i + k_0\sum_{j:j\in\bar{\alpha}}c_j\mathbf{v}_j  = 0.
\end{align*}
\noindent Since $c_j \neq 0, j\in\bar{\alpha}$, the above equation claims the linear dependence of vectors in $V$, which is a contradiction of our assumption and so $\mathbf{u} \cup U$ are independent.
\end{proof}

We are now ready to analyze Algorithm \ref{alg1}. It can be verified that \eqref{eq:V} is of the form: 
\begin{align}
\label{eq:matrixV}
\mathbf{V} = 
\begin{bmatrix}
\Psi & \vline
\begin{array}{ccc}
\mathbf{I} & &\\
& \ddots &\\
&&\mathbf{I}\\
\hline
\mathbf{0} & \cdots & \mathbf{0}
\end{array}
\end{bmatrix} \in \mathbb{R}^{n_xn_d~\times ~n_xn_d}
\quad\quad\text{where}\quad\Psi = 
\begin{bmatrix}
\text{diag}\left[D(1,:)\right]\mathcal{X}\\
\text{diag}\left[D(2,:)\right]\mathcal{X}\\
\vdots\\
\text{diag}\left[D(n_d,:)\right]\mathcal{X}\\
\end{bmatrix} \in \mathbb{R}^{n_xn_d~\times ~n_x},
\end{align} 
\noindent where $\text{diag}\left[D(i,:)\right]$ is the diagonal matrix with $i$th row from $D$ on the diagonal. Note that we can also write $\Psi = \left(D\odot\mathbf{I}\right)\mathcal{X}$. Observe that the rank of $\mathbf{V}$ is $n_xn_d$ because  the $n_x(n_d-1) \times n_x(n_d-1)$ block diagonal matrix delineated in \eqref{eq:matrixV} and the last $n_x\times n_x$ block matrix $\text{diag}\left[D(n_d,:)\right]\mathcal{X}$ in $\Psi$ together comprising $n_xn_d$ independent columns of $\mathbf{V}$. Note that $\text{diag}\left[D(n_d,:)\right]\mathcal{X}$ has rank $n_x$ because $\mathcal{X}$ is full rank and $D(n_d,:)$ is non-zero, which follows from assumptions $A1$ and $A2$. As a side note observe that the requirement to have $D(n_d,:)$ non-zero implies that there is a non-zero probability of maximum state persistence.  

In analyzing the Algorithm \ref{alg1}, it would be useful to denote the matrices at iteration $i$ in \eqref{eq:expan} and \eqref{eq:prop} as
\begin{align*} \underset{x_{t\hspace{-1pt}+\hspace{-1pt}i},~\ldots~,x_{t\hspace{-1pt}+\hspace{-1pt}1}|x_t, d_t}{\mathbf{T}} &= [\mathbf{A}_1^{(i)} ~\cdots ~\mathbf{A}_{n_d}^{(i)}]\\ \underset{x_{t\hspace{-1pt}+\hspace{-1pt}i},~\ldots~,x_{t\hspace{-1pt}+\hspace{-1pt}1},x_t|x_t, d_t}{\mathbf{T}^{\prime}} &= [\mathbf{B}_1^{(i)} ~\cdots~ \mathbf{B}_{n_d}^{(i)}]\\
\underset{x_{t\hspace{-1pt}+\hspace{-1pt}i\hspace{-1pt}+\hspace{-1pt}1},\ldots,x_{t\hspace{-1pt}+\hspace{-1pt}2},x_{t\hspace{-1pt}+\hspace{-1pt}1}|x_t, d_t}{\mathbf{T}} &= [\mathbf{C}_1^{(i)} ~\cdots ~\mathbf{C}_{n_d}^{(i)}].
\end{align*}

\noindent Moreover, utilizing the structure of matrix $\mathbf{V}$ from \eqref{eq:matrixV}, the operations involved in step \eqref{eq:prop} are as follows: 
\begin{align}
\label{eq:Vtrans}
\Big[\mathbf{C}_1^{(i)} ~~\mathbf{C}_2^{(i)} ~~\mathbf{C}_3^{(i)} ~~\cdots ~\mathbf{C}_{n_d}^{(i)}\Big] = \Big[[\mathbf{B}_1^{(i)} ~\cdots~ \mathbf{B}_{n_d}^{(i)}]\Psi ~~~\mathbf{B}_1^{(i)} ~~\mathbf{B}_2^{(i)} ~~\cdots ~~\mathbf{B}_{n_d-1}^{(i)}\Big].
\end{align}

\noindent With the above information we can now present the proof of Theorem \ref{mainTheoremm}:

\begin{proof2}
At the start of the algorithm, we have $\underset{x_{t+1}|x_t, d_t}{\mathbf{T}} = \left[\mathcal{X}~\mathbf{I}~\cdots~\mathbf{I}\right] = [\mathbf{A}_1^{(1)} \cdots \mathbf{A}_{n_d}^{(1)}]$, which has rank $n_x$. The rank of matrix $\left[\mathbf{A}_1^{(1)}(:,l)\cdots \mathbf{A}_{n_d}^{(1)}(:,l)\right]$ for $l=1,\ldots, n_x$ is $r_l = 2$ since among all the columns only two of them are independent. Therefore, according to Lemma \ref{KhatriRao2}, the result of operations in \eqref{eq:expan}, has rank $\sum_{l}r_l = 2n_x$. Moreover, we note that since $[\mathbf{B}_1^{(1)} ~\mathbf{B}_2^{(1)} ~\cdots ~\mathbf{B}_{n_d}^{(1)}] = [\mathcal{X}\hspace{-1pt}\odot\hspace{-1pt}\mathbf{I} ~~\mathbf{I}\hspace{-1pt}\odot\hspace{-1pt}\mathbf{I}~ \cdots~ \mathbf{I}\hspace{-1pt}\odot\hspace{-1pt}\mathbf{I}]$, it can be seen that its $2n_x$ independent vectors can be formed by the columns $[\mathbf{B}_1^{(1)} ~\mathbf{B}_2^{(1)}]$, so that the rank of $\left[\mathbf{B}_1^{(1)}(:,l) \cdots \mathbf{B}_{n_d}^{(1)}(:,l)\right]$ for $l=1,\ldots, n_x$ is $2$.

Next, since the rank of $\mathbf{V}$ is $n_xn_d$, the operations in \eqref{eq:prop} produce matrix $[\mathbf{C}_1^{(1)}  ~\mathbf{C}_2^{(1)} ~\cdots ~\mathbf{C}_{n_d}^{(1)}]$ with the rank still being $2n_x$. Moreover, the columns of $\mathbf{C}_1^{(1)}$ are linearly dependent on the rest of the columns, $[\mathbf{C}_2^{(1)} ~\cdots ~\mathbf{C}_{n_d}^{(1)}]$, due to \eqref{eq:Vtrans}. However, the rank of $\left[\mathbf{C}_1^{(1)}(:,l) \cdots \mathbf{C}_{n_d}^{(1)}(:,l)\right]$ is now $r_l = 3$ for $l=1,\ldots, n_x$. To understand this, note that 
\begin{align*}
[\mathbf{B}_1^{(1)} ~~\mathbf{B}_2^{(1)}~\cdots ~ \mathbf{B}_{n_d}^{1}] &= [\mathcal{X}\hspace{-2pt}\odot\hspace{-2pt}\mathbf{I} ~~\mathbf{I}\hspace{-2pt}\odot\hspace{-2pt}\mathbf{I}~ \cdots~ \mathbf{I}\hspace{-2pt}\odot\hspace{-2pt}\mathbf{I}]\\
[\mathbf{C}_1^{(1)}~~\mathbf{C}_2^{(1)}~~\mathbf{C}_3^{(1)}~\cdots~ \mathbf{C}_{n_d}^{(1)}] &= [\mathbf{C}_1^{(1)}~ ~\mathcal{X}\hspace{-2pt}\odot\hspace{-2pt}\mathbf{I} ~~\mathbf{I}\hspace{-2pt}\odot\hspace{-2pt}\mathbf{I}~ \cdots~ \mathbf{I}\hspace{-2pt}\odot\hspace{-2pt}\mathbf{I}],
\end{align*}
\noindent where, according to \eqref{eq:Vtrans}, $\mathbf{C}_1^{(1)} = [\mathbf{B}_1^{(1)} \cdots \mathbf{B}_{n_d}^{(1)}]\Psi$. As we established before, the rank of the matrix  $\left[\mathbf{C}_2^{(1)}(:,l) \cdots \mathbf{C}_{n_d}^{(1)}(:,l)\right] = \left[\mathbf{B}_1^{(1)}(:,l) \cdots \mathbf{B}_{n_{d-1}}^{(1)}(:,l)\right]$ is $r_l=2$. Moreover, it can also be checked that $\mathbf{C}_1^{(1)}(:,l)$ is independent of $\left[\mathbf{C}_2^{(1)}(:,l) \cdots \mathbf{C}_{n_d}^{(1)}(:,l)\right]$ due to Lemma \ref{SubsetIndep}. Clearly, then the cumulative rank of $\left[\mathbf{C}_1^{(1)}(:,l) \cdots \mathbf{C}_{n_d}^{(1)}(:,l)\right]$ is $3$ for $l=1,\ldots, n_x$.

To generalize, if at the iteration $i$ the rank of $\left[\mathbf{A}_1^{(i)} \cdots \mathbf{A}_{n_d}^{(i)}\right]$ is $in_x$ while the rank of $\left[\mathbf{A}_1^{(i)}(:,l) \cdots \mathbf{A}_{n_d}^{(i)}(:,l)\right]$ is $(i+1)$, then the operations in step \eqref{eq:expan} produce $\left[\mathbf{B}_1^{(i)} \cdots \mathbf{B}_{n_d}^{(i)}\right]$ having rank $(i+1)n_x$ due to Lemma \ref{KhatriRao2}. The step in \eqref{eq:prop} keeps the rank of $\left[\mathbf{C}_1^{(i)} \cdots \mathbf{C}_{n_d}^{(i)}\right]$ at $(i+1)n_x$ due to the full rank structure of $\mathbf{V}$. At the same time, this step increases the rank of $\left[\mathbf{C}_1^{(i)}(:,l) \cdots \mathbf{C}_{n_d}^{(i)}(:,l)\right]$ to $(i+2)$ due to Lemma \ref{SubsetIndep}, i.e., independence of $\mathbf{C}_1^{(i)}(:,l)$ from $\left[\mathbf{C}_2^{(i)}(:,l) \cdots \mathbf{C}_{n_d}^{(i)}(:,l)\right]$ with the latter having the rank $(i+1)$. Therefore, each iteration increases the rank of matrix $\mathbf{T}$ by $n_x$ and so after $2 \leq \ell  \leq n_d$ steps the rank of the resulting matrix $\underset{\bm{\mathsf{X}}_{R_{t+1}}|x_td_t}{\mathbf{T}}$ is $\ell n_x$. 

Note that if $\ell =1$ then the Algorithm \ref{alg1} is not executed and returns the trivial $\underset{x_{t+1}|x_t, d_t}{\mathbf{T}}$ with rank $n_x$. On the other hand, if $\ell  > n_d$ then the rank of $\underset{\bm{\mathsf{X}}_{R_{t+1}}|x_td_t}{\mathbf{T}}$ is $n_xn_d$ since this is the number of columns in that matrix and so is the maximum achievable rank.
\end{proof2}

\subsection{Analysis of Algorithm \ref{alg2}}
\label{sec:AnalysisAlg2}

In this Section we provide analysis of the Algorithm \ref{alg2} in order to prove Theorem \ref{mainTheorem2}. Similarly as in Section \ref{sec:AnalysisAlg1}, it is instructive to visualize the progress of Algorithm \ref{alg2}. Figure \ref{fig:Talgeff} shows a schematic description of a few steps of the algorithm.

\begin{figure*}[!tp]
\centering
   \includegraphics[width=0.58\textwidth]{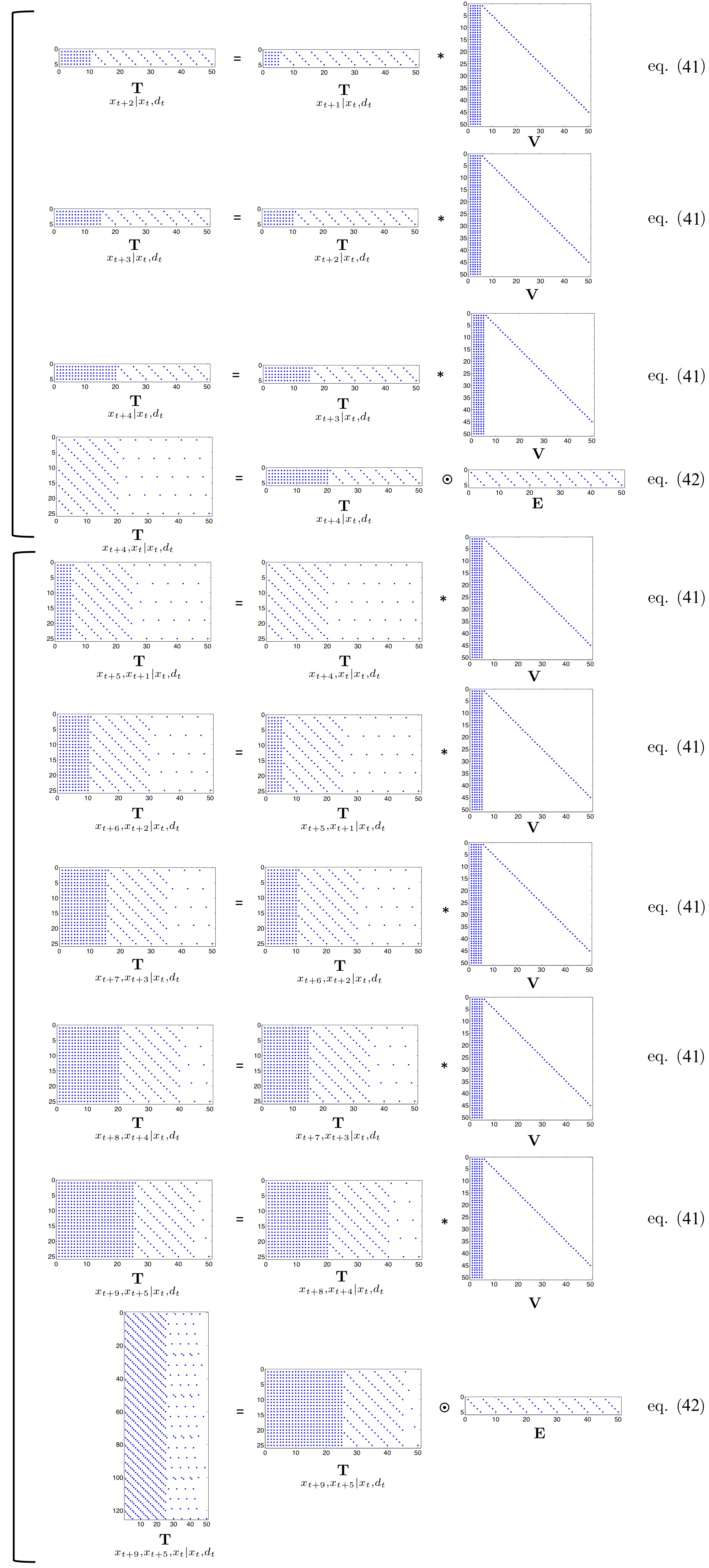}
   \caption{Schematic representation of Algorithm \ref{alg2}. This example illustrates the HSMM with $n_x=5$ and $n_d=10$. The non-zero matrix elements are displayed as dots.}
   \label{fig:Talgeff}
\end{figure*}
\noindent We are now ready to present the proof of Theorem \ref{mainTheorem2}.

\begin{proof4}
For the proof, we refer back to Algorithm \ref{alg1} and the proof of Theorem \ref{mainTheoremm}. Recall, that at iteration $i=1$, the result of step \eqref{eq:expan} is a matrix $[\mathbf{B}_1^{(1)} \cdots \mathbf{B}_{n_d}^{(1)}] \in \mathbb{R}^{n_x^{2} \times n_xn_d}$, whose rank is $2n_x$, since $\left[\mathbf{A}_1^{(1)}(:,l) \cdots \mathbf{A}_{n_d}^{(1)}(:,l)\right] = \left[\mathcal{X}~ \mathbf{I}\cdots\mathbf{I}\right]\in \mathbb{R}^{n_x\times n_xn_d}$ for $l=1,\ldots, n_x$ had two independent columns. Then, the transformations in step \eqref{eq:prop} produced $\left[\mathbf{C}_1^{(1)}(:,l) \cdots \mathbf{C}_{n_d}^{(1)}(:,l)\right]$ for $l=1,\ldots,n_x$ with rank $3n_x$.

Note that if $n_x > 2$ then $\left[\mathbf{A}_1^{(1)}(:,l) \cdots \mathbf{A}_{n_d}^{(1)}(:,l)\right]$ potentially can have a rank up to $n_x$, while in Algorithm \ref{alg1} we only have it equal to $2$. It turns out that if we apply step \eqref{eq:prop} multiple times and use Lemma \ref{SubsetIndep}, we can increase the rank of $\left[\mathbf{C}_1^{(1)}(:,l) \cdots \mathbf{C}_{n_d}^{(1)}(:,l)\right]$ for $l=1,\ldots,n_x$ to $n_x$. 

Specifically, consider the step \eqref{eq:prop2}. Then at iteration $i=1$ we have $[\mathbf{A}_1^{(1)} \cdots \mathbf{A}_{n_d}^{(1)}] = [\mathbf{B}_1^{(1)} \cdots \mathbf{B}_{n_d}^{(1)}]$ and for $l=1,\ldots,n_x$ the two independent columns are $\left[\mathbf{B}_1^{(1)}(:,l)~~\mathbf{B}_2^{(1)}(:,l)\right] = \left[\mathcal{X}(:,l) ~~\mathbf{I}(:,l)\right]$. The result of step  $\eqref{eq:prop2}$ gives us then three independent columns
\begin{align*}
\left[\mathbf{C}_1^{(1)}(:,l)~~\mathbf{C}_2^{(1)}(:,l) ~~\mathbf{C}_3^{(1)}(:,l)\right] = \left[\mathbf{C}_1^{(1)}(:,l) ~~\mathcal{X}(:,l) ~~\mathbf{I}(:,l)\right],
\end{align*}
\noindent where $\mathbf{C}_1^{(1)} = [\mathcal{X} ~\mathbf{I} ~\cdots ~\mathbf{I}]\Psi$. The independence follows from Lemma \ref{SubsetIndep}. The repeated application of step  $\eqref{eq:prop2}$ one more time gives four independent columns
\begin{align*}
\left[\mathbf{C}_1^{(2)}(:,l)~~\mathbf{C}_2^{(2)}(:,l) ~~\mathbf{C}_3^{(2)}(:,l)~~\mathbf{C}_4^{(2)}(:,l)\right] = \left[\mathbf{C}_1^{(2)}(:,l) ~~\mathbf{C}_1^{(1)}(:,l) ~~\mathcal{X}(:,l) ~~\mathbf{I}(:,l)\right],
\end{align*}
\noindent where $\mathbf{C}_1^{(2)} = [\mathbf{C}_1^{(1)} \cdots \mathbf{C}_{n_d}^{(1)}]\Psi$. Observe that since the number of rows is $n_x$, we can increase the rank at most up to $n_x$. Therefore, if in the beginning we had $two$ independent columns and we want to get $n_x$ independent columns, we would need to apply the step \eqref{eq:prop2} $n_x-2$ times, so as to have the matrix $[\mathbf{C}_1^{(n_x-2)}(:,l) ~\cdots ~\mathbf{C}_{n_d}^{(n_x-2)}(:,l)]$ with rank $n_x$.

If we now apply step \eqref{eq:expan2} it will give us $[\mathbf{A}_1^{(1)} ~\cdots ~\mathbf{A}_{n_d}^{(1)}] \in \mathbb{R}^{n_x^{2}\times n_xn_d}$ with rank $n_x^2$ due to Lemma \ref{KhatriRao2}. Continuing in this manner, we can again repeatedly apply the step \eqref{eq:prop2} to create a matrix with a rank at most $n_x^2$, since there are $n_x^2$ rows and assuming that $n_xn_d \geq n_x^2$. The number of times we need to apply \eqref{eq:prop2} is now $n_x^2 - n_x$ since we need to go from $n_x$ to $n_x^2$ independent columns.

In general, the step \eqref{eq:prop2} needs to be applied $n_x^c - n_x^{c-1}$, in order to obtain $n_x^c$ independent columns. The application of step \eqref{eq:expan2} then creates $\mathbf{T}$ with rank $n_x^{c+1}$. Note, that since $\mathbf{T}$ has $n_xn_d$ columns, the maximum achievable rank is $n_xn_d$.
\end{proof4}

Observe that the above proof also provided the method for selecting the non-sequential observations $\bm{\mathsf{X}}_{R_{t+1}}$. Specifically, since the set of observations $\bm{\mathsf{X}}_{R_{t+1}} = \{o_{t+2}, \ldots\}$ must start from observation $o_{t+2}$ and $|\bm{\mathsf{X}}_{R_{t+1}}| = \ell$, we denote $s = t+2$. Then, $i$th added observation is $o_{s+(n_d-1)-(n_x^i -1)}$ for $i=0,\ldots,\ell-2$ and the $\ell$th observation is $o_{s} = o_{t+2}$. For tensor  $\underset{\bm{\mathsf{O}}_{R_{t+1}}|x_td_t}{\bm{\mathscr{F}}}$ to achieve rank $n_xn_d$ we need to add $\ell = \lceil 1 + \frac{\log n_d}{\log n_x}\rceil$ observations.

\section{Initial and Final Parts of HSMM}
\label{sec:appendix}
In this Section we present the derivations for the initial and final steps of HSMM, which were omitted from the main text. Specifically, this amounts to computing the factor ${\bm{\mathscr{X}}}$ for two parts of the model, corresponding to $\mathbb{X}_{root}$ and $\mathbb{X}_{T}$ in Figures \ref{fig:hsmmStart} and \ref{fig:hsmmEnd}. The derivations for all other parts of HSMM were presented in the main text and this supplement. 

\begin{figure}[!ht]
\centering
   \includegraphics[width=0.7\textwidth]{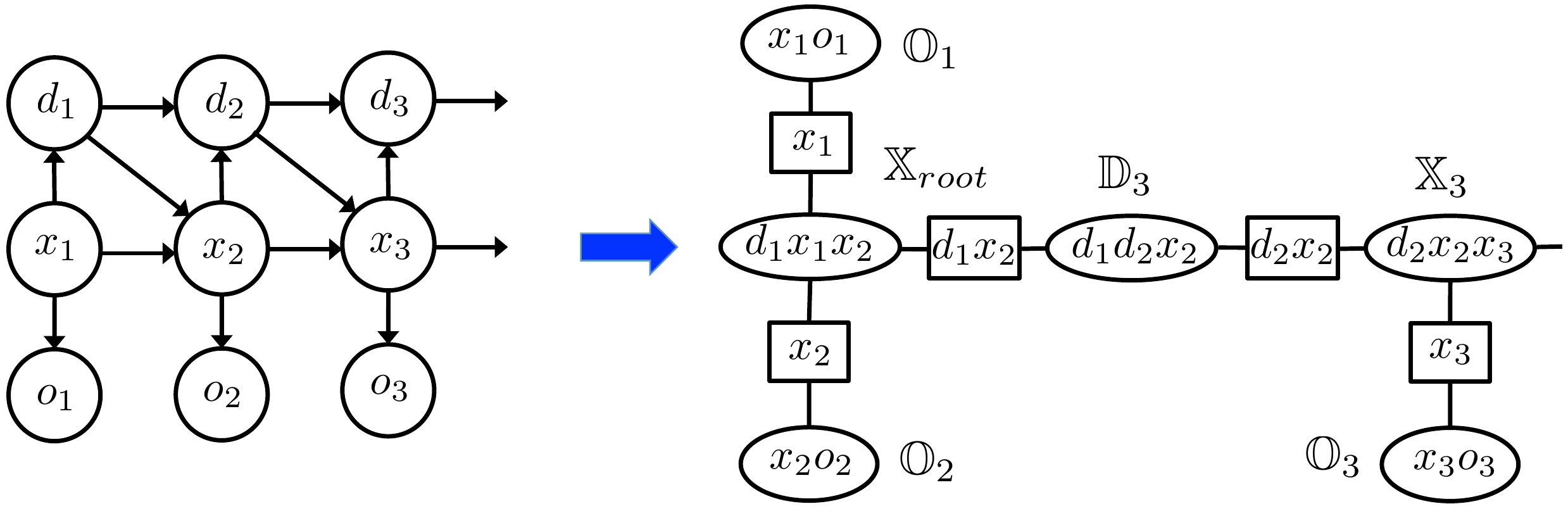}
   \caption{Part of HSMM corresponding to the initial time stamps and the related part of junction tree.}
   \label{fig:hsmmStart}
\end{figure}

\begin{figure}[!ht]
\centering
   \includegraphics[width=0.9\textwidth]{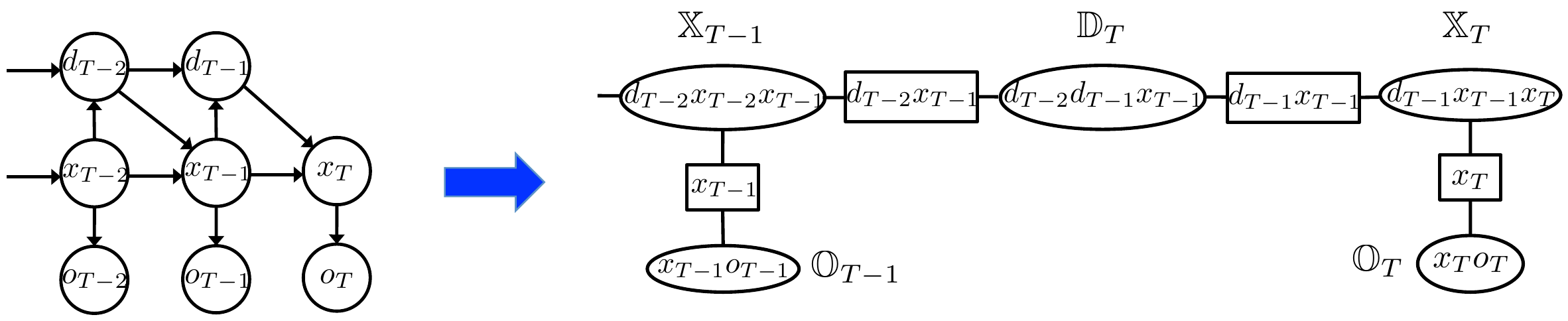}
   \caption{Part of HSMM corresponding to the final time stamps and the related part of junction tree.}
   \label{fig:hsmmEnd}
\end{figure}

\noindent To begin, recall the expression for the joint likelihood of the observed sequence:
\begin{align*}
\underset{o_1,\ldots,o_T}{\bm{\mathscr{P}}} = 
\prod_{t} \underset{d_{t-1}|x_{t-1}d_{t-2}}{\bm{\mathscr{D}}}
\times_{x_{t-1}d_{t-1}}\left(\underset{x_{t}|x_{t-1}d_{t-1}}{\bm{\mathscr{X}}}\times_{x_t}\underset{o_t|x_t}{\bm{\mathscr{O}}}\right)
\end{align*}
\noindent and rewrite the above expression by keeping only the initial and final factors:
\begin{align}
\underset{o_1,\ldots,o_T}{\bm{\mathscr{P}}} =& 
\left(\underset{o_{1}|{x_{1}}}{\bm{\mathscr{O}}}\times_{x_1}\left(\underset{x_{2}x_2|x_{1}d_{1}}{\bm{\mathscr{X}}}\times_{x_2}\underset{o_{2}|{x_{2}}}{\bm{\mathscr{O}}}\right)\right)\times_{x_2d_1}\underset{{d_2|x_2x_2d_{1}}}{\bm{\mathscr{D}}}\times \cdots\nonumber\\ &\cdots\times\underset{d_{T-1}|x_{T-1}x_{T-1}d_{T-2}}{\bm{\mathscr{D}}}\times_{x_{T-1}d_{T-1}}\left(\underset{x_{T}|x_{T-1}d_{T-1}}{\bm{\mathscr{X}}}\times_{x_T}\underset{o_{T}|{x_{T}}}{\bm{\mathscr{O}}}\right).
\label{eq:PstartEnd}
\end{align}

\noindent Introduce the identity tensors into \eqref{eq:PstartEnd}, regroup the terms and extract the factors ${\bm{\mathscr{X}}}$:
\begin{align}
\label{eq:X1}
\underset{\omega_{x_1}\omega_{x_2}\omega_{x_2d_1}}{\tilde{\bm{\mathscr{X}}}} &=
\underset{\omega_{x_1}|x_{1}}{\bm{\mathscr{F}}}\times_{x_1}
\left(\underset{x_2x_{2}|x_{1}d_{1}}{\bm{\mathscr{X}}}\times_{x_2}\underset{\omega_{x_2}|x_{2}}{\bm{\mathscr{F}}}\right)
\times_{x_2d_1}\underset{\omega_{x_2d_{1}}|x_2d_{1}}{\bm{\mathscr{F}}}\\
\label{eq:X2}
\underset{\omega_{x_{T-1}d_{T-1}}\omega_{x_{T}}}{\tilde{\bm{\mathscr{X}}}} &=
\underset{\omega_{x_{T-1}d_{T-1}}|x_{T-1}d_{T-1}}{\bm{\mathscr{F}}^{-1}}\times_{x_{T-1}d_{T-1}}
\left(\underset{x_{T}|x_{T-1}d_{T-1}}{\bm{\mathscr{X}}}\times_{x_T}\underset{\omega_{x_T}|x_{T}}{\bm{\mathscr{F}}}\right).
\end{align} 

\noindent Defining the observable sets $\omega_{x_1} = o_1$, $\omega_{x_2} = o_2$ and $\omega_{x_2d_1} ={\bm{\mathsf{O}}}_{R_3}$ we can rewrite \eqref{eq:X1} as follows:
\begin{align}
\label{eq:X11}
\underset{o_{1}o_{2}{\bm{\mathsf{O}}}_{R_3}}{\tilde{\bm{\mathscr{X}}}} =
\underset{o_{1}|x_{1}}{\bm{\mathscr{F}}}\times_{x_1}
\left(\underset{x_{2}x_2|x_{1}d_{1}}{\bm{\mathscr{X}}}\times_{x_2}\underset{o_{2}|x_{2}}{\bm{\mathscr{F}}}\right)
\times_{x_2d_1}\underset{{\bm{\mathsf{O}}}_{R_3}|x_2d_{1}}{\bm{\mathscr{F}}}.
\end{align} 

Note that since all the factors participating in \eqref{eq:X11} are valid probability distributions, the resulting factor, i.e., $\underset{o_{1}o_{2}{\bm{\mathsf{O}}}_{R_3}}{\tilde{\bm{\mathscr{X}}}}$ is also a valid probability distribution, so it can be estimated directly from data. This is in contrast to the derivations we made for other parts of the model, where we had to perform additional transformations such as, for example in \eqref{eq:DtensorObs}, in order to bring to the form, which could be estimated from the data samples.

In order to estimate \eqref{eq:X2}, we compare it to the similar factor we considered in the main paper:
\begin{align}
\label{eq:X22}
\underset{\omega_{x_{t\hspace{-1pt}-\hspace{-1pt}1}d_{t\hspace{-1pt}-\hspace{-1pt}1}}\omega_{x_t}\omega_{x_{t}d_{t\hspace{-1pt}-\hspace{-1pt}1}}}{\tilde{\bm{\mathscr{X}}}} =\hspace{-5pt}
\underset{\omega_{x_{t\hspace{-1pt}-\hspace{-1pt}1}d_{t\hspace{-1pt}-\hspace{-1pt}1}}|x_{t\hspace{-1pt}-\hspace{-1pt}1}d_{t\hspace{-1pt}-\hspace{-1pt}1}}{\bm{\mathscr{F}}^{-1}}\hspace{-5pt}\times_{x_{t\hspace{-1pt}-\hspace{-1pt}1}d_{t\hspace{-1pt}-\hspace{-1pt}1}}
\hspace{-2pt}\left(\underset{x_{t}x_t|x_{t\hspace{-1pt}-\hspace{-1pt}1}x_{t\hspace{-1pt}-\hspace{-1pt}1}d_{t\hspace{-1pt}-\hspace{-1pt}1}}{\bm{\mathscr{X}}}\hspace{-2pt}\times_{x_t}\hspace{-2pt}\underset{\omega_{x_t}|x_{t}}{\bm{\mathscr{F}}}\right)
\hspace{-2pt}\times_{x_td_{t-1}}\hspace{-3pt}\underset{\omega_{x_td_{t\hspace{-1pt}-\hspace{-1pt}1}}|x_td_{t-1}}{\bm{\mathscr{F}}},
\end{align} 
and observe that the last factor $\underset{\omega_{x_td_{t\hspace{-1pt}-\hspace{-1pt}1}}|x_td_{t-1}}{\bm{\mathscr{F}}}$ in \eqref{eq:X22} is a conditional probability distribution, which has the following marginalization property
\begin{align}
\label{eq:marg}
\underset{\omega_{x_td_{t\hspace{-1pt}-\hspace{-1pt}1}}|x_td_{t-1}}{\bm{\mathscr{F}}} \times_{\omega_{x_td_{t\hspace{-1pt}-\hspace{-1pt}1}}}~~\underset{\omega_{x_td_{t\hspace{-1pt}-\hspace{-1pt}1}}}{{\mathbf{1}}} = \underset{x_td_{t-1}}{{\mathbf{1}}},
\end{align}
\noindent where $\mathbf{1}$ is the tensor, which has all elements equal to $1$. The above can also be written in the scalar notations, $\sum_{\omega_{x_td_{t\hspace{-1pt}-\hspace{-1pt}1}}}p(\omega_{x_td_{t\hspace{-1pt}-\hspace{-1pt}1}}|x_td_{t-1}) = 1$ for each value of $x_td_{t-1}$. Therefore, if we apply \eqref{eq:marg} to \eqref{eq:X22}, we get $\underset{\omega_{x_{t\hspace{-1pt}-\hspace{-1pt}1}d_{t\hspace{-1pt}-\hspace{-1pt}1}}\omega_{x_t}}{\tilde{\bm{\mathscr{X}}}}$, which is the time-shifted version of $\underset{\omega_{x_{T-1}d_{T-1}}\omega_{x_{T}}}{\tilde{\bm{\mathscr{X}}}}$. Therefore, to compute \eqref{eq:X2}, we estimate the tensor in \eqref{eq:tensorXobs}, i.e., 
\begin{align*}
\underset{\bm{\mathsf{O}}_{R_{t}}o_t\bm{\mathsf{O}}_{R_{t}}}{\tilde{\bm{\mathscr{X}}}} = \underset{\bm{\mathsf{O}}_{L_{t}}\bm{\mathsf{O}}_{R_{t}}}{\bm{\mathscr{M}}^{-1}} \times_{\bm{\mathsf{O}}_{L_{t}}}\underset{\bm{\mathsf{O}}_{L_{t}}\bm{\mathsf{O}}_{R_{t}}o_t}{\bm{\mathscr{M}}},
\end{align*}
and marginalize out the right set of modes, corresponding to $\bm{\mathsf{O}}_{R_{t}}$. Alternatively, we can use the batch estimate
\begin{align*}
\tilde{\bm{\mathscr{X}}} = \left(\sum_{t}\underset{\bm{\mathsf{O}}_{L_{t}}\bm{\mathsf{O}}_{R_{t}}}{\bm{\mathscr{M}}}\right)^{-1}\times_{\bm{\mathsf{O}}_{L}}\left(\sum_{t}\underset{\bm{\mathsf{O}}_{L_{t}}\bm{\mathsf{O}}_{R_{t}}o_t}{\bm{\mathscr{M}}}\right),
\end{align*}
\noindent and similarly perform the marginalization. This concludes our derivations.

\vspace*{3mm}

{\bf Acknowledgements:} This work was supported by NASA grant NNX12AQ39A, and NSF Grants IIS-0953274, IIS-1029711, IIS-0916750. We thank Nikunj Oza and Bryan Matthews at NASA for their helpful comments and suggestions, and computing support from the Minnesota Supercomputing Institute (MSI). A.~B.~acknowledges support from IBM and Yahoo.

\bibliographystyle{abbrv}
\bibliography{main}

\end{document}